\newcommand{\ch}[1]{{{\color{magenta} [ch:] {#1}}}}
\newtheorem{assumption}{Assumption}
\newtheorem{theorem}{Theorem}
\renewcommand{\leq}{\leqslant}
\renewcommand{\geq}{\geqslant}
\renewcommand{\phi}{\varphi}
\renewcommand{\epsilon}{\varepsilon}
\renewcommand{\enspace}{\,}
\newcommand{\R}{\mathbb{R}}
\newcommand{\cO}{\mathcal{O}}
\newcommand{\E}{\mathbb{E}}
\newcommand{\eps}{\varepsilon}
\newcommand{\norm}[1]{\left\|#1\right\|}
\newcommand{\scalar}[2]{\left\langle#1,\,#2\right\rangle}
\newcommand{\bbR}{\mathbb{R}}
\newcommand{\parent}[1]{\left(#1\right)}
\DeclareMathOperator{\Proj}{Proj}
\newcommand{\e}{\mathrm{e}}
\newcommand{\FreeAdaGrad}{\textsc{Free AdaGrad}\xspace}
\newcommand{\AdaGrad}{\textsc{AdaGrad}\xspace}
\newcommand{\Oracle}{\textsc{Oracle}\xspace}
\newcommand{\Yes}{\ding{51}}%
\newcommand{\No}{\ding{55}}%
\newcommand{\Maybe}{?}%
\newcommand{\evg}[2][]{\todo[color=yellow!20,#1]{{\bf Evg:} #2}}
\newcommand{\pa}[1]{\left(#1\right)}
\newcommand{\cro}[1]{\left[#1\right]}
\newcommand{\ac}[1]{\left\{#1\right\}}
\DeclareMathOperator*{\argmin}{arg\,min}
\newtheorem{lemma}{Lemma}
\newtheorem{corollary}{Corollary}
\newtcolorbox{nbox}[1][]{
  enhanced,
  fonttitle=\scshape,
  #1
}
\title{Parameter-free projected gradient descent}
\author{Evgenii Chzhen \qquad Christophe Giraud \qquad Gilles Stoltz\\
Universit{\'e} Paris-Saclay, CNRS, Laboratoire de mathématiques d'Orsay, 91405, Orsay, France \\
\texttt{\{evgenii.chzhen,\,\,christophe.giraud,\,\,gilles.stoltz\}@universite-paris-saclay.fr}
}
\begin{document}

\maketitle

\begin{abstract}
  We consider the problem of minimizing a convex function over a closed convex  set, with Projected Gradient Descent (PGD).
  We propose a fully parameter-free version of AdaGrad, which is adaptive to the distance between the initialization and the optimum, and to the sum of the square norm of the subgradients.  Our algorithm is able to handle projection steps, does not involve restarts, reweighing along the trajectory or additional gradient evaluations compared to the classical PGD. It also fulfills optimal rates of convergence for cumulative regret up to logarithmic factors.
  We provide an extension of our approach to stochastic optimization and conduct numerical experiments supporting the developed theory.
\end{abstract}

\section{Introduction}
In this work we study the problem of minimizing a convex  function $f$ over a closed, possibly unbounded, convex set $\Theta \subseteq \bbR^d$. Our main goal is to provide a variant of AdaGrad~\cite{streeter2010less,duchi2011adaptive} which is adaptive to the distance $\|x_1 - x_*\|$ between the initialization $x_{1} \in \Theta$ and a minimizer $x_{*} \in \Theta$, which is assumed to exist.
More precisely, we provide a Projected Gradient Descent (PGD) algorithm of the form
\begin{align*}
  x_{t+1} = \Proj_{\Theta}\pa{x_t - \eta_t g_t}\quad\text{with}\quad \eta_t = \frac{2^{k_t}}{H\big(\sum_{s\leq t}\|g_{s}\|^2\big)}\enspace,
\end{align*}
where $g_t \in \partial f(x_t)$ is a sub-gradient of $f$ at $x_t$, $\Proj_{\Theta}(\cdot)$ is the Euclidean projection operator onto closed convex $\Theta$, $H(x) = \sqrt{(x+1)\log(\e(1 + x))}$ and $k_t$ is an automatically tuned sequence by Algorithm~\ref{algo:PGDloglogMath}.
% the step-size $\eta_t > 0$ is automatically tuned by the algorithm.
Unlike recent works on the subject~\cite{defazaio2023learning,carmon2023making}, we provide bounds on the cumulative regret of the form
\begin{align*}
    R_T : = \sum_{t = 1}^T \big(f(x_t) - f(x_*)\big)\enspace,
\end{align*}
where $x_*$ is any minimizer of $f$ over $\Theta$. Using standard online-to-batch conversion, we also have by convexity $f(\bar{x}_T) - f(x_*) \leq R_T / T$, for $\bar{x}_T$ being the average of $x_1, \ldots, x_T$.

In the classical case where $f$ is assume to be $L$-Lipschitz, it is well known that setting $\eta_t = \tfrac{\|x_1 - x_*\|}{L \sqrt{T}}$ gives the optimal rate of convergence~\cite{nesterov2018lectures}:
\begin{align*}
    R_T \leq \|x_1 - x_*\|L \sqrt{T}\enspace.
\end{align*}
However, such a choice requires $f$ to be Lipschitz, and the knowledge of three quantities: \emph{1)} distance to the optimum $\|x_1 - x_*\|$; \emph{2)} Lipschitz constant $L$; \emph{3)} optimization horizon $T$.
Should the distance $\|x_1 - x_*\|$ be known, one could set $\eta_t = \tfrac{\|x_1 - x_*\|}{ \sqrt{\sum_{s = 1}^t\|g_s\|^2}}$, resulting in \textsc{AdaGrad} algorithm~\cite{streeter2010less,duchi2011adaptive}. For this choice of $\eta_{t}$, without Lipschitz assumption, we have the upper bound on the regret
\begin{align}\label{eq:AdaGrad:regret}
    R_T \leq c\|x_1 - x_*\| \sqrt{\sum_{t = 1}^T \|g_t\|^2}\enspace.
\end{align}
In practice the distance $\|x_1 - x_*\|$ is unknown. When an upper bound $D_{*}$ on $\|x_1 - x_*\|$ is available, typically the diameter of $\Theta$ when $\Theta$ is bounded, $\|x_1 - x_*\|$ can be replaced by $D_{*}$ in $\eta_{t}$. The \textsc{AdaGrad} algorithm then fulfills (\ref{eq:AdaGrad:regret}) with $\|x_1 - x_*\|$ replaced by $D_{*}$. This bound can be very sub-optimal yet, when  $\|x_1 - x_*\|$ is much smaller than $D_{*}$. Worse,  when $\Theta$ is unbounded, no bound $D_{*}$ on $\|x_1 - x_*\|$ is available, without additional information.

Our objective is to provide a variant of the \textsc{AdaGrad} step-size tuning, not requiring $f$ to be Lipschitz, nor any knowledge on  $\|x_1 - x_*\|$ or $T$, while still fulfilling the regret bound (\ref{eq:AdaGrad:regret}) up to a log factor.
Our contribution can be placed alongside the ever expanding literature of parameter-free optimization algorithm~\cite{defazaio2023learning,
carmon2023making,
pmlr-v99-cutkosky19a,
mcmahan2012no,
mcmahan2014unconstrained,
pmlr-v125-mhammedi20a,
orabona2021parameter,
orabona2017training,
zhang2022pde,
jacobsen2022parameter,
orabona2016coin}, discussed below Theorem~\ref{thm:main_intro}.\\
%However, the combination of all the above mentioned elements is the key difference of our approach compared to the previous contributions.
\textbf{Main contributions.} Let us describe our three main contributions
\begin{enumerate}[noitemsep,topsep=0ex,leftmargin=.5cm]
\item we propose a simple tuning of PGD, we call \FreeAdaGrad, with no line-search, no cold-restart, no gradient transformation, and no computations of extra gradients;
\item we handle any finite convex function $f$ (no Lipschitz condition), over any possibly unbounded constraint set $\Theta$;
% for example $f(x)=\sum_{i=1}^n \exp(\norm{x-a_{i}}/\sigma_{i})$, with $a_{i}\in\mathbb{R}^d$, $\sigma_{i}>0$, and $\Theta=[0,+\infty)^d$
\item we provide regret bounds like {$R_T =\widetilde{\cO}\big((\|x_1 - x_*\|+1) \sqrt{1+\sum_{t \leq T} \|g_t\|^2}\big)$}, where $\tilde \cO$ hides log-factor,  but no additional terms.
\end{enumerate}
We also partially extend our results to the Stochastic Gradient Descent setting.

% Please add the following required packages to your document preamble:
% \usepackage{booktabs}
\begin{comment}
\begin{table}[]
\centering
\resizebox{\textwidth}{!}{
\begin{tabular}{@{}lllllll@{}}
\toprule
Paper            & Unknown $L$ & Unknown $D$ & Norm adaptive & Projection & Control of $R_T$ & PGD \\ \midrule
\cite{mcmahan2012no}& \No& \Yes& \No& \No& \Yes   &\Yes           \\
\cite[Theorem 2]{defazaio2023learning} & \No& \Yes& \Yes& \No& \No& \Maybe          \\
\cite[Theorem 1]{carmon2023making} & \Yes& \Yes& \Maybe& \Yes& \No& \Yes      \\
\cite[Theorem 12]{mcmahan2014unconstrained}&\No&\Yes&\No&\No&\Yes&\No\\
\cite{cutkosky2018black}&\No&\Yes&\Yes&\Maybe&\Yes&\No\\
\cite{zhang2022pde}&\No&\Yes&\No&\Maybe&\Yes&\No\\
\cite{duchi2011adaptive,streeter2010less}&\Yes&\No&\Yes&\Yes&\Yes&\Yes\\
This paper& \Yes& \Yes& \Yes& \Yes& \Yes& \Yes\\ \bottomrule
\end{tabular}
}
\end{table}
\end{comment}
% \paragraph{Literature.}
% We contribute to the ever expanding literature of parameter-free optimization algorithm~\cite{defazaio2023learning,
% carmon2023making,
% mcmahan2012no,
% mcmahan2014unconstrained,
% orabona2021parameter,
% orabona2017training,
% zhang2022pde,
% jacobsen2022parameter,
% orabona2016coin}.
% \paragraph{Contributions.} Our contributions are two-fold \emph{i)} we provide a fully adaptive version of PGD, recovering nearly optimal rates of convergence \emph{ii)} we partially extend our strategy to the stochastic optimization.
\textbf{Notation.}
For any $a, b \in \bbR$ we denote by $a \vee b$ (\emph{resp.} $a \wedge b$) the largest (\emph{resp.} the smallest) of the two. We denote by $\|\cdot\|$ the Euclidean norm and by $\scalar{\cdot}{\cdot}$ the standard inner product in $\bbR^d$. For $\Theta \subset \bbR^d$, we denote by $\Proj_{\Theta}(\cdot)$ the Euclidean projection operator onto $\Theta$. We denote by $\log_2(\cdot)$ and $\log(\cdot)$ the base $2$ and the natural logarithms respectively. The base of the natural logarithm is denoted by $\e$.

\section{Main result}
\label{sec:main_result}

\begin{algorithm}[t!]
\DontPrintSemicolon
\caption{\FreeAdaGrad}\label{algo:PGDloglogMath}
\SetKwInput{Input}{Input}
   \SetKwInOut{Output}{Output}
   \SetKwInput{Initialization}{Initialization}
   \Input{$x_1 \in \bbR^d, \Theta \subset \bbR^d, \gamma_0 > 0$}
   \Initialization{$\Gamma^2_1 = 0, k_0 = 1, S_0 = 0,
   % h(x) = \sqrt{(x+1)\log(\e(1 + x))},
   \gamma_k = \gamma_0 2^k \text{ for $k \geq 1$}$}

\For{$t \geq 1$}{

    $g_t \in \partial f(x_t)$  \tcp*[r]{{\small get subgradient}}

   $S_t = S_{t - 1} + \|g_t\|^2$ \tcp*[r]{{\small cumulative grad-norm}}

  {$h_t = \sqrt{(S_t+1)\log(\e(1 + S_t))}$} \label{alg:line_h}\tcp*[r]{{\small update $h_t \geq h_{t-1}$}}

   % {$h_t = \sqrt{(\sum_{s\leq t}\|g_{s}\|^2+1)\log(\e(1 + \sum_{s\leq t}\|g_{s}\|^2))}$} \label{alg:line_h}\tcp*[r]{{\small update $h_t \geq h_{t-1}$}}

   %  $\displaystyle{B_{t + 1}(k) = \frac{2\gamma_{k}}{\sqrt{k}} + \sqrt{ (\gamma_{k} {\|g_{t}\|/ h_{t}})^2+\sum_{s\leq t-1}\pa{\gamma_{k_{s}}\|g_{s}\|/ h_{s}}^2}}$\label{alg:B} \tcp*[r]{\small define the threshold}

    $\displaystyle{B_{t + 1}(k) = \frac{2\gamma_{k}}{\sqrt{k}} + \sqrt{\Gamma^2_{t}+ \gamma_{k}^2 {\|g_{t}\|^2/ h^2_{t}}}}$\label{alg:B} \tcp*[r]{\small define the threshold}

    $x^+_{t}(k) = \Proj_{\Theta}\pa{x_t - \frac{\gamma_k}{h_t}g_t}$ \tcp*[r]{\small probing step}

    $k_t = \min\left\{k \geq k_{t-1} \,:\, \|x^+_{t}(k) - x_1\| \leq B_{t + 1}(k)\right\}$ \label{alg:line_bound}\tcp*[r]{\small find the step size}

    $x_{t + 1} = x^+(k_t)$ \tcp*[r]{\small make the step}

   $\Gamma_{t+1}^2 = \Gamma_t^2 + \gamma_{k_t}^2\frac{\|g_t\|^2}{h^2_t}$ \tcp*[r]{\small update $\Gamma^2_t$}

    }

\Output{Trajectory $(x_t)_{t \geq 1}$}

\end{algorithm}

% Before providing the formal description and the intuition behind our algorithm~\ref{algo:PGDloglogMath}, that we call \textsc{Free AdaGrad}, let us state the main result of this work.
We make the following assumption, which is necessary for the meaningful treatment of the problem.

\begin{assumption}
\label{ass:main}
    The set $\Theta \subseteq \bbR^d$ is closed convex, $\mathcal{D} \subseteq \bbR^d$ is open such that $\Theta \subseteq \mathcal{D}$.
    The function $f: \mathcal{D} \to \bbR$ is convex on $\Theta$ and there exists a bounded minimizer $x_* \in \argmin_{x \in \Theta} f(x)$.
    % \ch{il faut l'existence de sous-gradient sur tout $\Theta$. Pour cette raison, je préfère dire $f:\mathcal{D} \to \bbR$ convexe où $\mathcal{D}$ ouvert de $\bbR^d$ et $\Theta$ inclus dans interieur de $\mathcal{D}$.}
\end{assumption}

Let us highlight that we do not assume that the subgradients $g_t$ are uniformly bounded, that is, we do not require $f$ to be globally Lipschitz.
This stays in contrast with the literature on online convex optimization (OCO).
Indeed, OCO lower bounds imply that without any prior knowledge a regret $\widetilde{\cO}(\|x_1 - x_*\|({\sum_{t \leq T}\|g_t\|^2})^{\sfrac{1}{2}})$ is not achievable~\cite{cutkosky2017online,pmlr-v99-cutkosky19a, pmlr-v125-mhammedi20a} and higher order terms either in $T$ or in $\|x_1 - x_*\|$ are necessary.
% \evg{Changed here}
% This stays in contrast with the literature on online convex optimization, where, without any prior knowledge, the regret can be exponential in $\sqrt{T}$~\cite{cutkosky2017online} if one insists on having linear in $\|x_1 - x_*\|$ regret or at least $\|x_1 - x_*\|\sqrt{T} + \|x_1 - x_*\|^3$~\cite{pmlr-v99-cutkosky19a, pmlr-v125-mhammedi20a} if one insists on $\cO(\sqrt{T})$ regret.
For example, we are able to handle $\Theta = [0, +\infty)^d$ and $f(x) = \sum_{i = 1}^n \exp(\|x - a_i\| / \sigma_i)$ for some $a_i \in \bbR^d$ and $\sigma_i > 0$.
% \chg{est-ce utile de le rappeler? ou sinon enlever plus haut? on gagne 1 ligne...}

% Our algorithm takes the form
% \begin{align*}
% \fbox{
%     $x_{t+1} = \Proj_{\Theta}\Bigg(x_t - \frac{2^{k_t}}{H\big(\sum_{s\leq t}\|g_{s}\|^2\big)}g_t\Bigg)\quad\text{where}\  H(x) = \sqrt{(x+1)\log(\e(1 + x))}$\enspace,}
% \end{align*}
% $g_t \in \partial f(x_t)$ and $k_t$ is defined in Line~\ref{alg:line_bound} of Algorithm~\ref{algo:PGDloglogMath}.
The proposed method, that we call \FreeAdaGrad, is summarized in Algorithm~\ref{algo:PGDloglogMath}.
It consists in simple projected gradient steps at every round $t \geq 1$, but with additional cheap condition on Line~\ref{alg:line_bound} that is checked on every iteration. If the condition on this line is satisfied for $k=k_{t-1}$, then the algorithm makes (almost) the usual \AdaGrad step, otherwise, the step size is doubled and the condition is checked again.
We underline that the sequence of integers $\ac{k_{t}:t\geq 1}$ in Algorithm~\ref{algo:PGDloglogMath} is non-decreasing, and we prove in Eq.~\eqref{eq:bound_on_k_T} that it is upper-bounded by
$2 (\log_2(1+\|x_1 - x_*\| / \gamma_0) + 1)$.
% In particular, to find $k_t$, we first try $k_{t-1}$
Thus, there are only a finite number of doublings in the sequence. The only input parameter of the algorithm is $\gamma_0$ that can be seen as an initial lower-bound guess for $\|x_1 - x_*\|$ and can be taken arbitrary small only incurring additional $\sqrt{\log(1 / \gamma_0)}$ factor. Note that once the step-size is doubled at some time $t \geq 1$, Algorithm~\ref{algo:PGDloglogMath} continues the optimization from $x_t$ without cold-restart.
%, as it is done by~\cite{mcmahan2012no}. \chg{qui demande $L$}

\begin{theorem}
  \label{thm:main_intro}
  Let Assumption~\ref{ass:main} be satisfied. Let $S_T = \sum_{t = 1}^T\|g_t\|^2$.
  For any $\gamma_0 > 0$, let $D_{\gamma_0} := \|x_1 - x_*\| \vee \gamma_0$\,, Algorithm~\ref{algo:PGDloglogMath} satisfies for some universal $c > 0$
  \begin{align*}
  {
    R_{T} \leq cD_{\gamma_0}\sqrt{(S_{T+1}+1)\log(1 + S_{T+1})\log(1 + D_{\gamma_0})}\log\log(1 + S_T)\enspace.}
  \end{align*}
\end{theorem}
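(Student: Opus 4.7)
The plan is to combine the standard one-step descent inequality for projected gradient descent with a phase-based analysis of the adaptive step-size sequence $k_t$. By non-expansiveness of $\Proj_{\Theta}$, for any $x \in \Theta$ and $\eta_t = \gamma_{k_t}/h_t$,
\begin{equation*}
\scalar{g_t}{x_t - x} \leq \frac{\|x_t - x\|^2 - \|x_{t+1} - x\|^2}{2\eta_t} + \frac{\eta_t}{2}\|g_t\|^2,
\end{equation*}
so convexity and summation reduce $R_T$ to controlling a distance-telescoping sum and $\sum_t \eta_t \|g_t\|^2$. The principal difficulty is that $\eta_t$ is neither data-independent nor globally monotone: within a phase (maximal interval on which $k_t$ is constant) it is non-increasing, but at every doubling it jumps upward.

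The first step is to bound the doubling counter. Specialising the one-step inequality to $x = x_*$ and dropping the nonnegative subgradient term yields $\|x_t - x_*\|^2 \leq \|x_1 - x_*\|^2 + \Gamma_t^2$. Combined with the non-expansion bound $\|x_t^+(k) - x_1\| \leq \|x_t - x_1\| + \gamma_k\|g_t\|/h_t$, this shows that the threshold on Line~\ref{alg:line_bound} is automatically satisfied as soon as $\gamma_k/\sqrt{k} \geq \|x_1 - x_*\|$. Hence $k_T \leq 2\bigl(\log_2(1+\|x_1-x_*\|/\gamma_0)+1\bigr)$ and $\gamma_{k_T} \leq 4 D_{\gamma_0}$. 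Acceptance of the threshold at every step also yields the running estimate $\|x_t - x_1\| \leq 2\gamma_{k_{t-1}}/\sqrt{k_{t-1}} + \sqrt{\Gamma_t^2}$, and triangle inequality with $\|x_1 - x_*\| \leq D_{\gamma_0}$ controls $\|x_t - x_*\|^2$ throughout.

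Next, I would exploit the calibration of $h_t = \sqrt{(S_t+1)\log(\e(1+S_t))}$ to bound two data-dependent sums via the summation analogues of $\int \mathrm{d}s/\sqrt{s\log s}$ and $\int \mathrm{d}s/(s\log s)$ respectively:
\begin{equation*}
\sum_{t=1}^T \frac{\|g_t\|^2}{h_t} \leq c\sqrt{\tfrac{S_T+1}{\log(1+S_T)}}
\qquad \text{and} \qquad
\sum_{t=1}^T \frac{\|g_t\|^2}{h_t^2} \leq c\log\log(\e + S_T),
\end{equation*}
from which $\sum_t \eta_t\|g_t\|^2 \leq c\gamma_{k_T}\sqrt{S_T/\log S_T}$ and $\Gamma_{T+1}^2 \leq c \gamma_{k_T}^2 \log\log(\e+S_T)$. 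For the distance-telescoping sum, I split at the $\cO(\log(1+D_{\gamma_0}))$ phase boundaries: inside each phase $1/\eta_t$ is non-decreasing, so Abel summation telescopes cleanly to a phase contribution bounded by $\max_{t\in\text{phase}}\|x_t-x_*\|^2 \cdot (h_{\mathrm{end}}-h_{\mathrm{start}})/\gamma_k$, while the negative jumps in $1/\eta_t$ at transitions are discarded. Substituting the self-bounding estimate $\|x_t-x_*\|^2 \leq c D_{\gamma_0}^2 \log\log(\e+S_T)$ and using that earlier phases carry smaller $\gamma_{k_j}$ but correspondingly smaller $\Gamma^2$ produces the dominant term $\cO\bigl(D_{\gamma_0}\sqrt{(S_T+1)\log(1+S_T)\log(1+D_{\gamma_0})}\bigr)$.

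The main obstacle will be this last cross-phase accounting. A naive bound loses a full copy of the main term per doubling, yielding a $\log(1+S_T)$ penalty; the delicate point is to argue, via the self-referential form of $B_{t+1}$, that the increase of $\|x_t-x_1\|^2/\eta_t$ incurred at each doubling is already paid for by quantities previously counted inside $\Gamma_t^2$. In the spirit of the standard $\log\log$ refinements of adaptive regret analyses, this collapses the naive $\log$ factor down to $\log\log$. Assembling the bounds on $\gamma_{k_T}$, on the two data-dependent sums, and on the telescoping term then produces exactly
\begin{equation*}
R_T \leq c D_{\gamma_0}\sqrt{(S_{T+1}+1)\log(1+S_{T+1})\log(1+D_{\gamma_0})}\,\log\log(1+S_T).
\end{equation*}
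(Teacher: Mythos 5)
There is a genuine gap, and it sits exactly where you flag "the main obstacle": the cross-phase accounting of the weighted distance-telescoping sum is not actually carried out, and the bound you propose for it fails. Within a phase you bound the Abel-summed term by $\max_{t\in\text{phase}}\norm{x_t-x_*}^2\,(h_{\mathrm{end}}-h_{\mathrm{start}})/\gamma_k$. But $\norm{x_t-x_*}^2\geq$ (roughly) $\norm{x_1-x_*}^2$ throughout the early phases, where $\gamma_k\approx\gamma_0$; if most of the gradient mass $S_T$ accumulates before the first doubling (which the algorithm does not preclude when $\gamma_0\ll\norm{x_1-x_*}$, since tiny steps keep $\norm{x_t-x_1}$ below the threshold for a long time), your bound for phase $1$ alone is of order $\norm{x_1-x_*}^2\,h_{T+1}/\gamma_0$, which exceeds the target $D_{\gamma_0}\sqrt{\log(1+D_{\gamma_0})}\,h_{T+1}\log\log(\cdot)$ by an unbounded factor $\approx D_{\gamma_0}/\gamma_0$. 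The missing idea is precisely the one the paper uses at each phase boundary: the acceptance condition $\norm{x_{T_{k+1}}-x_1}\leq B_{T_{k+1}}(k)$ gives the \emph{lower} bound $\norm{x_{T_{k+1}}-x_*}\geq\cro{\norm{x_1-x_*}-B_{T_{k+1}}(k)}_+$, and the elementary inequality $\Delta^2-\cro{\Delta-B}_+^2\leq 2\Delta B$ converts the dangerous $\norm{x_1-x_*}^2/\gamma_k$ boundary terms into $\norm{x_1-x_*}\,B_{T_{k+1}}(k)/\gamma_k=2\norm{x_1-x_*}/\sqrt{k}+\norm{x_1-x_*}\Gamma_{T_{k+1}}/\gamma_k$, whose sum over phases is $\cO\pa{\norm{x_1-x_*}\sqrt{k_T}}$ (this is where the $1/\sqrt{k}$ in $B$ buys the $\sqrt{\log(1+D_{\gamma_0})}$ rather than $\log$). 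Discarding the boundary/jump terms, as you propose, throws away exactly this cancellation; "earlier phases carry smaller $\Gamma^2$" does not help because the problem is the $\norm{x_1-x_*}^2$ part of the distance, not $\Gamma^2$. The paper packages the within-phase computation as Lemma~\ref{lem:ada_main} (regret over a phase bounded by $h_{T+1}$ times a $\gamma$-scaled distance difference plus $\frac{\gamma}{2}\sum\norm{g_t}^2/h_t^2$ only) and then performs the $2\Delta B$ step phase by phase in the proof of Theorem~\ref{thm:any_h}; some version of this is indispensable.

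Two further corrections to your phase-counting step. First, $\gamma_{k_T}\leq 4D_{\gamma_0}$ is false in general: because the threshold compares $\norm{x_t^+(k)-x_1}$ to $2\gamma_k/\sqrt{k}+\sqrt{\Gamma_t^2+\gamma_k^2\norm{g_t}^2/h_t^2}$, automatic acceptance requires $\gamma_k/\sqrt{k}\geq D_{\gamma_0}$, so in the worst case $\gamma_{k_T}\approx D_{\gamma_0}\sqrt{k^*}$, i.e.\ $\gamma_{k_T}\leq 5D_{\gamma_0}\sqrt{\log_2(D_{\gamma_0}/\gamma_0)+1}$ as in \eqref{eq:bound_on_k_T} (consequently your self-bounding estimate on $\norm{x_t-x_*}^2$ also picks up an extra $\log(D_{\gamma_0}/\gamma_0)$ factor). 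Second, to see that the threshold is met for such $k$ you should apply the deviation bound (Lemma~\ref{lem:deviation}) directly to the probe point $x_t^+(k)$, giving $\norm{x_t^+(k)-x_1}\leq 2\norm{x_1-x_*}+\sqrt{\Gamma_t^2+\gamma_k^2\norm{g_t}^2/h_t^2}$, which matches the form of $B_{t+1}(k)$; your route via $\norm{x_t-x_1}+\gamma_k\norm{g_t}/h_t$ leaves a sub-additivity slack that does not line up with the threshold as stated.
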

{A large part of the literature on parameter-free optimization considers the context of online convex optimization with $L$-Lipschitz functions. A series of papers \cite{mcmahan2014unconstrained,zhang2022pde,jacobsen2022parameter,cutkosky2018black} have produced algorithms, mainly based on coin betting, enjoying regret bounds
$\cO(D_{\gamma_{0}}\sqrt{S_{T}\log(1+D_{\gamma_{0}}S_{T}/\gamma_{0})})$, up to lower order terms. Such a regret bound is not achievable in online convex optimization when $L$ is unknown as shown in~\cite{cutkosky2017online}. Some papers~\cite{pmlr-v99-cutkosky19a,pmlr-v125-mhammedi20a,jacobsen2022parameter} consider yet the case where $L$ is unknown, and provide regret bound including additional terms depending on $L$ and on higher order of $D_{\gamma_{0}}$.

% For the problem of estimating $x_{*}$\evg{Not sure estimation is a good term. I would assumed it refers to $\|x_t - x_*\|$. While $f(x_t) - f(x_*)$ is the optimization error.}, estimation
If only the optimization error is of concern (and not the regret), bounds with log-factors replaced by log-log factors have been produced by~\cite{carmon2023making, defazaio2023learning}, breaking the barrier of online-to-batch conversion, but still requiring some knowledge about the Lipschitz constant $L$.
Relying on binary search,~\cite{carmon2023making} construct an adaptive algorithm for the problem of stochastic optimization, while~\cite{defazaio2023learning} provide an adaptive version of dual averaging and gradient descent algorithms, without allowing for projection step and requiring a careful weighted averaging along the trajectory to obtain the final solution. In contrast, we do not require Lipschitz condition, we handle the projection step and our bounds are valid for the usual average.

 The closer to us, in a setting of online convex optimization with $L$-Lipschitz functions,  \cite{mcmahan2012no} propose a tuning of GD (without projection) which is based on a doubling trick with cold-restarts and which requires the knowledge of $L$. This algorithm is shown to be adaptive to $\|x_{1}-x_{*}\|$ at the price of loosing a log factor
$\log(D_{\gamma_{0}}T/\gamma_{0})$ in the regret bound.
In Appendix~\ref{app:MS12}, we show that the algorithm of~\cite{mcmahan2012no} can be seen as a specific instantiation of Algorithm~\ref{algo:PGDloglogMath}, with the major difference that cold-restart are performed when doubling the step-size.

\section{Warm-up: simple analysis and intuition}\label{sec:warm-up}

Before proceeding to the analysis of the \textsc{Free AdaGrad} algorithm~\ref{algo:PGDloglogMath}, we explain the main ideas  behind our step-size scheme in the following simpler setup.

\begin{figure}[!tbp]
 \centering
   \includegraphics[width=\textwidth]{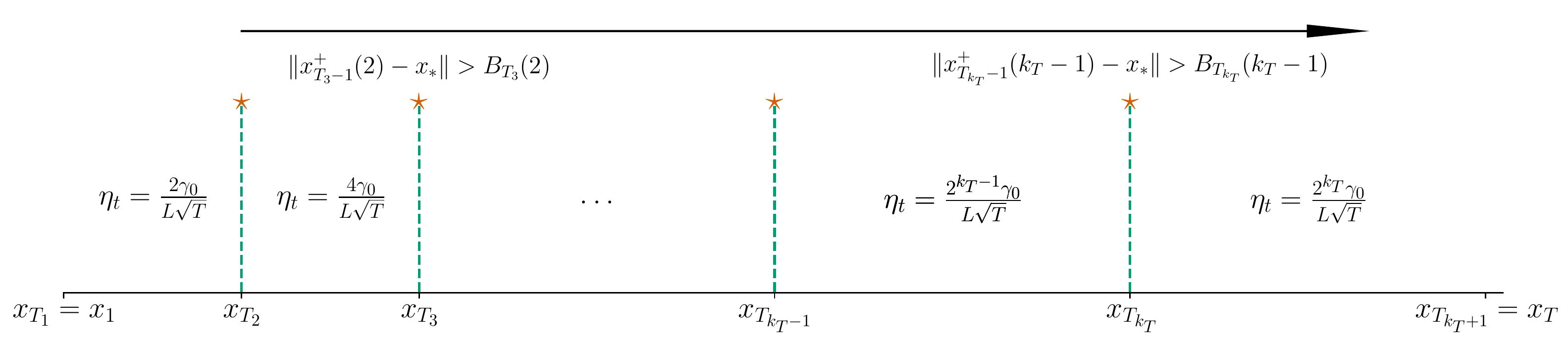}
   \caption{Schematic illustration of the algorithm in the simplest case.}\label{fig:schematic}
\end{figure}

\fbox{\parbox{\textwidth}{
{\bf Simple warm-up setup}\smallskip\\*
%\begin{enumerate}
%\item
\ \ 1) the norm of the subgradients are uniformly bounded by some  known $L$, i.e. $\|g_{t}\|\leq L$,\smallskip\\*
%\item
\ \ 2) the optimization is unconstrained, i.e. $\Theta=\R^d$,\smallskip\\*
%\item
\ \ 3) the time horizon $T$ is fixed in advance.
%\end{enumerate}
}}

 In this case, we can replace $(h_t)_{t \geq 1}$ set on Line~\ref{alg:line_h} of Algorithm~\ref{algo:PGDloglogMath} by the constant sequence $h_t=L\sqrt{T}$, and  the choice $\gamma = \|x_1 - x_*\|$ is known to achieve the optimal rates for the regret $\|x_1 - x_*\|L\sqrt{T}$, see e.g.~\cite{nesterov2018lectures}. In this context, the overall  strategy of Algorithm~\ref{algo:PGDloglogMath} is to start from a small value $\gamma_{0}$ for $\gamma$, and then track $\norm{x_{t}-x_{1}}$ in order to detect if $\gamma<\|x_1 - x_*\|$. If so, $\gamma$ is doubled. The algorithm then increases the value $\gamma$ until reaching the level  $\norm{x_{t}-x_{1}}$.

In order to keep the analysis simple  in this warm-up section, we replace the threshold $B_{t+1}(k)$ Line~\ref{alg:B} of Algorithm~\ref{algo:PGDloglogMath} by $B^{\text{simple}}_{t+1}(k)=3\gamma_{k}$ (recall that $\gamma_k = \gamma_0 2^k$), what eventually leads to a slightly worse bound.
The gradient step and the step-size choice are then simply
\begin{equation}\label{warm-choice}
x^+_{t}(k)=x_{t}-{\gamma_{k}\over L\sqrt{T}}g_{t}\quad \text{and}\quad k_t = \min\left\{k \geq k_{t-1} \,:\, \|x^+_{t}(k) - x_1\| \leq 3\gamma_{k}\right\}\,.
\end{equation}
Below, we sketch the main arguments, and we refer to Appendix~\ref{app:warm-up} for all the details.\\
The first ingredient is the text-book decomposition using subgradient upper-bound: for any $k\geq 1$
\begin{align}\label{eq:one-step}
0\leq f(x_{t})-f(x_{*}) \leq \langle g_{t}, x_{t}-x_{*} \rangle
&= {{\gamma_{k}\over 2L\sqrt{T}} \|g_{t}\|^2 +{L\sqrt{T}\over 2 \gamma_{k}} \pa{\|x_{*}-x_{t}\|^2-\|x_{t}^+(k)-x_{*}\|^2}}\nonumber \\
& \leq {{\gamma_{k}L\over 2\sqrt{T}} +{L\sqrt{T}\over 2 \gamma_{k}} \pa{\|x_{*}-x_{t}\|^2-\|x_{t}^+(k)-x_{*}\|^2}}\enspace.
\end{align}
It follows from this bound, a one-step deviation upper-bound
\[\|x^+_{t}(k)-x_{*}\|^2\leq \|x_{t}-x_{*}\|^2+ {\gamma_{k}^2/ T}\enspace.\]
Summing this bound over $t$, we get a first important bound on the distance to optimum
\begin{align}
\|x^+_{t}(k)-x_{*}\|^2&\leq \|x_{1}-x_{*}\|^2+\sum_{s=1}^{t-1} {\gamma^2_{k_{s}}\over T} +{\gamma_{k}^2\over T}
\ \leq \ \|x_{1}-x_{*}\|^2 +{\gamma_{k}^2},\quad \text{for all}\ k\geq k_{t-1}\enspace,\label{eq:dist-opt}
\end{align}
and then another important bound on the distance to initialization
\begin{align}\label{eq:dist-init}
\|x^+_{t}(k)-x_{1}\|&\leq \|x_{1}-x_{*}\|+\|x^+_{t}(k)-x_{*}\| \ \leq\ 2\|x_{1}-x_{*}\|+\gamma_{k},\quad \text{for all}\ k\geq k_{t-1}\enspace,
\end{align}
where the last inequality follows from (\ref{eq:dist-opt}) and the sub-additivity of square-root.

{\bf Controlling the number of phases.} The bound (\ref{eq:dist-init}) plays a central role in our step-size tuning. Indeed, we observe that if $\|x^+_{t}(k_{t-1})-x_{1}\| > 3 \gamma_{k_{t-1}}$, then it means that $\gamma_{k_{t-1}}<\|x_{1}-x_{*}\|$, and our step-size tuning then increases $k$ until the condition $\|x^+_{t}(k)-x_{1}\| \leq 3 \gamma_{k}$ is met.
In addition, we check below that the design of $B^{\text{simple}}_{t+1}(k)$ ensures that  we have $k_{t}\leq k^*$ for all $t\leq T$, where
 $k^*\geq 1$ is the integer defined by $\gamma_{k^*-1} \leq D_{\gamma_{0}}:=\|x_{*}-x_{1}\|\vee \gamma_{0}< \gamma_{k^*}$,
and fulfilling
 \begin{equation}\label{leq:kstar}
 k^* \leq 1+\log_{2} \pa{ {\|x_{*}-x_{1}\|\over \gamma_{0}}\vee 1}=\log_{2} \pa{ {2D_{\gamma_{0}}\over \gamma_{0}}},\quad \text{and}\quad \gamma_{k^*}\leq 2 D_{\gamma_{0}}\enspace.
 \end{equation}
 Indeed, if $k_{t-1}\leq k^*$, then (\ref{eq:dist-init}) ensures that
\[\|x^+_{t}(k^*)-x_{1}\|\leq 2 \gamma_{k^*}+\gamma_{k^*}=B^{\text{simple}}_{t+1}(k^*)\,\,,\]
so $k_{t}\leq k^*$, and by induction the property holds for all $t\leq T$.

{\bf Bounding the regret.}
Let us now upper-bound the regret. We denote by $[T_{k},T_{k+1}-1]$ the interval where $k_t=k$, with the convention $T_{k+1}=T_{k}$ if we never have $k_t=k$, see Figure~\ref{fig:schematic} for schematic illustration.
Summing the central equation (\ref{eq:one-step}), the regret can then be decomposed as follows
\begin{align}\label{leq:phase-sum}
%\sum_{t=1}^{T} (f(x_{t})-f(x_{*})) &
R_{T}&= \sum_{k=1}^{k_{T}}\sum_{t=T_{k}}^{T_{k+1}-1} (f(x_{t})-f(x_{*}))\nonumber\\
& \leq  \sum_{k=1}^{k^*} \pa{{\gamma_{k}L\over 2 \sqrt{T}}  (T_{k+1}-T_{k}) + {L\sqrt{T}\over 2 \gamma_{k}} \pa{\|x_{T_{k}}-x_{*}\|^2-\|x_{T_{k+1}}-x_{*}\|^2}}\nonumber\\
& \leq {L\sqrt{T}\over 2 }   \pa{\gamma_{k^*} + \sum_{k=1}^{k^*}{1\over  \gamma_{k}} \pa{\|x_{T_{k}}-x_{*}\|^2-\|x_{T_{k+1}}-x_{*}\|^2}}.
\end{align}
From the step-size rule, we have that $\|x_{T_{k+1}}-x_{1}\|\leq B_{T_{k+1}}(k_{T_{k+1}-1})=3\gamma_{k}$, and from (\ref{eq:dist-opt})  we have
$\|x_{T_{k}}-x_{*}\|^2\leq  \|x_{1}-x_{*}\|^2 +{\gamma_{k-1}^2}$, so we can upper bound the last term in the right-hand side of (\ref{leq:phase-sum})
\begin{align}
\|x_{T_{k}}-x_{*}\|^2-\|x_{T_{k+1}}-x_{*}\|^2 &\leq \|x_{1}-x_{*}\|^2 +{\gamma_{k-1}^2} -  \cro{\|x_{1}-x_{*}\|- \|x_{T_{k+1}}-x_{1}\|}^2_{+}\nonumber\\
&\leq \gamma_{k-1}^2+ \|x_{1}-x_{*}\|^2 - \cro{\|x_{1}-x_{*}\|- 3\gamma_{k}}^2_{+}\nonumber\\
&\leq {1\over 4}\gamma_{k}^2+6 \gamma_{k} \|x_{1}-x_{*}\|\enspace, \label{leq:nice}
\end{align}
where the last inequality follows from the basic inequality
$\Delta^2-[\Delta-B]_{+}^2\leq 2\Delta B$, for all $\Delta,B\geq 0$.
Substituting (\ref{leq:nice}) in (\ref{leq:phase-sum}) and
using the bound (\ref{leq:kstar}),
we end with the upper-bound
\begin{align}
%\sum_{t=1}^{T} (f(x_{t})-f(x_{*}))
R_{T}&\leq {L\sqrt{T}\over 2} \cro{\gamma_{k^*}{+}{\gamma_{k^*+1}\over 4} + 6k^*\|x_{1}-x_{*}\| }
\leq L\sqrt{T} \cro{3\|x_{1}-x_{*}\| \log_{2}\pa{ {2D_{\gamma_{0}}\over \gamma_{0}}} {+} 2 D_{\gamma_{0}}}\,.\label{bound:regret:simple}
\end{align}
The bound (\ref{bound:regret:simple}) for Algorithm~\ref{algo:PGDloglogMath}, then matches the optimal rate $\|x_{1}-x_{*}\| L\sqrt{T}$ obtained with the oracle step size $\eta=\|x_{1}-x_{*}\|/( L\sqrt{T})$, up to a factor  $\log_{2}\pa{ D_{\gamma_{0}}/ \gamma_{0}}$.

It turns out that, in this $L$-Lipschitz setting, it is possible to adapt to $\|x_{1}-x_{*}\|$ with
a bound $\cO\big({D_{\gamma_{0}}L\sqrt{T \log_{2}\pa{ D_{\gamma_{0}}/ \gamma_{0}}}}\big)$  on the regret, by, for example, using coin betting~\cite{pmlr-v35-mcmahan14,orabona2016coin}.
We achieve such tighter bound with PGD with a better tuning of the threshold $B_{t+1}(k)$ which is explained in the next section.

\subsection{Improving log factor by better tuning of $B_{t+1}(k)$}

%$B_{t+1}(k)={2 \gamma_{k}/ \sqrt{k}}+\sqrt{\Gamma^2_{t}+{\gamma_{k}^2/ T}}$.
% Let us explain the two ingredients leading to our choice Line \ref{alg:B} of Algorithm~\ref{algo:PGDloglogMath} for $B_{t+1}$.
% The choice
Previous section gave the basic intuition, explaining why such a doubling strategy works. Yet, our choice of $B_{t+1}$ on Line \ref{alg:B} of Algorithm~\ref{algo:PGDloglogMath} differs from $B^{\text{simple}}_{t+1}(k) = 3\gamma_k$. Remaining in the simple setup of warmup, let us explain two key ingredients, which eventually lead to our choice of $B_{t+1}$ on Line \ref{alg:B} of Algorithm~\ref{algo:PGDloglogMath}.
The first ingredient is to track more tightly the upper-bound on $\|x^+_{t}(k)-x_{1}\|$. Indeed, we can improve
the Bound (\ref{eq:dist-opt}) by keeping
% the bound
$\|x^+_{t}(k)-x_{*}\|^2\leq \|x_{1}(k)-x_{1}\|^2+ \Gamma^2_{t}+\gamma^2_{k}/T$ instead of relying on the last bound in (\ref{eq:dist-opt}).
% from the penultimate inequality in (\ref{eq:dist-opt}).
Hence, we can replace (\ref{eq:dist-init}) by
\begin{equation}\label{better-track}
\|x^+_{t}(k)-x_{1}\| \leq 2 \|x_{1}-x_{*}\| +\sqrt{\Gamma^2_{t}+\gamma^2_{k}/T}\enspace,
\end{equation}
in order to implicitly track the value $ \|x_{1}-x_{*}\|$.
This improved tracking alone is not enough in order to improve the $\log$ factor. Indeed, choosing $B_{t+1}(k)={2 \gamma_{k}}+\sqrt{\Gamma^2_{t}+{\gamma_{k}^2/ T}}$, still introduces $\log(D_{\gamma_0})$ term.
To improve the $\log$ factor, our second ingredient is to choose a slightly smaller threshold $B_{t+1}$, at the price of possibly moderately  increasing the number $k_{T}$ of doubling. In particular, setting
% Indeed, choosing  the more-fine tuned threshold
\begin{align}
    \label{eq:B_less_simple}
    B_{t+1}(k)={2 \gamma_{k}\over \sqrt{k}}+\sqrt{\Gamma^2_{t}+{\gamma_{k}^2/ T}}\ ,
\end{align}
we get that $k_{T}\leq k^*+ 0.5 \log_{2}(k^*)+1.25$, and instead of $\log(D_{\gamma_0})$ we have $\sqrt{\log(D_{\gamma_0})}$ in the regret bound (see Appendix~\ref{app:warm-up} for details).
%{\color{red} If you are ok, I replace it with the above.\\
% This improved tracking alone is not enough in order to improve the $\log$ factor. Indeed, when choosing $B_{t+1}(k)={2 \gamma_{k}}+\sqrt{\Gamma^2_{t}+{\gamma_{k}^2/ T}}$, a term
% \[L\sqrt{T}\sum_{k=1}^{k^*}{\gamma_{k}\|x_{1}-x_{*}\|\over \gamma_{k}} = k^* \|x_{1}-x_{*}\|L\sqrt{T}=\cO\pa{ \|x_{1}-x_{*}\|L\sqrt{T} \log(D_{\gamma_{0}}/\gamma_{0})}\enspace,\]
% remains in the bound.
% To improve the $\log$ factor, our second ingredient is to choose a slightly smaller threshold $B_{t+1}$, at the price of possibly moderately  increasing the number $k_{T}$ of doublings. In particular, setting
% % Indeed, choosing  the more-fine tuned threshold
% \[B_{t+1}(k)={2 \gamma_{k}\over \sqrt{k}}+\sqrt{\Gamma^2_{t}+{\gamma_{k}^2/ T}}\enspace ,\]
% we get that $k_{T}\leq k^*+ 0.5 \log_{2}(k^*)+1.25$, and the $k^* \|x_{1}-x_{*}\|$ term is replaced by
% \[\sum_{k=1}^{k_{T}}{\gamma_{k} \|x_{1}-x_{*}\|\over \gamma_{k}\sqrt{k}} \leq 2 \sqrt{k_{T}} \|x_{1}-x_{*}\|=\cO\pa{ \|x_{1}-x_{*}\|\sqrt{ \log_{2}\pa{ D_{\gamma_{0}}/ \gamma_{0}}}}\enspace,\]
% see Appendix~\ref{app:warm-up} for details.
% }}
Combining everything together, we get the bound
\[%\sum_{t=1}^T (f(x_{t})-f(x_{*}))
R_{T}\leq 10 D_{\gamma_{0}} {L\sqrt{T}}
  \sqrt{2\log_{2} \pa{ {2D_{\gamma_{0}}/ \gamma_{0}}}}\enspace,\]
for algorithm in~\eqref{warm-choice} with $3\gamma_k$ replaced by $B_{t+1}(k)$ in~\eqref{eq:B_less_simple} (see Theorem~\ref{thm:warm-up} in  Appendix~\ref{app:warm-up}).
While, the above discussion was still assuming the simple setup of $L$-Lipschitz function $f$, known $L$ and $T$, we are able to generalize the above argument to nearly arbitrary convex $f$ and unknown $T$.

% \evg{Do not know a precise definition of sexy :D It is probably more sexy, but still not very sexy.}
\section{Meta theorem: a general case of Algorithm~\ref{algo:PGDloglogMath}}
\label{sec:meta}

% \begin{algorithm}[t!]
% \DontPrintSemicolon
% \caption{\FreeAdaGrad}\label{algo:meta}
% \SetKwInput{Input}{Input}
%    \SetKwInOut{Output}{Output}
%    \SetKwInput{Initialization}{Initialization}
%    \Input{$x_1 \in \bbR^d, \Theta \subset \bbR^d, \gamma_0 > 0$}
%    \Initialization{$\Gamma^2_1 = 0, k_0 = 1, S_0 = 0, \text{a rule for $(h_t)_{t \geq 1}$}, \gamma_k = \gamma_0 2^k \text{ for $k \geq 1$}$}

% \For{$t \geq 1$}{

%     $g_t \in \partial f(x_t)$  \tcp*[r]{{\small get subgradient}}

%     \text{update $h_t \geq h_{t-1}$}
%     % \tcp*[r]{{\small cumulative grad-norm}}

%     %$ B_{t + 1}(k) = \gamma_k\pa{\frac{2}{\sqrt{k}} + \frac{\|g_t\|}{h(S_t)}} + \Gamma_t$
%     $B_{t + 1}(k) = \frac{2\gamma_{k}}{\sqrt{k}} + \sqrt{\Gamma^2_{t}+ \gamma_{k}^2 {\|g_{t}\|^2\over h^2_{t}}}$ \tcp*[r]{\small define the threshold}

%     $x^+_{t}(k) = \Proj_{\Theta}\pa{x_t - \frac{\gamma_k}{h_t}g_t}$ \tcp*[r]{\small probing step}

%     $k_t = \min\left\{k \geq k_{t-1} \,:\, \|x^+_{t}(k) - x_1\| \leq B_{t + 1}(k)\right\}$ \label{alg:line_bound}\tcp*[r]{\small find the step size}

%     $x_{t + 1} = x^+(k_t)$ \tcp*[r]{\small make the step}

%     $\Gamma_{t+1}^2 = \Gamma_t^2 + \gamma_{k_t}^2\frac{\|g_t\|^2}{h^2(S_t)}$ \tcp*[r]{\small update $\Gamma^2_t$}

%     }

% \Output{Trajectory $(x_t)_{t \geq 1}$}

% \end{algorithm}

In this section, we provide a unified analysis of Algorithm~\ref{algo:PGDloglogMath}, that is valid under the minimal Assumption~\ref{ass:main}, and for the choice of \emph{any positive non-decreasing sequence} $(h_t)_{t \geq 1}$ on Line~\ref{alg:line_h} of Algorithm~\ref{algo:PGDloglogMath}.
Our main result, stated in Theorem~\ref{thm:main_intro}, is obtained as a consequence of this general result, and is made precise in Corollary~\ref{cor:main}.
\begin{theorem}
    \label{thm:any_h}
  %   For any $\gamma_0 > 0$, non-decreasing $h : [0, +\infty) \to [0, +\infty)$, $T \geq 2$, Algorithm~\ref{algo:PGDloglogMath} satisfies
  %   \begin{align*}
  %   R_T
  %   &\leq
  %   h(S_{T+1})\pa{2\norm{x_1 - x_*}\sqrt{k_T}\pa{2 {+} \sqrt{\frac{1}{3}\sum_{t = 1}^{T}\frac{\|g_t\|^2}{h^2(S_t)}}} + \gamma_{k_T}\sum_{t = 1}^{T}\frac{\|g_t\|^2}{h^2(S_t)}}\enspace.
  %   % +\frac{\gamma_{k_T}}{2}\sum_{t = 1}^{T}\frac{\|g_t\|^2}{h(S_t)}\,.
  % \end{align*}
  Let Assumption~\ref{ass:main} be satisfied.
  For any $\gamma_0 > 0$, for any positive non-decreasing $(h_t)_{t \geq 1}$ on Line~\ref{alg:line_h} of Algorithm~\ref{algo:PGDloglogMath}, Algorithm~\ref{algo:PGDloglogMath} satisfies
    \begin{align*}
    \sum_{t = 1}^T(f(x_t) - f(x_*))
    &\leq
    h_{T+1}\pa{2\norm{x_1 - x_*}\sqrt{k_T}\pa{2 {+} \sqrt{\frac{1}{3}\sum_{t = 1}^{T}\frac{\|g_t\|^2}{h^2_t}}} + \gamma_{k_T}\sum_{t = 1}^{T}\frac{\|g_t\|^2}{h^2_t}}\enspace.
    % +\frac{\gamma_{k_T}}{2}\sum_{t = 1}^{T}\frac{\|g_t\|^2}{h(S_t)}\,.
  \end{align*}
\end{theorem}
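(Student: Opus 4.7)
The plan is to blend the phase decomposition of Section~\ref{sec:warm-up} with a textbook projected-subgradient analysis within each phase, and then to exploit the stopping rule on Line~\ref{alg:line_bound} together with a Cauchy--Schwarz estimate to recover the $\sqrt{k_T}$ factor in the theorem.

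\textbf{Per-step building blocks.} I first record three facts that follow directly from non-expansion of $\Proj_{\Theta}$ and convexity: the one-step bound $f(x_t)-f(x_*) \leq \tfrac{h_t}{2\gamma_{k_t}}(U_t-U_{t+1}) + \tfrac{\gamma_{k_t}}{2h_t}\|g_t\|^2$ with $U_t := \|x_t-x_*\|^2$; the same identity applied to any probing point $x_t^+(k)$ with $k\geq k_{t-1}$, giving $\|x_t^+(k)-x_*\|^2 \leq U_t+\gamma_k^2\|g_t\|^2/h_t^2$; and the running-distance bound $U_t \leq \|x_1-x_*\|^2+\Gamma_t^2$ obtained by iterating the weak form $U_{t+1}\leq U_t+\gamma_{k_t}^2\|g_t\|^2/h_t^2$ and telescoping.

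\textbf{Per-phase summation and the stopping-rule bound.} Within phase $k$ (the indices where $k_t=k$), the step $\eta_t=\gamma_k/h_t$ is non-increasing because $h_t$ is non-decreasing. Abel summation on $\sum_{t\in\text{phase }k} h_t(U_t-U_{t+1})$, the bound $U_t\leq U_{T_k}+\gamma_k^2\sum_{T_k\leq s<t}\|g_s\|^2/h_s^2$, and exchanging the order of summation collapse the phase's contribution to
\[
R^{(k)} := \sum_{t=T_k}^{T_{k+1}-1}(f(x_t)-f(x_*)) \leq \frac{h_{T_{k+1}-1}}{2\gamma_k}(U_{T_k}-U_{T_{k+1}}) + h_{T_{k+1}-1}\gamma_k\sigma_k,\quad \sigma_k := \sum_{t\in\text{phase }k}\tfrac{\|g_t\|^2}{h_t^2}.
\]
I then use the selection rule on Line~\ref{alg:line_bound}, which yields $\|x_{T_{k+1}}-x_1\|\leq 2\gamma_k/\sqrt{k}+\Gamma_{T_{k+1}}$; combining this with the reverse triangle inequality, the running-distance bound on $U_{T_k}$, and the elementary $X^2-(X-Y)_+^2\leq 2XY$ for $X,Y\geq 0$ gives the telescoping-friendly
\[
U_{T_k}-U_{T_{k+1}} \leq \Gamma_{T_k}^2 + 4\|x_1-x_*\|\gamma_k/\sqrt{k} + 2\|x_1-x_*\|\Gamma_{T_{k+1}}.
\]

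\textbf{Cross-phase sum and main obstacle.} Summing $R^{(k)}$ over $k$ and bounding $h_{T_{k+1}-1}\leq h_{T+1}$ reduces the theorem to controlling three scalar sums. The sum $\sum_k\Gamma_{T_k}^2/\gamma_k$ collapses to $\gamma_{k_T}\sum_t\|g_t\|^2/h_t^2$ by expanding $\Gamma_{T_k}^2=\sum_{j<k}\gamma_j^2\sigma_j$, swapping sums, and using the geometric identity $\sum_{k>j}1/\gamma_k=1/\gamma_j$. The sum $\sum_{k=1}^{k_T}1/\sqrt{k}\leq 2\sqrt{k_T}$ is elementary. The delicate term is $\sum_k\Gamma_{T_{k+1}}/\gamma_k$, whose termwise control is only $\cO(k_T)$---off by a $\sqrt{k_T}$ factor from what the theorem requires. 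The main obstacle, and the key step, is to circumvent this via Cauchy--Schwarz, $\sum_k\Gamma_{T_{k+1}}/\gamma_k \leq \sqrt{k_T}\,\sqrt{\sum_k\Gamma_{T_{k+1}}^2/\gamma_k^2}$, followed by the same double-sum exchange with $\Gamma_{T_{k+1}}^2=\sum_{j\leq k}\gamma_j^2\sigma_j$ to show $\sum_k\Gamma_{T_{k+1}}^2/\gamma_k^2\leq \tfrac{4}{3}\sum_t\|g_t\|^2/h_t^2$. Assembling the three scalar sums with the gradient contribution $\sum_k h_{T_{k+1}-1}\gamma_k\sigma_k \leq h_{T+1}\gamma_{k_T}\sum_t\|g_t\|^2/h_t^2$ reproduces the stated bound, with the constants $2$ and $1/\sqrt{3}$ emerging exactly from this geometric-and-Cauchy--Schwarz combination.
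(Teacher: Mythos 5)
Your proposal follows essentially the same route as the paper's proof: a phase decomposition, a per-phase AdaGrad-type bound with time-varying $h_t$ obtained by Abel summation and the within-phase deviation bound (this is exactly the paper's Lemma~\ref{lem:ada_main}/Lemma~\ref{lem:deviation}), the stopping-rule estimate combined with $\Delta^2-[\Delta-B]_+^2\leq 2\Delta B$, and then geometric-sum exchanges plus a Cauchy--Schwarz (the paper uses Jensen, which is equivalent) step to get the $\sqrt{k_T}\sqrt{\tfrac13\sum_t\|g_t\|^2/h_t^2}$ factor. One small point: as written, your per-phase bound carries the coefficient $h_{T_{k+1}-1}\gamma_k\sum_{t\in\text{phase }k}\|g_t\|^2/h_t^2$ rather than the sharper $\tfrac{\gamma_k}{2}$ that the careful Abel summation yields (the $-\gamma_k^2\sum_t\|g_t\|^2/h_t$ term produced by the summation exchange cancels the $+\tfrac{\gamma_k}{2}\sum_t\|g_t\|^2/h_t$ coming from the one-step bounds), so your assembly as stated gives $\tfrac32\,\gamma_{k_T}\sum_t\|g_t\|^2/h_t^2$ instead of the theorem's $\gamma_{k_T}\sum_t\|g_t\|^2/h_t^2$; tightening that per-phase coefficient---which is precisely what Lemma~\ref{lem:ada_main} does---recovers the stated constants exactly.
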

It is interesting to observe that the term $\sum_{t \leq T} {\|g_t\|^2}/{h_t}$, that often appears in the analysis of \AdaGrad is absent in our bound. Instead, we have $\sum_{t \leq T} {\|g_t\|^2}/{h_t^2}$ which behaves slightly worse and hence requires additional correction of $h_t$ (extra $\log$ factor) to ensure convergence.
The proof of Theorem~\ref{thm:any_h}, which can be found in Appendix~\ref{app:meta},  is based on the following general lemma.
\begin{lemma}
\label{lem:ada_main}
Let Assumption~\ref{ass:main} be satisfied. Consider the following algorithm for $t \geq 1$
\begin{align*}
    % \label{eq:ada_grad_def_main}
    x_{t + 1} = \Proj_{\Theta}\pa{x_t - \frac{\gamma}{{h_t}} g_t}\enspace,
\end{align*}
where $g_t \in \partial f(x_t)$, $(h_t)_{t \geq 1}$ is non-decreasing and positive, and $x_{1} \in \bbR^d$.
For all $T > 1$, and all $x_1 \in \bbR^d$, we have
    \begin{align*}
        \sum_{t = 1}^{T}(f(x_t) - f(x_*)) \leq
        {h_{T+1}}\pa{{\frac{\norm{x_{1} - x_*}^2 - \norm{x_{T+1} - x_*}^2}{2\gamma}}
        + \frac{\gamma}{2}\sum_{t = 1}^{T}\frac{\norm{g_t}^2}{h^2_t}}\enspace.
    \end{align*}
\end{lemma}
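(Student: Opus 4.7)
The plan is to follow the textbook projected subgradient argument but divide by $h_t$ at every step, so that the resulting telescoping is with respect to the un-weighted sequence $\norm{x_t-x_*}^2$, and only at the very end multiply by $h_{T+1}$ to recover the statement.

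First I would establish the one-step inequality. Since the Euclidean projection onto $\Theta$ is non-expansive and $x_*\in\Theta$, expanding $\norm{x_{t+1}-x_*}^2 \leq \norm{x_t-(\gamma/h_t)g_t-x_*}^2$ gives
\begin{equation*}
\norm{x_{t+1}-x_*}^2 \leq \norm{x_t-x_*}^2 - \frac{2\gamma}{h_t}\scalar{g_t}{x_t-x_*} + \frac{\gamma^2}{h_t^2}\norm{g_t}^2.
\end{equation*}
Convexity of $f$ with $g_t\in\partial f(x_t)$ then yields $\scalar{g_t}{x_t-x_*} \geq f(x_t)-f(x_*) =: \delta_t \geq 0$, so after rearrangement
\begin{equation*}
\frac{\delta_t}{h_t} \leq \frac{\norm{x_t-x_*}^2-\norm{x_{t+1}-x_*}^2}{2\gamma} + \frac{\gamma}{2}\frac{\norm{g_t}^2}{h_t^2}.
\end{equation*}

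Next I would sum this inequality over $t=1,\dots,T$. The middle term telescopes cleanly to $(\norm{x_1-x_*}^2-\norm{x_{T+1}-x_*}^2)/(2\gamma)$ because the weights on the differences $\norm{x_t-x_*}^2-\norm{x_{t+1}-x_*}^2$ have been divided out (this is the reason for keeping $1/h_t$ on the left rather than summing $\delta_t$ directly). Thus
\begin{equation*}
\sum_{t=1}^T \frac{\delta_t}{h_t} \leq \frac{\norm{x_1-x_*}^2-\norm{x_{T+1}-x_*}^2}{2\gamma} + \frac{\gamma}{2}\sum_{t=1}^T \frac{\norm{g_t}^2}{h_t^2}.
\end{equation*}

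Finally I would multiply both sides by the positive scalar $h_{T+1}$ and exploit monotonicity: since $(h_t)$ is non-decreasing, $h_{T+1}/h_t \geq 1$, and because $\delta_t\geq 0$, we have $\sum_t \delta_t \leq \sum_t (h_{T+1}/h_t)\delta_t$. Combining this with the previous display gives exactly the claimed bound. There is no real obstacle here; the only subtle point is to divide by $h_t$ \emph{before} summing so that the telescoping of $\norm{x_t-x_*}^2$ does not leave residual weight-differences, which is what allows a single factor $h_{T+1}$ to be pulled out on the right while the divisor $h_t^2$ is preserved in the gradient-norm sum.
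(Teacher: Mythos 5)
Your proof is correct, and it takes a genuinely different---and more elementary---route than the paper's. The paper multiplies the one-step inequality by $h_t$, so the distance terms telescope with time-varying weights; it then performs a summation by parts, controls the residual terms $(h_{t+1}-h_t)\norm{x_{t+1}-x_*}^2$ via the trajectory bound of Lemma~\ref{lem:deviation}, and interchanges a double sum, after which the $\sum_t \norm{g_t}^2/h_t$ contributions cancel exactly. You instead keep the one-step inequality normalized by $1/h_t$, telescope the unweighted distances, and only at the end upgrade $\sum_t (f(x_t)-f(x_*))$ to $h_{T+1}\sum_t (f(x_t)-f(x_*))/h_t$ using that $(h_t)$ is non-decreasing and that each instantaneous regret is non-negative; multiplying the telescoped bound by the positive scalar $h_{T+1}$ is then legitimate even though the bracket contains the possibly negative term $-\norm{x_{T+1}-x_*}^2$. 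Note that the non-negativity $f(x_t)\geq f(x_*)$ requires $x_t\in\Theta$, hence implicitly $f(x_1)\geq f(x_*)$ (e.g.\ $x_1\in\Theta$) for the $t=1$ term; this is not a gap specific to your argument, since the paper relies on the same fact inside Lemma~\ref{lem:basic} and Lemma~\ref{lem:deviation}. Both arguments land on exactly the same final inequality, so nothing is lost quantitatively: the paper's longer route passes through Lemma~\ref{lem:deviation}, which it needs anyway elsewhere (e.g.\ to bound the number of phases), and it makes explicit that the discarded quantity is the negative term $-\tfrac{\gamma}{2}\sum_t\norm{g_t}^2/h_t$, whereas your route isolates the slack in replacing $f(x_t)-f(x_*)$ by $(h_{T+1}/h_t)\bigl(f(x_t)-f(x_*)\bigr)$ and is considerably shorter.
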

The above lemma replaces the key inequality~\eqref{eq:one-step} that was available for one step of PGD in the simplest case.
However, since the step-size in our case is time-varying, we rather need a variant of this inequality over the whole trajectory. While simple to prove, it seems that this result is novel and could be of independent interest.

Finally, to obtain the bound of Theorem~\ref{thm:main_intro}, we only need to bound the number of phases $k_T$. Note that the intuition of the previous section still applies in this case, yet, the actual bound on $k_T$ is more refined---it gives better constants and improves logarithmic factors.

\begin{lemma}
    \label{lem:number_of_phases}
    Let Assumption~\ref{ass:main} be satisfied.
  For any $\gamma_0 > 0$, and any non-decreasing positive $(h_t)_{t \geq 1}$, Algorithm~\ref{algo:PGDloglogMath} satisfies for $T \geq 2$
  \begin{align*}
    k_T \leq k^* + \frac{1}{2}\log_2(k^*) + \frac{5}{4}\qquad\text{and}\qquad \gamma_{k_T} \leq \frac{5}{2}\sqrt{k^*}\pa{\gamma_0 2^{k^*}}\enspace.
  \end{align*}
  where $k^*$ is such that $\gamma_0 2^{k^* - 1} \leq \|x_1 - x_*\| \vee \gamma_0 \leq \gamma_0 2^{k^*}$. Furthermore, $k_T = 1$ if $k^*=1$.
\end{lemma}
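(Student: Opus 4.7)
The plan is to derive a deterministic tracking bound on $\|x_t^+(k) - x_1\|$ valid for every candidate index $k$, compare it with the threshold $B_{t+1}(k)$ on Line~\ref{alg:B}, and thereby reduce the control of $k_T$ to an elementary scalar inequality in $k$ and $k^*$. First I would establish the tracking bound: the standard one-step projected (sub)gradient descent identity, which relies only on convexity and $f(x_s) \geq f(x_*)$, yields $\|x_{s+1} - x_*\|^2 \leq \|x_s - x_*\|^2 + \gamma_{k_s}^2\|g_s\|^2/h_s^2$. Telescoping this over $s = 1,\ldots,t-1$ and applying the same estimate once more to the probe $x_t^+(k)$ gives
\[
\|x_t^+(k) - x_*\|^2 \leq \|x_1 - x_*\|^2 + \Gamma_t^2 + \gamma_k^2 \|g_t\|^2/h_t^2,
\]
and the triangle inequality together with $\sqrt{a+b} \leq \sqrt{a} + \sqrt{b}$ then produces $\|x_t^+(k) - x_1\| \leq 2\|x_1 - x_*\| + \sqrt{\Gamma_t^2 + \gamma_k^2\|g_t\|^2/h_t^2}$.

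Next I would compare with $B_{t+1}(k) = 2\gamma_k/\sqrt{k} + \sqrt{\Gamma_t^2 + \gamma_k^2\|g_t\|^2/h_t^2}$: the acceptance test on Line~\ref{alg:line_bound} is automatically met as soon as $\gamma_k/\sqrt{k} \geq \|x_1 - x_*\|$. Using the defining inequality $\|x_1 - x_*\| \leq D_{\gamma_0} \leq \gamma_{k^*} = \gamma_0 2^{k^*}$, this simplifies to $2^{k-k^*} \geq \sqrt{k}$, i.e.\ $k - k^* \geq \tfrac{1}{2}\log_2 k$. Since by construction $k_t = \min\{k \geq k_{t-1} : \text{test passes}\}$ and the resulting sequence is non-decreasing with $k_0 = 1$, a direct induction on $t$ shows that $k_T$ is bounded above by the smallest positive integer satisfying $k - k^* \geq \tfrac{1}{2}\log_2 k$.

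It remains to solve this scalar inequality. Setting $\bar k = k^* + \tfrac{1}{2}\log_2(k^*) + \tfrac{5}{4}$, the condition $\bar k - k^* \geq \tfrac{1}{2}\log_2 \bar k$ rearranges to $k^* \cdot 2^{5/2} \geq \bar k$, which holds for every $k^* \geq 1$ since the left-hand side is linear and the right-hand side only logarithmic in $k^*$, and the inequality is easy to check at $k^* = 1$. This is the first claim; the second follows from $\gamma_{k_T} \leq \gamma_0\cdot 2^{\bar k} = \gamma_{k^*}\cdot 2^{5/4}\sqrt{k^*} \leq \tfrac{5}{2}\sqrt{k^*}\,\gamma_{k^*}$, using $2^{5/4} \leq 5/2$. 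The degenerate case $k^* = 1$ gives $\|x_1 - x_*\| \leq D_{\gamma_0} \leq \gamma_1$, so the test at $k = 1$ reduces to $2\|x_1-x_*\| \leq 2\gamma_1$ and is immediate, yielding $k_T = 1$. The only real subtlety I expect is making sure the tracking bound is written in the form $\Gamma_t^2 + \gamma_k^2\|g_t\|^2/h_t^2$ that \emph{exactly matches} the variance term under the square root of $B_{t+1}(k)$; that matching is what allows the slack $2\gamma_k/\sqrt{k}$, rather than the cruder $3\gamma_k$ from Section~\ref{sec:warm-up}, to suffice for acceptance and replaces a full $\log_2(D_{\gamma_0}/\gamma_0)$ factor in $k_T$ by its square root plus the benign $\tfrac{1}{2}\log_2 k^*$ correction.
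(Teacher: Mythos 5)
Your overall route is essentially the paper's: the telescoped deviation bound $\|x_t^+(k)-x_*\|^2 \le \|x_1-x_*\|^2+\Gamma_t^2+\gamma_k^2\|g_t\|^2/h_t^2$, the triangle inequality, the observation that the test on Line~\ref{alg:line_bound} passes automatically once $\gamma_k/\sqrt{k}\ge \gamma_{k^*}\,(\ge \|x_1-x_*\|)$, and the induction showing $k_T$ is at most the smallest integer $K$ with $2^{K}/\sqrt{K}\ge 2^{k^*}$; this is exactly the combination of Lemma~\ref{lem:deviation} and Lemma~\ref{lem:bark} used in the paper, and your handling of the case $k^*=1$ is also the paper's.

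However, the final scalar step has a genuine integrality gap. What you verify is that the \emph{real number} $y_0=k^*+\tfrac12\log_2(k^*)+\tfrac54$ satisfies $y_0-k^*\ge\tfrac12\log_2 y_0$ (equivalently $2^{5/2}k^*\ge y_0$). But the quantity controlling $k_T$ is the smallest \emph{integer} $K$ with $K-k^*\ge\tfrac12\log_2 K$, and from your check you can only conclude $K\le\lceil y_0\rceil$, which may exceed $y_0$ whenever no integer lies between the true crossing point and $y_0$. As written, your argument therefore yields only $k_T\le k^*+\tfrac12\log_2(k^*)+\tfrac94$ and hence $\gamma_{k_T}\le 2^{9/4}\sqrt{k^*}\,\gamma_0 2^{k^*}\approx 4.76\sqrt{k^*}\,\gamma_0 2^{k^*}$, not the stated constants $\tfrac54$ and $\tfrac52$. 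The paper avoids this by checking the condition at $k^*+\tfrac12\log_2(k^*)+\tfrac14$, i.e.\ proving $\sqrt2\,k^*\ge k^*+\tfrac12\log_2(k^*)+\tfrac14$, and then paying the $+1$ for the ceiling (Lemma~\ref{lem:bark}). Note that this tighter inequality is not a triviality of ``linear beats logarithmic'': at $k^*=2$ it reads $2.828\ge 2.75$, which is why the paper proves it by induction using $\log_2(1+1/k^*)\le 1/(k^*\ln 2)$. With this fix, the rest of your argument (including $2^{5/4}\le 5/2$ for the bound on $\gamma_{k_T}$) goes through.
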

As a direct consequence of the above lemma and recalling that $D_{\gamma_0} = \|x_1 - x_*\|\vee\gamma_0$, we obtain
\begin{align}
    \label{eq:bound_on_k_T}
    \sqrt{k_T} \leq \sqrt{2}\sqrt{\log_2\pa{\frac{D_{\gamma_0}}{\gamma_0}} + 1} \qquad\text{and}\qquad \gamma_{k_T} \leq 5D_{\gamma_0}\sqrt{\log_2\pa{\frac{D_{\gamma_0}}{\gamma_0}} + 1}\enspace,
\end{align}
that is, $k_t$ takes at most $2 (\log_2(D_{\gamma_0} / \gamma_0) + 1)$ values.

\begin{proof}[Proof of Lemma~\ref{lem:number_of_phases}]
Lemma~\ref{lem:deviation} in Appendix, applied by phases, implies that for all $t \geq 1$ and $k \geq 1$ we have
\begin{align*}
    \norm{x_t^+(k) - x^*}^2 \leq \norm{x_1 - x_*}^2 + \Gamma_t^2 + \gamma_k^2\frac{\|g_t\|^2}{h^2_t}\enspace.
\end{align*}
Thus, the triangle inequality, yields
\begin{align*}
    \norm{x_t^+(k) - x_1}
    &\leq 2\norm{x_1 - x_*} + \sqrt{\Gamma_t^2 + \gamma_k^2\frac{\|g_t\|^2}{h^2_t}} %\gamma_k\frac{\|g_t\|}{h(S_t)} + \Gamma_t
    \leq
    2D_{\gamma_0} + \sqrt{\Gamma_t^2 + \gamma_k^2\frac{\|g_t\|^2}{h^2_t}}  %\gamma_k\frac{\|g_t\|}{h(S_t)} + \Gamma_t
    \enspace,
\end{align*}
where $D_{\gamma_0} = \norm{x_1 - x_*} \vee \gamma_0$. Let $\bar{k}$ be the smallest integer such that ${2^{\bar{k}}}/{\sqrt{\bar k}} \geq 2^{k^*}$. Then, for any $k \geq 1$ and any $t\geq 1$
\begin{align*}
    \norm{x_t^+(k) - x_1} \leq \frac{2\gamma_{\bar k}}{\sqrt{\bar k}} + \sqrt{\Gamma_t^2 + \gamma_k^2\frac{\|g_t\|^2}{h^2_t}} \enspace. %\gamma_k\frac{\|g_t\|}{h(S_t)} + \Gamma_t\enspace.
\end{align*}
In particular, the above implies that $\norm{x_t^+(\bar{k}) - x_1} \leq  B_{t+1}(\bar k)$ for all $t\geq 1$.
%\begin{align*}
%    \norm{x_t^+(\bar{k}) - x_1} \leq \gamma_{\bar k}\pa{\frac{2}{\sqrt{\bar k}} + \frac{\|g_t\|}{h(S_t)}} + \Gamma_t= B_{t+1}(\bar k)\qquad \forall t \geq 1\enspace.
%\end{align*}
Thus, once $k_t$ reaches $\bar k$ on Line~\ref{alg:line_bound} of Algorithm~\ref{algo:PGDloglogMath}, it never changes its value. That is, $k_T \leq \bar{k}$. Lemma~\ref{lem:bark} in Appendix shows that $\bar k \leq k^* + 0.5\log_2(k^*) + 1.25$ and $\bar{k} = 1$ if $k^*=1$, which concludes the proof.
\end{proof}
\subsection{Applications of Theorem~\ref{thm:any_h}: specific choices of $(h_t)_{t \geq 1}$}
Theorem~\ref{thm:any_h} and Lemma~\ref{lem:number_of_phases} yield the main result of this work---theorem announced in Section~\ref{sec:main_result}.
\begin{corollary}
\label{cor:main}
    Under assumptions of Theorem~\ref{thm:any_h}. Let $H(x) = \sqrt{(x+1)\log(\e(x+1))}$.
    Setting $h_t = H(S_t)$ and $D_{\gamma_0} = \|x_1 - x_*\|\vee\gamma_0$, Algorithm~\ref{algo:PGDloglogMath} satisfies
  %   \begin{align*}
  %       R_T
  %       \leq
  %       H(S_{T+1})D_{\gamma_0}\sqrt{\log_2\pa{\frac{D_{\gamma_0}}{\gamma_0}} + 1}
  %       \Bigg[\,
  %       &\sqrt{8}
  %       \pa{2 {+} \sqrt{\frac{1}{3}\log(\log(\e(1 + S_T)))}}\\
  %       &+ 5\log(\log(\e(1 + S_T)))
  %       \, \Bigg]\enspace.
  % \end{align*}
  \begin{align*}
        \sum_{t = 1}^T(f(x_t) - f(x_*))
        \leq
        D_{\gamma_0}H(S_{T+1})\sqrt{\log_2\pa{\frac{2D_{\gamma_0}}{\gamma_0}}}
        \Bigg[\,
        6\log(\log(\e(1 + S_T))) + 6.5
        \, \Bigg]\enspace.
  \end{align*}
\end{corollary}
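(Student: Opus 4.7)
The plan is to specialize Theorem~\ref{thm:any_h} to the choice $h_t = H(S_t)$ and to bound each of the three summands appearing in its right-hand side. Write $\Sigma_T := \sum_{t=1}^T \|g_t\|^2/h_t^2$ and $\ell_T := \log\log(\e(1+S_T))$. Of the ingredients appearing in the bound of Theorem~\ref{thm:any_h}, three are already controlled for free: $\|x_1 - x_*\| \leq D_{\gamma_0}$ by definition; and the inequalities $\sqrt{k_T} \leq \sqrt{2}\sqrt{\log_2(2D_{\gamma_0}/\gamma_0)}$ and $\gamma_{k_T} \leq 5D_{\gamma_0}\sqrt{\log_2(2D_{\gamma_0}/\gamma_0)}$ are exactly the content of~\eqref{eq:bound_on_k_T}, which packages Lemma~\ref{lem:number_of_phases}. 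So the only genuine step is to control $\Sigma_T$.

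For $\Sigma_T$, the plan is to use a standard integral-comparison argument tailored to the definition of $H$. The function $u \mapsto 1/\pa{(u+1)\log(\e(1+u))}$ is positive and non-increasing on $[0,\infty)$, so $\|g_t\|^2/h_t^2 = (S_t - S_{t-1})/\pa{(S_t+1)\log(\e(1+S_t))}$ is dominated by $\int_{S_{t-1}}^{S_t} du/\pa{(u+1)\log(\e(1+u))}$. Summing and applying the substitution $v = \log(\e(1+u))$ telescopes the integral to $\int_1^{\log(\e(1+S_T))} dv/v = \ell_T$, which yields $\Sigma_T \leq \ell_T$. This is the only real calculation and is the main (though mild) obstacle, because it is the place where the precise shape of $H$ is actually used to produce the $\log\log$ factor.

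It remains to assemble. Writing $L := \sqrt{\log_2(2D_{\gamma_0}/\gamma_0)}$, substituting the four bounds into Theorem~\ref{thm:any_h} yields
\[
R_T \leq H(S_{T+1})\,D_{\gamma_0}\,L\,\pa{4\sqrt{2} + \tfrac{2\sqrt{2}}{\sqrt{3}}\sqrt{\ell_T} + 5\,\ell_T}\enspace.
\]
A single application of the AM--GM inequality in the form $\tfrac{2\sqrt{2}}{\sqrt{3}}\sqrt{\ell_T} = 2\sqrt{(2/3)\cdot\ell_T}\leq \tfrac{2}{3} + \ell_T$ collapses the bracket to at most $6\,\ell_T + 4\sqrt{2} + \tfrac{2}{3} \leq 6\,\ell_T + 6.5$, matching the constants $6\log\log(\e(1+S_T)) + 6.5$ claimed in the corollary. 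Everything beyond the integral comparison for $\Sigma_T$ is bookkeeping of constants and direct substitution of results already established in the excerpt.
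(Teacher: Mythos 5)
Your proposal is correct and follows essentially the same route as the paper: specialize Theorem~\ref{thm:any_h} to $h_t=H(S_t)$, bound $\sum_{t\leq T}\|g_t\|^2/h_t^2$ by $\log\log(\e(1+S_T))$, plug in the bounds \eqref{eq:bound_on_k_T} on $\sqrt{k_T}$ and $\gamma_{k_T}$, and absorb the $\sqrt{\ell_T}$ term via $2ab\leq a^2+b^2$ to reach $6\ell_T+6.5$. Your integral-comparison step is just an equivalent rephrasing of the paper's Lemma~\ref{lem:sums2} (telescoping the concave function $x\mapsto\log\log(\e(1+x))$, whose derivative is the summand's weight), and the constant bookkeeping ($4\sqrt{2}+2/3\leq 6.5$) checks out.
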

% \evg{Modified from here to Cor 3. Erased Cor 4.}
While the above choice of $(h_t)_{t \geq 1}$ gives nearly optimal rates, it is not standard in the literature. Let us highlight the usefulness of Theorem~\ref{thm:any_h} by providing some instantiations which correspond to other, more common, but less optimal, examples.

% Below we provide instantiations of Theorem~\ref{thm:any_h} with more usual choices of $(h_t)_{t \geq 1}$, but with additional $\log$ factors.

The standard \AdaGrad corresponds to $h_t = \sqrt{S_t}$~\cite{streeter2010less}. The main inconvenience of this choice, is that the term $\sum_{t \leq T} {\|g_t\|^2}/{S_t}$ is not bounded uniformly by a non-decreasing function of $S_T$.
Indeed, assume that $\|g_t\|^2 = 1 / T$ for all $t = 1, \ldots, T$, then $S_t = t / T \leq 1$ and
% \begin{align*}
    $\sum_{t \leq T} {\|g_t\|^2}/{S_t} \approx \log(T)$.
    % \enspace.
% \end{align*}
% \ch{Pour ce qui suit, je suis pour que tu \'ecrives aux auteurs pour leur signaler l'erreur, sans qu'on cite le lemme faux dans le papier \`a savoir cette phrase: In particular,~\cite[Lemma 15]{defazaio2023learning} does not hold in general since they claim that $\sum_{t = 1}^T \frac{\|g_t\|^2}{S_t} \leq 1 + \log(1 + S_T)$.}
It is possible, however, to write $\sum_{t \leq T} {\|g_t\|^2}/{S_t} \leq 1 + \log(S_T / \|g_1\|^2)$, which involves additional dependency on the gradient at initialization. All in all, we can state the following corollary.
\begin{corollary}
    \label{cor:adagrad_real}
    Under assumptions of Theorem~\ref{thm:any_h}.
    Setting $h_t = \sqrt{S_t}$ and $D_{\gamma_0} = \|x_1 - x_*\|\vee\gamma_0$, Algorithm~\ref{algo:PGDloglogMath} satisfies
    \begin{align*}
    \sum_{t = 1}^T(f(x_t) - f(x_*)) \leq D_{\gamma_0}\sqrt{S_{T+1}\log_2\pa{\frac{2D_{\gamma_0}}{\gamma_0}}}\
    \Bigg[6\log\pa{\frac{\e S_T}{\|g_1\|^2}} + 6.5\Bigg]\enspace.
 %   \Bigg[5\log\pa{1 + \frac{S_T}{\varepsilon}} + 2\sqrt{\frac{2}{3}\log\pa{1 + \frac{S_T}{\varepsilon}}}
    %&+4\sqrt{2}\Bigg]\enspace.
  \end{align*}
\end{corollary}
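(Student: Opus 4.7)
The plan is to instantiate Theorem~\ref{thm:any_h} with $h_t = \sqrt{S_t}$ and control each resulting term separately. With this choice, $h_{T+1} = \sqrt{S_{T+1}}$ and the master bound becomes
\[
\sqrt{S_{T+1}}\Big(2\|x_1-x_*\|\sqrt{k_T}\bigl(2 + \sqrt{A_T/3}\bigr) + \gamma_{k_T}\,A_T\Big),\qquad A_T := \sum_{t=1}^T \|g_t\|^2/S_t.
\]
Three quantities must be bounded: $A_T$, $\sqrt{k_T}$, and $\gamma_{k_T}$.

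For $A_T$, I would use the standard telescoping argument foreshadowed above the corollary statement. Since $S_t-S_{t-1} = \|g_t\|^2$ and $\log$ is concave, $(S_t-S_{t-1})/S_t \leq \log(S_t/S_{t-1})$ for $t\geq 2$, while the $t=1$ term equals $1$ with $S_1 = \|g_1\|^2$; summing yields $A_T \leq 1 + \log(S_T/\|g_1\|^2) = \log(\e S_T/\|g_1\|^2) =: L_T$ (assuming $\|g_1\| > 0$; otherwise the trajectory is stationary at $x_1$ and the regret vanishes). For the phase quantities, I would invoke Lemma~\ref{lem:number_of_phases} through~\eqref{eq:bound_on_k_T}: setting $L_0 := \log_2(2D_{\gamma_0}/\gamma_0)$, one has $\sqrt{k_T} \leq \sqrt{2L_0}$ and $\gamma_{k_T} \leq 5 D_{\gamma_0}\sqrt{L_0}$. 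Plugging these in and using $\|x_1-x_*\| \leq D_{\gamma_0}$, the bound reduces to
\[
D_{\gamma_0}\sqrt{S_{T+1}\,L_0}\,\bigl(4\sqrt{2} + 2\sqrt{2/3}\,\sqrt{L_T} + 5L_T\bigr).
\]

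The final step is to absorb the bracket into the target form $6L_T + 6.5$. This is pure constant chasing: the AM-GM inequality $(\sqrt{L_T} - \sqrt{2/3})^2 \geq 0$ gives $2\sqrt{2/3}\,\sqrt{L_T} \leq L_T + 2/3$, and since $4\sqrt{2} + 2/3 \approx 6.32 < 6.5$, the desired inequality follows. I expect no real obstacle here, only bookkeeping; the one subtle point worth flagging is that the cross-term in $\sqrt{L_T}$ must be absorbed into $L_T$ via AM-GM rather than dominated by a larger $\log$ factor, otherwise the advertised leading constant in front of $L_T$ would worsen unnecessarily.
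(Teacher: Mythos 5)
Your proposal is correct and follows essentially the same route as the paper's proof: instantiate Theorem~\ref{thm:any_h} with $h_t=\sqrt{S_t}$, bound $\sum_{t\le T}\|g_t\|^2/S_t \le \log(\e S_T/\|g_1\|^2)$ (your telescoping argument is exactly Lemma~\ref{lem:sums3}), control $k_T$ and $\gamma_{k_T}$ via Eq.~\eqref{eq:bound_on_k_T}, and absorb the cross term with $2ab\le a^2+b^2$ to reach the constants $6$ and $6.5$. The only difference is cosmetic: you re-derive the summation lemma inline instead of citing it.
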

An attractive feature of this bound is its scale-invariance---multiplying $f$ by some constant, multiplies the bound by the same constant.
% Furthermore, note that thanks to convexity $\|g_1\|\|x_1 - x_*\| \geq f(x_1) - f(x_*)$. Thus, the dependency on $\|g_1\|$ in Corollary~\ref{cor:adagrad_real} can be replaced by the dependency on $f(x_1) - f(x_*)$ and additional logarithmic terms in $\|x_1 - x_*\|$.

% Nevertheless, %if one gives up the scale-invariance,
The dependency on the initial gradient can be avoided setting $h_t = \sqrt{\varepsilon + S_t}$ with arbitrary $\varepsilon > 0$, as it is usually done in practice with \AdaGrad, and initially proposed in~\cite{duchi2011adaptive}.
\begin{corollary}
\label{cor:adagrad_like}
Under assumptions of Theorem~\ref{thm:any_h}. Let $h_t  = \sqrt{\varepsilon + S_{t}}$\,, for some $\varepsilon > 0$.
    Setting  $D_{\gamma_0} = \|x_1 - x_*\|\vee\gamma_0$, Algorithm~\ref{algo:PGDloglogMath} satisfies
    \begin{align*}
    \sum_{t = 1}^T(f(x_t) - f(x_*)) \leq D_{\gamma_0}\sqrt{\pa{S_{T+1}+\varepsilon}\log_2\pa{\frac{2D_{\gamma_0}}{\gamma_0}}}\
    \Bigg[6\log\pa{1 + \frac{S_T}{\varepsilon}} + 6.5\Bigg]\enspace.
 %   \Bigg[5\log\pa{1 + \frac{S_T}{\varepsilon}} + 2\sqrt{\frac{2}{3}\log\pa{1 + \frac{S_T}{\varepsilon}}}
    %&+4\sqrt{2}\Bigg]\enspace.
  \end{align*}
\end{corollary}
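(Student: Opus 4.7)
The corollary is a direct specialization of Theorem~\ref{thm:any_h} combined with the bounds on $k_T$ and $\gamma_{k_T}$ from~\eqref{eq:bound_on_k_T}. The only additional work is to control the quantity $\sum_{t=1}^{T}\|g_t\|^2/h_t^2$ which in this case equals $\sum_{t=1}^{T}\|g_t\|^2/(\varepsilon + S_t)$.

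\textbf{Step 1: Control $\sum_t \|g_t\|^2/h_t^2$.} Writing $\|g_t\|^2 = S_t - S_{t-1}$ and using the elementary inequality $u/(v+u) \leq \log(1 + u/v)$ valid for all $u \geq 0$, $v > 0$ (consequence of $\log(1+y) \geq y/(1+y)$ for $y \geq 0$), I get a telescoping bound
\begin{align*}
\sum_{t=1}^{T}\frac{\|g_t\|^2}{\varepsilon + S_t} \leq \sum_{t=1}^{T}\log\!\pa{\frac{\varepsilon + S_t}{\varepsilon + S_{t-1}}} = \log\!\pa{1 + \frac{S_T}{\varepsilon}}\enspace.
\end{align*}

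\textbf{Step 2: Plug into Theorem~\ref{thm:any_h}.} With $h_{T+1} = \sqrt{\varepsilon + S_{T+1}}$, substitute Step~1 and the bounds $\sqrt{k_T} \leq \sqrt{2}\sqrt{\log_2(D_{\gamma_0}/\gamma_0)+1}$, $\gamma_{k_T} \leq 5 D_{\gamma_0}\sqrt{\log_2(D_{\gamma_0}/\gamma_0)+1}$ from~\eqref{eq:bound_on_k_T}, together with $\|x_1-x_*\| \leq D_{\gamma_0}$. Factoring out $D_{\gamma_0}\sqrt{(S_{T+1}+\varepsilon)\log_2(2D_{\gamma_0}/\gamma_0)}$ (using $\log_2(2D_{\gamma_0}/\gamma_0) = 1 + \log_2(D_{\gamma_0}/\gamma_0)$), the bracketed factor becomes, schematically,
\begin{align*}
2\sqrt{2}\pa{2 + \sqrt{\tfrac{1}{3}\log(1+S_T/\varepsilon)}} + 5\log(1 + S_T/\varepsilon)\enspace.
\end{align*}

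\textbf{Step 3: Consolidate constants.} To match the final form $6\log(1+S_T/\varepsilon) + 6.5$, use AM--GM in the form $a\sqrt{x} \leq a^2/4 + x$ to absorb the $\sqrt{\log(1+S_T/\varepsilon)}$ term into the logarithmic one: with $a = 2\sqrt{2}/\sqrt{3}$ this gives a constant $a^2/4 = 2/3$, and combined with $4\sqrt{2} \approx 5.66$ yields an additive constant below $6.5$, while the remaining logarithmic contribution stays below $6\log(1+S_T/\varepsilon)$. This produces exactly the stated bracket.

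\textbf{Expected obstacle.} There is no real obstacle; the only thing to watch is the constant juggling in Step~3, making sure the two numerical constants $6$ and $6.5$ are respected. The same structure of proof handles Corollary~\ref{cor:main} (with the tighter $h_t = H(S_t)$) and Corollary~\ref{cor:adagrad_real} (with $h_t = \sqrt{S_t}$, where instead of telescoping from $S_0 = 0$ one telescopes from $t=2$ and treats $t=1$ separately, producing the $\log(\e S_T/\|g_1\|^2)$ factor).
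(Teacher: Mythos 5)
Your proposal is correct and follows essentially the paper's own proof: the paper also bounds $\sum_{t\leq T}\|g_t\|^2/(\varepsilon+S_t)$ by $\log(1+S_T/\varepsilon)$ (via Lemma~\ref{lem:sums4}, a concavity/telescoping bound equivalent to your Step 1), plugs this into Theorem~\ref{thm:any_h} together with~\eqref{eq:bound_on_k_T}, and absorbs the $\sqrt{\log(1+S_T/\varepsilon)}$ term with the same AM--GM step ($2ab\leq a^2+b^2$) to reach the constants $6$ and $6.5$. Your constant check ($4\sqrt{2}+2/3\approx 6.32\leq 6.5$ and coefficient $5+1=6$) is exactly what the paper's ``re-arranging'' amounts to.
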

% \chg{ici et plus bas, on peut gagner en mettant $R_{T}\leq ...$ et en enlevant frac }
Note that compared to Corollary~\ref{cor:main}, the above bound contains an additional $\sqrt{\log(1 + S_T)}$ multiplicative factor, but it improves upon that of Corollary~\ref{cor:adagrad_real}. Finally, we can also recover the results claimed in the end of Section~\ref{sec:warm-up}, where $f$ is assumed to be $L$-Lipschitz, see Appendix~\ref{app:cor:proof} for details.
% \chg{si besoin pour gagner de la place on peut mettre see Corollary \ref{cor:simple} in Appendix \ref{app:cor:proof}}

% \begin{algorithm}[ht]
% \DontPrintSemicolon
% \caption{PGD with any $h$}\label{algo:PGDh}
% \SetKwInput{Input}{Input}
%    \SetKwInOut{Output}{Output}
%    \SetKwInput{Initialization}{Initialization}
%    \Input{$x_1 \in \bbR^d, \Theta \subset \bbR^d, \gamma_0 > 0$}
%    \Initialization{$\Gamma^2_1 = 0, k_0 = 1, S_0 = 0, \gamma_k = \gamma_0 2^k \text{ for $k \geq 1$}$}

% \For{$t \geq 1$}{

%     $g_t \in \partial f(x_t)$  \tcp*[r]{{\small get subgradient}}

%     $S_t = S_{t - 1} + \|g_t\|^2$ \tcp*[r]{{\small cumulative grad-norm}}

%     $B_{t + 1}(k) = \gamma_k\pa{\frac{2}{\sqrt{k}} + \frac{\|g_t\|}{h(S_t)}} + \Gamma_t$ \tcp*[r]{\small define the threshold}

%     $x^+_{t}(k) = \Proj_{\Theta}\pa{x_t - \frac{\gamma_k}{h(S_t)}g_t}$ \tcp*[r]{\small probing step}

%     $k_t = \min\left\{k \geq k_{t-1} \,:\, \|x^+_{t}(k) - x_1\| \leq B_{t + 1}(k)\right\}$ \tcp*[r]{\small find the step size}

%     $x_{t + 1} = x^+(k_t)$ \tcp*[r]{\small make the step}

%     $\Gamma_{t+1}^2 = \Gamma_t^2 + \gamma_{k_t}^2\frac{\|g_t\|^2}{h^2(S_t)}$ \tcp*[r]{\small update $\Gamma^2_t$}

%     }

%     \Output{Trajectory $(x_t)_{t \geq 1}$}

% \end{algorithm}

% \ch{I feel that it is not useful to introduce algo 2. We can just mention ``algorithm 1'' with any non-decreasing $h$, no?}

\begin{algorithm}[t!]
\DontPrintSemicolon
\caption{Stochastic case}\label{algo:stoch}
\SetKwInput{Input}{Input}
   \SetKwInOut{Output}{Output}
   \SetKwInput{Initialization}{Initialization}
   \Input{$x_1 \in \bbR^d, \Theta \subset \bbR^d, \gamma_0 > 0, L > 0, T, \delta > 0$}
   \Initialization{$\Gamma^2_1 = 0, k_0 = 1, S_0 = 0, h : \bbR \to \bbR, \gamma_k = \gamma_0 2^k \text{ for $k \geq 1$}, \ell_{T}(\delta):= 1\vee \log(\log_{2}(2T)/\delta)$}

\For{$t = 1, \ldots, T$}{

    $g_t \in \partial F(x_t, \xi_t)$  \tcp*[r]{{\small get subgradient}}

    $h_{t}=L\sqrt{T \ell_{T}(\delta/(1+k_{t-1})^2)}$  \tcp*[r]{{\small update $h_{t}$}}

    $x^+_{t}(k) = \Proj_{\Theta}\pa{x_t - \frac{\gamma_k}{h_t}g_t}$ \tcp*[r]{\small probing step}

    $k_t = \min\left\{k \geq k_{t-1} \,:\, \|x^+_{t}(k) - x_1\| \leq 38 \gamma_k\right\}$ \tcp*[r]{\small find the step size}

    % \If(\tcp*[f]{\small update reference point}){$k_t > k_{t-1}$}{
    % $T_{k_t} = t$
    % }

    $x_{t + 1} = x^+(k_t)$ \tcp*[r]{\small make the step
    }

    }

    \Output{Trajectory $(x_t)_{t = 1}^T$}

\end{algorithm}

\section{An extension to stochastic optimization}
\label{sec:SGD}

In this section we demonstrate that at least the warm-up analysis provided in Section~\ref{sec:warm-up} extends to the setup of stochastic optimization (see Algorithm~\ref{algo:stoch}), where the objective function takes the form
\begin{align*}
    f(x) = \E[F(x, \xi)]\enspace,
\end{align*}
and where we only have access to  $g_t \in \partial F(x_t, \xi_t)$, for some i.i.d. $(\xi_{t})_{t \geq 1}$.
As in~\cite{carmon2023making}, we make the following standard assumption on the regularity of $F(\cdot, \xi)$.
\begin{assumption}
    \label{ass:stoch}
    The mapping $x \mapsto F(x, \xi)$ is $L$-Lipschitz almost surely.
\end{assumption}
In some applications (e.g., linear contextual bandits), $L$ is actually known and the control of regret is necessary. For example, in linear Contextual Bandits with Knapsacks (lin-CBwK), having PGD strategy for unbounded $\Theta$, while still controlling the regret is needed~\cite[see e.g.,][]{agrawal2016linear}. Thus, our Algorithm~\ref{algo:stoch}, could bring new results in CBwK and related contexts.

We can state the following result concerning Algorithm~\ref{algo:stoch}.

\begin{theorem}
\label{thm:SGD}
Let Assumptions~\ref{ass:main} and~\ref{ass:stoch} be satisfied.
Define $\ell_T(\delta) = 1 \vee \log(\log_2(2T) / \delta)$ and $D_{\gamma_0} = \|x_1 - x_*\|\vee\gamma_0$.
For any $\gamma_{0} > 0$,
Algorithm~\ref{algo:stoch} satisfies with probability at least $1 - \delta$
  \begin{align*}
    \sum_{t = 1}^T(f(x_t) - f(x_*))
     &\leq
     3500D_{\gamma_0}L\sqrt{T}\log_2\pa{\frac{2D_{\gamma_0}}{\gamma_0}}\ell_T^{\sfrac12}\big({\sfrac{\delta}{\log_2^2(4D_{\gamma_0} / \gamma_0)}}\big)\enspace.
\end{align*}
% for $c = 2233$.
% where $D_{\gamma_0} = \|x_1 - x_*\| \vee \gamma_0$.
\end{theorem}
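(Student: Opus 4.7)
The plan is to adapt the warm-up analysis of Section~\ref{sec:warm-up}---which treated the $L$-Lipschitz deterministic setting with known horizon $T$---to the stochastic case by controlling the noise in $g_t$ via martingale concentration applied phase by phase. Within phase $k$ of Algorithm~\ref{algo:stoch} (that is, whenever $k_{t-1} = k$), the step size is $\gamma_k/h_t$ with $h_t = L\sqrt{T\ell_T(\delta/(1+k)^2)}$ constant, and the a.s.\ Lipschitz bound $\|g_t\| \leq L$ yields $\|g_t\|^2/h_t^2 \leq 1/T$. This puts us in essentially the same situation as the warm-up, up to an extra $\sqrt{\ell_T}$ factor to be absorbed in the final bound.

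To bypass the absence of the deterministic inequality $f(x_t) - f(x_*) \leq \langle g_t, x_t - x_*\rangle$, I would decompose, using convexity,
\[ f(x_t) - f(x_*) \leq \langle \nabla f(x_t),\,x_t - x_*\rangle = \langle g_t,\,x_t - x_*\rangle + M_t, \]
where $M_t = \langle \nabla f(x_t) - g_t,\,x_t - x_*\rangle$ is a bounded martingale difference with $|M_t| \leq 2L\|x_t - x_*\|$. The ``deterministic-looking'' part $\sum_t \langle g_t, x_t - x_*\rangle$ is then controlled as in the warm-up: the PGD quadratic expansion yields, within each phase $k$, a telescoping one-step bound, and combining the step-size threshold $\|x_t^+(k) - x_1\| \leq 38\gamma_k$ with a stochastic analogue of Lemma~\ref{lem:number_of_phases} gives a contribution of order $D_{\gamma_0}\,L\sqrt{T\ell_T}\,\log_2(D_{\gamma_0}/\gamma_0)$.

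The martingale contribution $\sum_t M_t$ is bounded phase by phase using Azuma--Hoeffding: within phase $k$, the threshold rule enforces $\|x_t - x_1\| \leq 38\gamma_k$, hence $|M_t| \leq 2L(38\gamma_k + D_{\gamma_0})$, so with probability at least $1 - \delta_k$ for $\delta_k = \delta/(1+k)^2$ the phase-$k$ partial sum of $M_t$ is bounded by $O\bigl(L\gamma_k\sqrt{T\log(1/\delta_k)}\bigr)$. A union bound over the at most $\log_2(4D_{\gamma_0}/\gamma_0)$ phases, using $\sum_{k \geq 0}(1+k)^{-2} < \infty$, produces total failure probability $\leq \delta$ together with exactly the scale $\sqrt{\ell_T(\delta/\log_2^2(4D_{\gamma_0}/\gamma_0))}$ announced in the statement. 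Summing the phase-wise martingale bounds and using $\gamma_{k_T} = O(D_{\gamma_0}\sqrt{\log_2(D_{\gamma_0}/\gamma_0)})$ yields a contribution of the same order as the deterministic part.

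The main obstacle is an apparent chicken-and-egg between the trajectory bound (needed to apply Azuma to $M_t$) and the martingale concentration (needed to bound the trajectory via the PGD recursion). The inflated threshold $38\gamma_k$ in Algorithm~\ref{algo:stoch} resolves this: by construction it enforces $\|x_t - x_1\| \leq 38\gamma_{k_T}$ almost surely, decoupling the concentration argument from the inductive distance-to-optimum bound. A secondary technical point is that the number-of-phases bound (stochastic analogue of Lemma~\ref{lem:number_of_phases}) must continue to hold with high probability; this is precisely why the threshold constant is inflated to $38$ (versus $3$ in the deterministic warm-up), so that the Azuma fluctuations cannot outpace the doubling of $\gamma_k$ and trigger spurious extra phases. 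Assembling the deterministic and martingale bounds with their universal constants produces the claimed $3500\,D_{\gamma_0}L\sqrt{T}\,\log_2(2D_{\gamma_0}/\gamma_0)\,\sqrt{\ell_T(\delta/\log_2^2(4D_{\gamma_0}/\gamma_0))}$ bound.
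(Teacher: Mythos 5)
Your overall route is the same as the paper's: decompose via the martingale differences $M_t=\scalar{\nabla f(x_t)-g_t}{x_t-x_*}$ (the paper's $X_t$ in Lemma~\ref{lem:SGD_high_proba}), control the PGD telescoping term phase by phase using the threshold $38\gamma_k$ to bound $\norm{x_{T_k}-x_{T_{k+1}}}$ by $O(\gamma_k)$, allocate confidence $\delta/(1+k)^2$ to phase $k$, and union bound using $k_T\leq k^*\leq\log_2(2D_{\gamma_0}/\gamma_0)$. The main cosmetic difference is the concentration tool: the paper uses a maximal Bernstein--Freedman inequality (Lemma~\ref{lm:Ber}) combined with a self-bounding argument that first extracts a trajectory bound $\norm{x_\tau'-x_*}\leq\norm{x_1'-x_*}+16\gamma$ and then feeds it back into the regret, whereas you use Azuma with the almost-sure increment bound $2L(38\gamma_k+D_{\gamma_0})$ enforced by the threshold; for the regret and martingale terms this works (take care that phases end at stopping times, so you need the stopped-martingale or Doob-maximal form of Azuma, and note the paper also splits off the first step of each phase because $h_{T_k}$ still uses $k_{T_k-1}$).

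There is, however, a genuine gap in the step you dismiss as secondary. Your claim that the inflated threshold ``decouples the concentration argument from the inductive distance-to-optimum bound'' is only true for the martingale and telescoping terms; it does \emph{not} by itself bound the number of phases. From the almost-sure threshold bound alone one only gets $\norm{x_t^+(k)-x_1}\leq 38\gamma_{k_{t-1}}+\gamma_k/\sqrt{T\ell_T}$, which shows $k_t\leq k_{t-1}+1$ but cannot rule out an increment at every round: a priori $k_T$ could grow linearly in $T$, making $\gamma_{k_T}$ exponentially large and destroying the bound. To get $k_T\leq k^*$ you need, with high probability, the within-phase trajectory bound $\norm{x_t-x_*}\leq\norm{x_{T_k+1}-x_*}+O(\gamma_k)$, obtained from the PGD recursion plus nonnegativity of $f(x_t)-f(x_*)$ plus the martingale concentration, and then unrolled geometrically across phases so that $\norm{x_t^+(k)-x_1}\leq 2\norm{x_1-x_*}+36\gamma_k\leq 38\gamma_k$ for all $k\geq k^*$ --- this is exactly the content of the paper's Lemma~\ref{lem:SGD_high_proba} and Eq.~\eqref{eq:SGD_triangle}, and it is where the constant $38$ is actually consumed. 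Your Azuma-based scheme can deliver such a bound (the a.s.\ increment bound $2L(38\gamma_k+D_{\gamma_0})$ is admissible there too), but you neither state it nor carry out the constant bookkeeping showing the accumulated per-phase drift stays under the threshold; as written, the proof of the phase-count bound, and hence of the theorem, is missing its central ingredient.
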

The above bound is of order $\cO(LD_{\gamma_0}\sqrt{T}\log(D_{\gamma_0})\log\log(TD_{\gamma_0}))$.
Note that if only the optimization error is of concern, and one does not wish to control the regret,~\cite{carmon2023making} provide a bound without $\log_2\pa{{2D_{\gamma_0}}/{\gamma_0}}$ using bisection algorithm and several restarted runs of SGD.

%\ch{on pourrait aussi vouloir une borne sur $\sum_{t = 1}^T(F(x_t,\xi_{t}) - F(x_{*},\xi_{t}))$}

\begin{figure}[t!]
 \centering
   \includegraphics[width=0.49\textwidth]{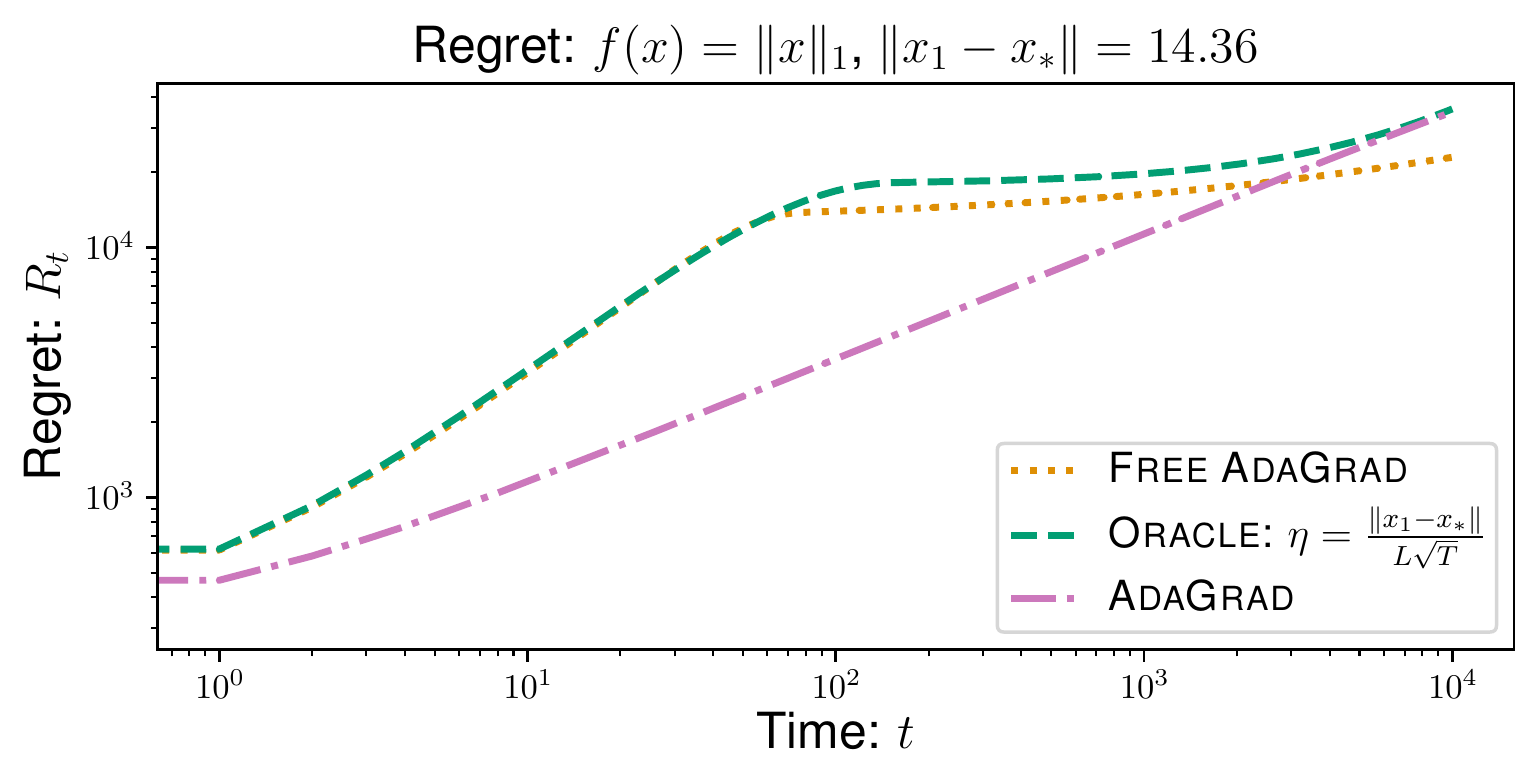}
   \hfill
   \includegraphics[width=0.49\textwidth]{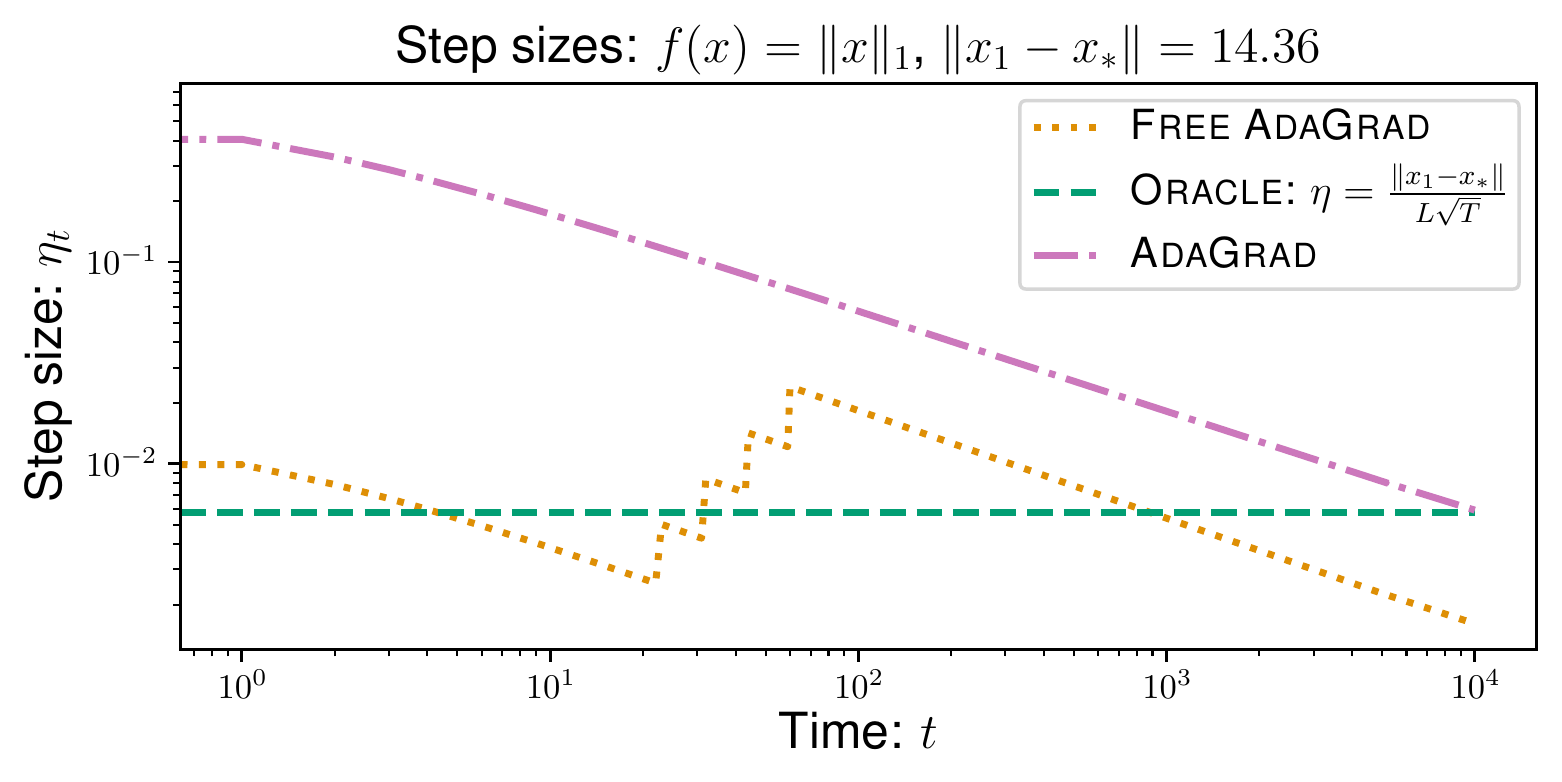}
   % \hfill
   % \includegraphics[width=0.32\textwidth]{../content/plots/1linf.pdf}

   \includegraphics[width=0.49\textwidth]{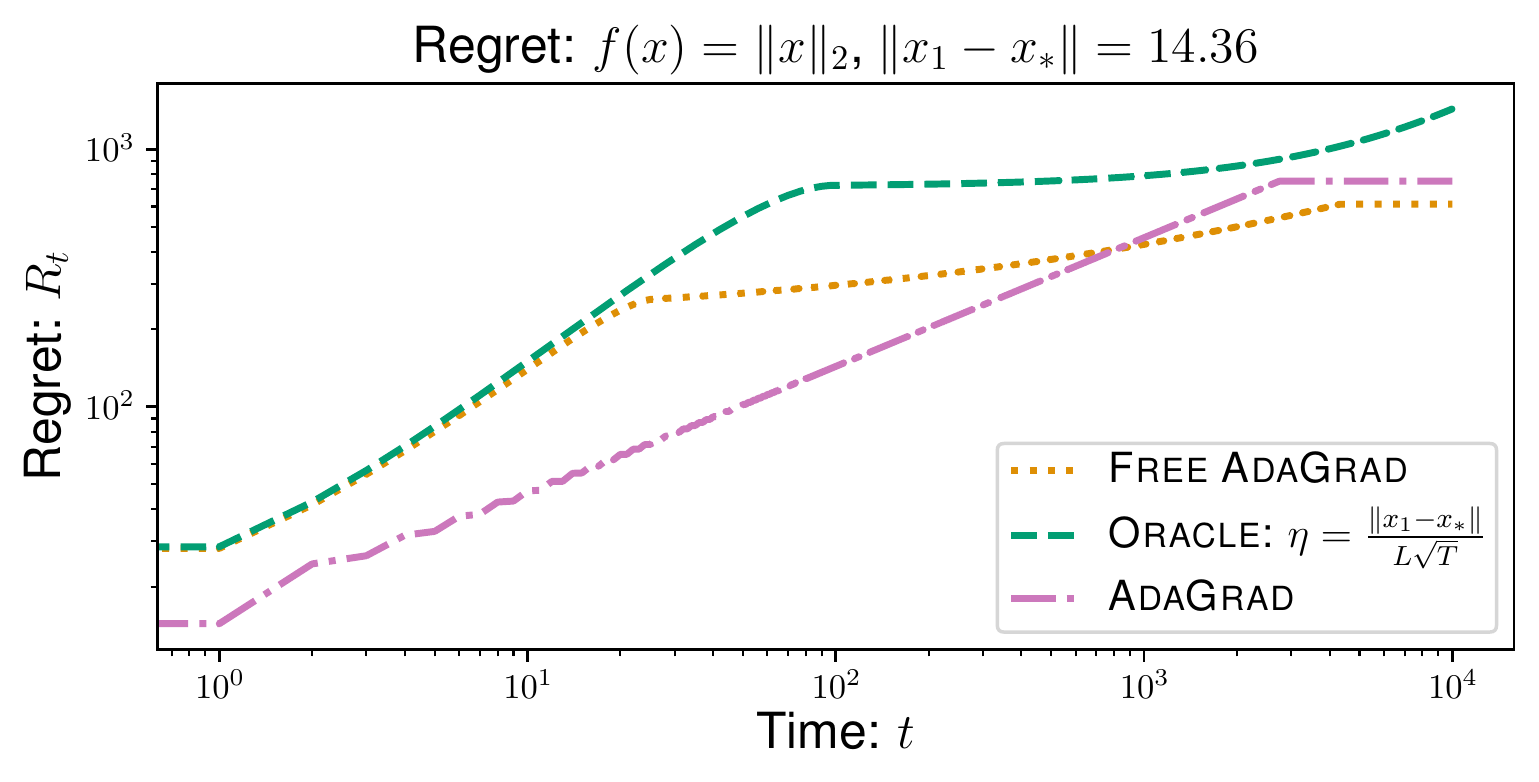}
   \hfill
   \includegraphics[width=0.49\textwidth]{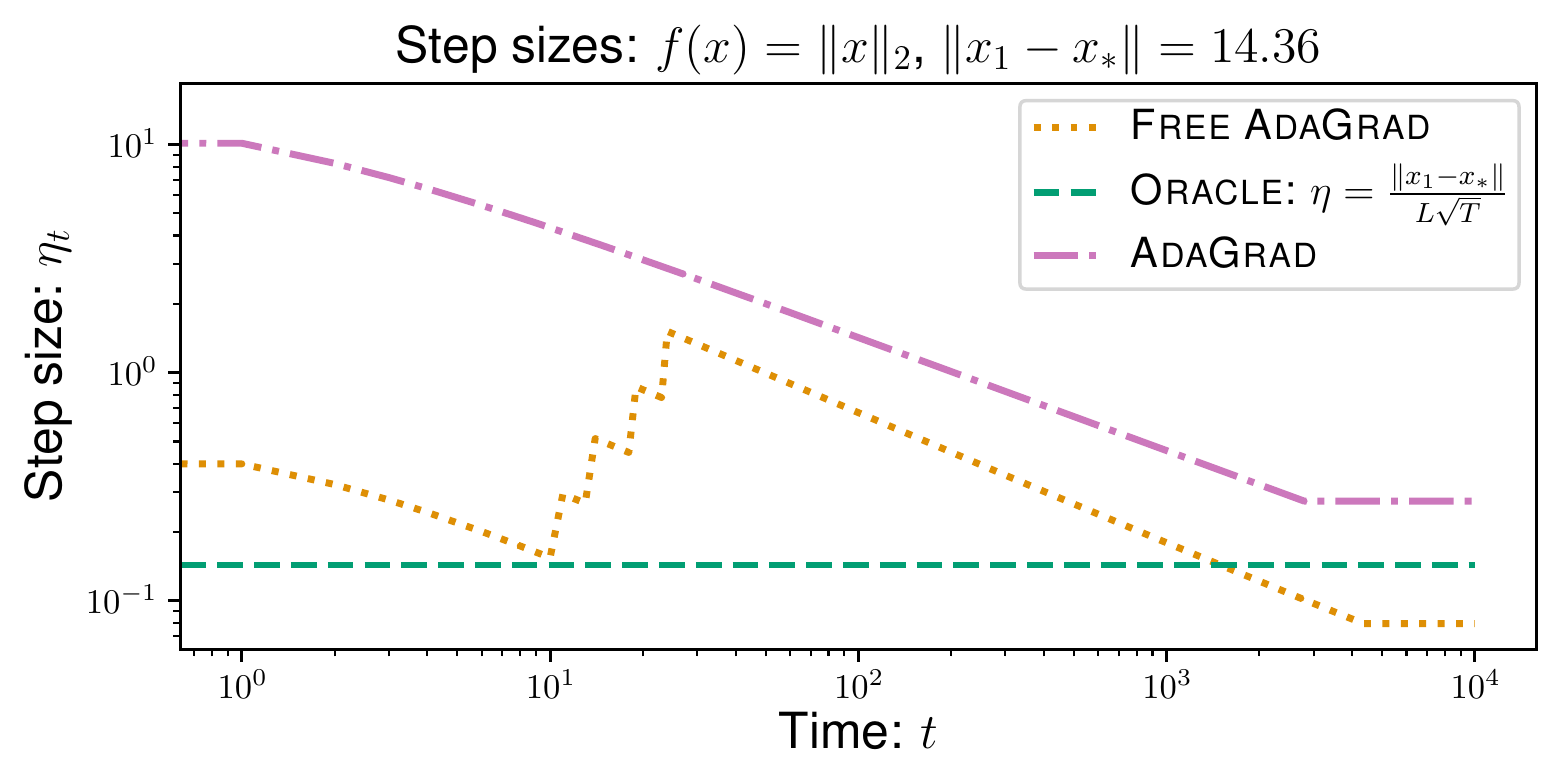}

    \includegraphics[width=0.49\textwidth]{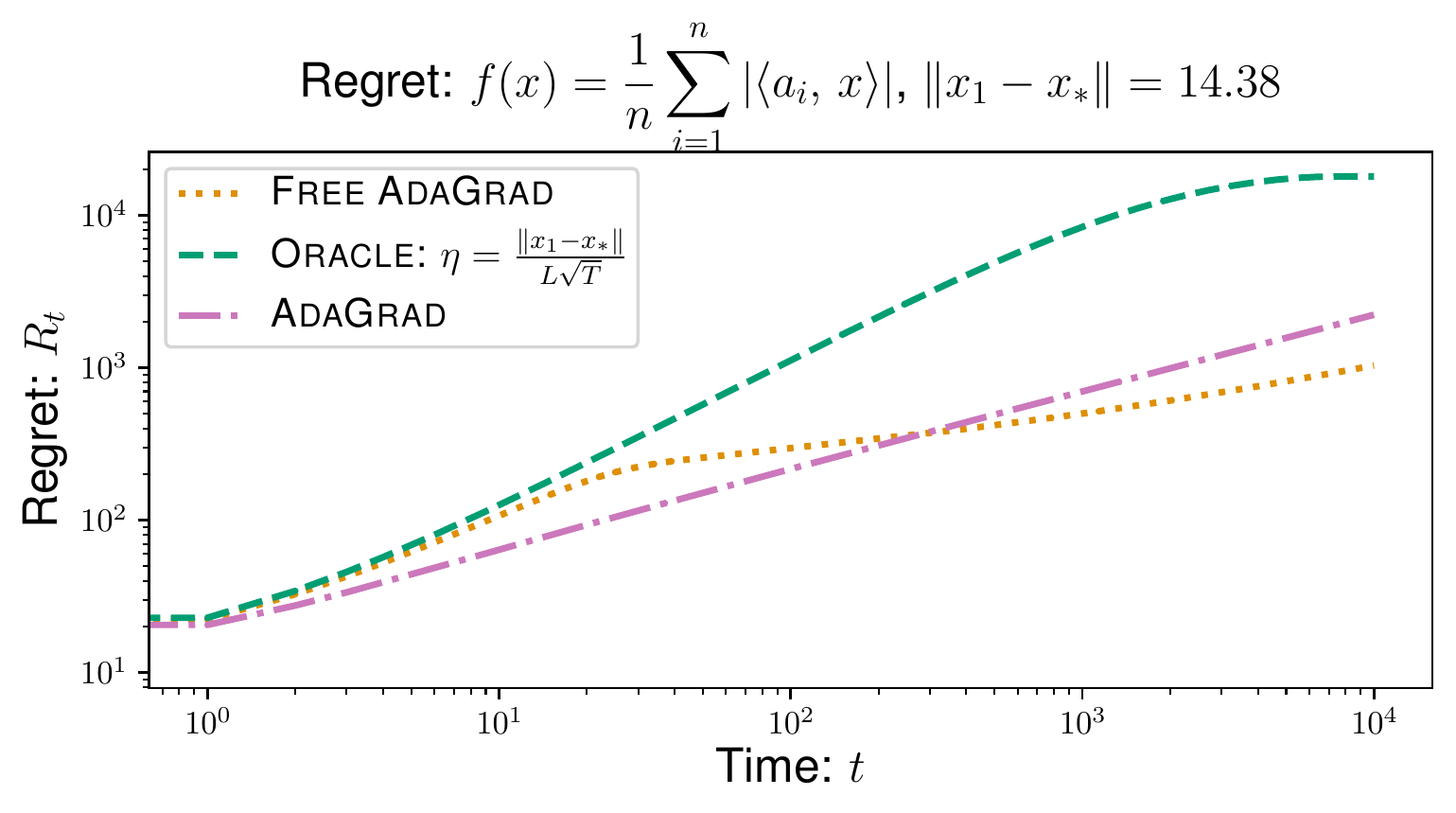}
   \hfill
   \includegraphics[width=0.49\textwidth]{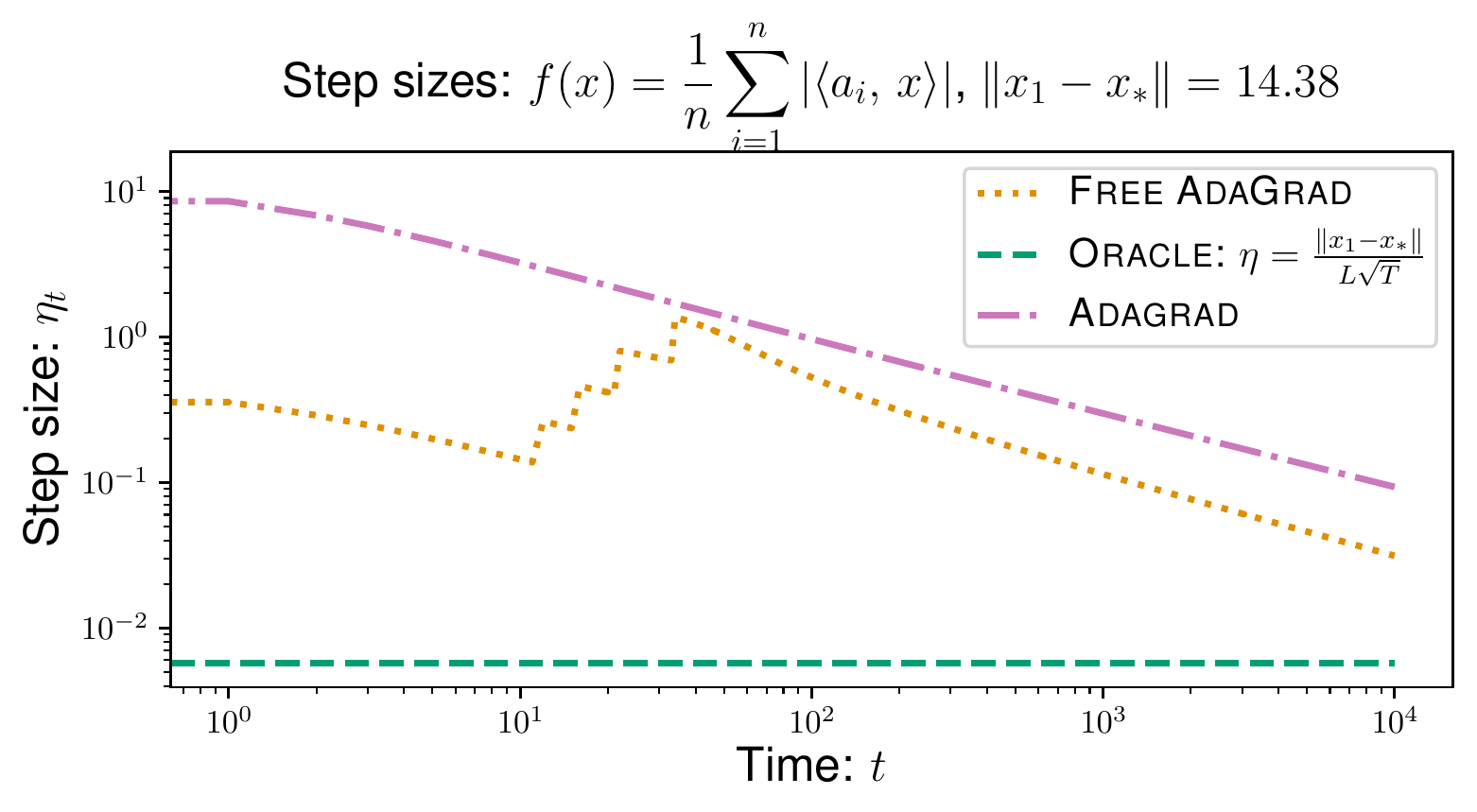}
   % \hfill
   % \includegraphics[width=0.32\textwidth]{../content/plots/2linf.pdf}
   \caption{Regret (left) and step-sizes (right) of three algorithms on $\log{-}\log$ scale.}\label{fig:simple}
\end{figure}

% \begin{figure}[t!]
%  \centering

%    % \hfill
%    % \includegraphics[width=0.32\textwidth]{../content/plots/1linf.pdf}

%    % \includegraphics[width=0.49\textwidth]{../content/plots/2l1.pdf}
%    % \hfill
%    % \includegraphics[width=0.49\textwidth]{../content/plots/2l2.pdf}
%    % % \hfill
%    % \includegraphics[width=0.32\textwidth]{../content/plots/2linf.pdf}
%    \caption{Regret and step-sizes of three algorithms.}\label{fig:simple2}
% \end{figure}

\section{Experiments}
\label{sec:exps}

We have implemented our \FreeAdaGrad (with $\gamma_0 = 1$ throughout) algorithm and compared it to the \AdaGrad that requires the  knowledge of $\|x_1 - x_*\|$ and to the Oracle choice of step  $\|x_1 - x_*\|/ (L \sqrt{T})$.\\
% We picked these two algorithms as comparison for the following reasons:~\cite{defazaio2023learning,carmon2023making} do not control the regret or require non-trivial re-weighting of the trajectory;~\cite{orabona2016coin} are outside of the PGD family of algorithms---the algorithm does not take the form $x_{t+1} = x_t - \eta_t g_t$.
We consider three functions $f(x) = \|x\|_p$ for $p \in \{1, 2\}$ and $f(x) = n^{-1}\sum_{i \leq n} |\scalar{a_i}{x}|$
% \begin{align}
%     \label{eq:exps_func}
%     \enspace,
% \end{align}
where $a_i \in \bbR^d$ are generated i.i.d. from standard multivariate Gaussian. The initialization point is picked the same for the three algorithms and is sampled from uniform distribution on $[-1, 1]^d$. For our experiments, we set $d = 625$ and $n = 1000$. Note that in the first case, the considered function is Lipschitz with $L=1$ and for the second one $L \leq \tfrac{1}{n}(\|a_1\| + \ldots + \|a_n\|)$. A subgradient at $x \in \bbR^d$ in the second case is given by $n^{-1}\sum_{i \leq n} a_i \text{sign}(\scalar{a_i}{x})$
% \begin{align*}
    % $
% \end{align*}
and since $a_i$'s are i.i.d. Gaussian, it is expected that $\|g\| \ll \tfrac{1}{n}(\|a_1\| + \ldots + \|a_n\|)$---algorithms that are adaptive to the norm of the gradient should perform better in this case. For all three functions, a global minimizer is given by $x_* = (0, \ldots, 0)^\top$. All the algorithms run for $T = 10000$ iterations.\\
% We considered three function all having $x_* = (0, \ldots, 0)$: $f(x) = \|x\|_p$ for $p \in \{1, 2, \infty\}$. In all the cases the starting point is picked as $x_1 = (10, 10, 10, 10)$ so that $\|x_1 - x_*\| = 20$.
All the plots are reported on $\log{-}\log$ scale.
The first results are reported on Figure~\ref{fig:simple}. The second column displays the step-sizes used by the three algorithms. As a sanity check, we observe that the step size of \AdaGrad decreases over time and the step size of the \Oracle remains constant. One can also observe the characteristic jumps of the proposed \FreeAdaGrad method---the step size decreases withing a fixed phase and is doubled from one phase to the other. On the first row of Figure~\ref{fig:simple} we display the regret, on initial stages our algorithm behaves similarly to the \Oracle one, while surpassing the performance of the \AdaGrad on the later stages. The third row of Figure~\ref{fig:simple} displays the case of the averages. Note that in this case the \Oracle algorithm performs worse than the other two, since it takes the worst-case Lipschitz constant and does not adapt to the actual norms of the seen gradients.

\section{Discussion}
% \paragraph{About this paper.}
We have introduced \FreeAdaGrad---a simple fully adaptive version of \AdaGrad, that does not rely on any prior information about the objective function.
Our bounds are optimal up to logarithmic factors and are applicable to non-globally Lipschitz functions.
We have extended our approach to stochastic optimization in a Lipschitz context, at the cost of the knowledge of the Lipschitz constant and sub-optimal logarithmic factors.
Numerical illustrations suggest that \FreeAdaGrad performs on par or outperforms \AdaGrad with knowledge of $\|x_{1}-x_{*}\|$ and the \Oracle choice of step-size.
% \evg{Modified below}
\textbf{Limitations.} Let us also list the main limitations and future directions of our work.\smallskip\\
% \begin{enumerate}[noitemsep,topsep=0ex, leftmargin=.5cm]
\textbf{1)} We are only dealing with batch optimization. The extension of our analysis to the case of Online Convex Optimization (OCO) seems non-trivial, since the bounds that we obtain are known to be unachievable without prior knowledge in the OCO context~\cite{cutkosky2017online}. The investigation of \FreeAdaGrad in the OCO setting is left for future work;\smallskip\\
\textbf{2)} If $f$ is assumed to be $L$-Lipschitz with known constant $L$, slightly better bounds---with improved log-factors---can be obtained in OCO setting~\cite[see e.g.,][]{orabona2016coin, cutkosky2018black}.
% using for instance the coin-betting approach
It remains an open question wether such bounds, can be obtained in batch optimization in the non-Lipschitz (or unknown $L$) and unknown $\|x_1 - x_*\|$ case;\smallskip\\
% It remains an open question weather such improved bounds, without additional factors, can be obtained in batch optimization in the non-Lipschitz (or unknown $L$) case;\smallskip\\
\textbf{3)} Concerning stochastic optimization, we require $f$ to be $L$-Lipschitz for some known $L$. We note, however, that even in the state-of-the-art bound of~\cite{carmon2023making}, the knowledge of $L$ is required.\smallskip\\
 % or we would need a strong assumption on the noise of the first order Oracle.
% Concerning stochastic optimization, we require $f$ Lipschitz and the knowledge of the Lipschitz constant $L$, or we would need a strong assumption on the noise of the first order Oracle. We note, however, that even in the state-of-the-art bound of~\cite{carmon2023making}, the knowledge of $L$ is required;\smallskip\\
\textbf{4)} When translating our regret bound
on a rate
 % \chg{``estimation'' a changer}
for the optimization error, using $\bar x_{T}$---average along the trajectory---we have additional log-factors compared to~\cite{carmon2023making, defazaio2023learning}, which is an artifact of online-to-batch conversion~\cite[Theorem 7]{mcmahan2012no}.
Contrary to us, the algorithms in~\cite{carmon2023making, defazaio2023learning} require yet some knowledge about the Lipschitz constant $L$.
% \end{enumerate}

{
% \color{red}Let us highlight main limitations of our work.
% First of all, we are only dealing with batch optimization and the extension of our algorithm to the case of Online Convex Optimization (OCO) seems non-trivial.
% However, the bounds that we obtain are not achievable without prior knowledge in the OCO context~\cite{cutkosky2017online}.
% If $f$ assumed to be $L$-Lipschitz with known constant $L$, then better bounds can be obtained for instance using the coin-betting approach~\cite{orabona2016coin}.
% Secondly, concerning stochastic optimization, we require knowledge of the Lipschitz constant $L$ or we would need a strong assumption on the noise of the first order Oracle. We note, however, even in the state-of-the-art bound of~\cite{carmon2023making}, the knowledge of $L$ is required.
% \section{Conclusion}
% We have introduced \FreeAdaGrad---a fully adaptive version of \AdaGrad, that does not rely on any prior information about the objective function.
% Our bounds are optimal up to logarithmic factors and are applicable to non-globally Lipschitz functions.
% We have extended our approach to stochastic optimization at the cost of the knowledge of the Lipschitz constant and sub-optimal logarithmic factors.
% Numerical illustrations suggest that \FreeAdaGrad performs on par or even outperforms \AdaGrad and the \Oracle choice of step-size.
}

\bibliography{biblio.bib}
\bibliographystyle{alpha}

\newpage
\appendix

\begin{center}
    {\Large\bf Supplementary material for\\
    ``Parameter-free projected gradient descent’’}\\
    \vspace{1cm}
\end{center}
Appendix~\ref{app:warm-up} provides details for the proof of Section \ref{sec:warm-up} of the main body. Appendix~\ref{app:meta} contains all the proof for Theorem~\ref{thm:any_h} and Lemma~\ref{lem:ada_main}.
Appendix~\ref{app:cor:proof} provides proof for corollaries in Section~\ref{sec:meta}. Appendix~\ref{app:SGD} deals with stochastic version of our algorithm and contains the proof of Theorem~\ref{thm:SGD}. Appendix~\ref{app:MS12} gives detailed connection with the reward doubling algorithm of~\cite{mcmahan2012no}. Finally, Appendix~\ref{app:aux} contains auxiliary results that are used in different parts of the proofs.

Below we provide a basic \texttt{python} implementation of our \FreeAdaGrad.

\begin{python}
import numpy as np

class ObjectiveFunction():
    """
        Class for objective function has get_subgradient method
    """

    def get_subgradient(self, x):
        # To implement
        pass

def step(x, eta, g):
    return x - eta * g

def free_adagrad(stopping_criteria, obj_func, x1, gamma0=1.):
    """
        x1: initialization
        gamma0: initial guess for |x_1 - x_*|
        stopping_criteria: stopping criteria (e.g., max_iter)
        obj_func: objective function with get_subgrad() method
    """
    S = 0.
    Gamma = 0.
    k = 1
    gamma = gamma0

    x = np.copy(x1)
    trajectory = [x1]

    while not stopping_criteria:
        g = obj_func.get_subgrad(x)
        norm_g = np.linalg.norm(g)
        S += norm_g ** 2
        h = np.sqrt((S + 1.) * (1. + np.log(1. + S)))
        while True:
            x_plus = step(x, gamma / h, g)
            B = (2. / np.sqrt(k)) * gamma \
                + np.sqrt(Gamma + (gamma * norm_g / h) ** 2)
            if np.linalg.norm(x_plus - x1) > B:
                k += 1
                gamma *= 2
            else:
                Gamma += (gamma * (norm_g / h)) ** 2
                break
        x = x_plus
        trajectory.append(x.copy())

return trajectory
\end{python}

\newpage

\section{Proofs of the results of the warm-up Section \ref{sec:warm-up}}
\label{app:warm-up}

%$B_{t + 1}(k) = \gamma_k\pa{\frac{2}{\sqrt{k}} + \frac{1}{\sqrt{T}}} + \Gamma_t$

We prove here with full details the results of the warm-up Section \ref{sec:warm-up}, in the setting where the norm of the subgradients are bounded by a known constant $L$, and where the time horizon $T$ is known.
We set $h_t=L\sqrt{T}$, and we analyze simultaneously \FreeAdaGrad algorithm~\ref{algo:PGDloglogMath} with this choice of $h$, and the simple variant, where we  set $B^{\text{simple}}_{t+1}(k)=3\gamma_{k}$ for the threshold, as in Section \ref{sec:warm-up}.

\begin{theorem}\label{thm:warm-up}
Assume that $f$ is a convex $L$-Lipschitz function, such that there exists $x_* \in \argmin_{x \in \Theta} f(x)$ bounded. Let $\gamma_{0}>0$ and  $D_{\gamma_{0}}=\|x_{1}-x_{*}\|\vee \gamma_{0}$. The \FreeAdaGrad algorithm~\ref{algo:PGDloglogMath} with $h_t=L\sqrt{T}$ and $B_{t + 1}(k) = \gamma_k\pa{\frac{2}{\sqrt{k}} + \frac{1}{\sqrt{T}}} + \Gamma_t$ fulfills
\begin{align*}
\sum_{t=1}^T (f(x_{t})-f(x_{*}))&\leq 10 D_{\gamma_{0}} {L\sqrt{T}}
  \sqrt{2\log_{2} \pa{ {2D_{\gamma_{0}}\over \gamma_{0}}}}\,.
  %+3D_{\gamma_{0}} {L}\log_{2} \pa{ {2D_{\gamma_{0}}\over \gamma_{0}}}.
% \leq  cL\sqrt{T}\pa{\gamma_{\bar k} + \|x_{1}-x_{*}\| {\sqrt{\bar k}}}+ cL \|x_{1}-x_{*}\| \bar k\\
%&\leq cL\sqrt{T}  \|x_{1}-x_{*}\| \sqrt{ \log \|x_{1}-x_{*}\|} + c L \|x_{1}-x_{*}\| \log \|x_{1}-x_{*}\|.
\end{align*}
The simple variant with $h_t=L\sqrt{T}$ and $B^{\text{simple}}_{t + 1}(k) =3\gamma_{k}$ fulfills
\[\sum_{t=1}^T (f(x_{t})-f(x_{*}))\leq 3\|x_{1}-x_{*}\| L\sqrt{T}\ \log_{2}\pa{ {2D_{\gamma_{0}}\over \gamma_{0}}} + 2 D_{\gamma_{0}}L\sqrt{T}\,\, .\]
\end{theorem}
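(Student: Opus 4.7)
The plan is to follow the blueprint of Section~\ref{sec:warm-up} and treat both variants in parallel, since they share a common skeleton. In both cases, the starting point is the one-step inequality
\begin{align*}
f(x_t) - f(x_*) \leq \langle g_t, x_t - x_*\rangle = \frac{\gamma_k}{2L\sqrt{T}}\|g_t\|^2 + \frac{L\sqrt{T}}{2\gamma_k}\pa{\|x_t - x_*\|^2 - \|x_t^+(k) - x_*\|^2}\,,
\end{align*}
valid for any admissible $k$ by convexity and the non-expansiveness of $\Proj_\Theta$. Iterating this within a phase of constant step-size yields a distance-to-optimum bound and, via the triangle inequality, a distance-to-initialization bound that governs when the algorithm triggers a doubling. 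The simple variant uses the crude form $\|x_t^+(k) - x_*\|^2 \leq \|x_1-x_*\|^2 + \gamma_k^2$ and compares to the threshold $3\gamma_k$; the refined variant keeps the sharper form $\|x_t^+(k) - x_*\|^2 \leq \|x_1-x_*\|^2 + \Gamma_t^2 + \gamma_k^2/T$ and compares to the smaller threshold $\gamma_k(2/\sqrt{k} + 1/\sqrt{T}) + \Gamma_t$.

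For the simple variant I would carry the warm-up sketch to completion. The bound $\|x_t^+(k) - x_1\| \leq 2\|x_1-x_*\| + \gamma_k$ is automatically $\leq 3\gamma_k$ as soon as $\gamma_k \geq \|x_1-x_*\|$, so an induction on $t$ forces $k_T \leq k^*$, where $\gamma_{k^*-1} \leq D_{\gamma_0} \leq \gamma_{k^*}$, bounding the total number of phases by $\log_2(2D_{\gamma_0}/\gamma_0)$. Summing per-phase contributions via the telescoping of $\|x_{T_k}-x_*\|^2$ and handling the cross-phase boundary via the elementary inequality $\Delta^2 - [\Delta-B]_+^2 \leq 2\Delta B$ yields the claimed bound $3\|x_1-x_*\|L\sqrt{T}\log_2(2D_{\gamma_0}/\gamma_0) + 2D_{\gamma_0}L\sqrt{T}$, as already outlined in~(\ref{bound:regret:simple}).

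The main technical work, and the principal obstacle, lies in the refined variant. Two ingredients must be combined with care. The first is the tracking of $\Gamma_t^2$ inside the a priori bound on $\|x_t^+(k) - x_*\|^2$: this is exactly what the $\Gamma_t$ piece of the threshold matches, so that the phase-change criterion can still succeed with the smaller leading term $2\gamma_k/\sqrt{k}$ rather than $3\gamma_k$. The second is the sharper control of $k_T$: defining $\bar k := \min\{k \geq 1 : 2^{k}/\sqrt{k} \geq 2^{k^*}\}$, the same doubling logic as in the simple case forces $k_T \leq \bar k$, and a routine check (Lemma~\ref{lem:bark}) gives $\bar k \leq k^* + \tfrac12\log_2(k^*) + \tfrac54$. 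Combining this with the fact that $\gamma_{k_T}/\sqrt{k_T}$ scales like $D_{\gamma_0}$ is what converts an outer $\log$ factor into $\sqrt{\log}$.

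To conclude, I would redo the per-phase telescoping of the regret as in~(\ref{leq:phase-sum}) with the new threshold: one sums $\gamma_k/\sqrt{k}$ rather than $\gamma_k$ across the $k_T$ phases, uses the same $\Delta^2 - [\Delta-B]_+^2 \leq 2\Delta B$ trick at the phase boundaries but with the refined distance-to-optimum bound, and finally applies Cauchy--Schwarz on the sum $\sum_{k \leq k_T} 1/\sqrt{k}$ to produce the $\sqrt{\log_2(2D_{\gamma_0}/\gamma_0)}$ factor; tracking constants then gives the prefactor $10$. The most delicate step is the book-keeping between the $\Gamma_t$ accumulator and the doubling criterion, since $\Gamma_t$ is itself updated along the $\gamma_{k_t}$ sequence that the argument is trying to control; this is where the choice of threshold has to be exactly matched to the sharp one-step bound on $\|x_t^+(k)-x_*\|^2$ to avoid a circularity.
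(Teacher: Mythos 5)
Your skeleton is the same as the paper's: one-step subgradient inequality, the two a priori bounds on $\|x_t^+(k)-x_*\|$ and $\|x_t^+(k)-x_1\|$, phase counting via $k^*$ (simple variant) and $\bar k$ with Lemma~\ref{lem:bark} (refined variant), then per-phase telescoping with $\Delta^2-[\Delta-B]_+^2\le 2\Delta B$. The simple-variant part of your plan is complete in outline and matches the paper's argument, which does give the stated bound $3\|x_1-x_*\|L\sqrt{T}\log_2(2D_{\gamma_0}/\gamma_0)+2D_{\gamma_0}L\sqrt{T}$.

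For the refined variant, however, there is a genuine gap at precisely the step you flag as ``book-keeping''. After applying $\Delta^2-[\Delta-B]_+^2\le 2\Delta B$ with $B=B_{T_{k+1}}(k)$ and the refined bound $\|x_{T_k}-x_*\|^2\le\|x_1-x_*\|^2+\Gamma_{T_k}^2$, the summed regret contains, besides $\sum_{k\le k_T}2/\sqrt{k}=\cO(\sqrt{k_T})$ (a direct estimate; the Cauchy--Schwarz you invoke on $\sum_k 1/\sqrt{k}$ is not where the work lies), the accumulated terms $\sum_{k\le k_T}\Gamma_{T_{k+1}}/\gamma_k$ and $\sum_{k\le k_T}\Gamma_{T_k}^2/\gamma_k$. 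Your plan gives no method for these, and the obvious estimate $\Gamma_{T_{k+1}}\le\gamma_k$ only yields $\sum_{k\le k_T}\Gamma_{T_{k+1}}/\gamma_k\le k_T$, which puts a full $\log_2(2D_{\gamma_0}/\gamma_0)$ factor back in front of $\|x_1-x_*\|$ and therefore cannot produce the claimed $10\,D_{\gamma_0}L\sqrt{T}\sqrt{2\log_2(2D_{\gamma_0}/\gamma_0)}$ --- i.e.\ the whole point of the smaller threshold would be lost. The paper closes this by writing $\Gamma_{T_{k+1}}^2=\sum_{j\le k}\gamma_j^2\,\Delta T_j/T$, using subadditivity of the square root, swapping the order of summation so that the geometric decay $\sum_{k\ge j}\gamma_k^{-1}\le 2\gamma_j^{-1}$ absorbs the cross-phase contributions, and then applying Cauchy--Schwarz to $\sum_{j\le\bar k}\sqrt{\Delta T_j/T}\le\sqrt{\bar k}$, which gives $\sum_{k\le\bar k}\Gamma_{T_{k+1}}/\gamma_k\le 2\sqrt{\bar k}$; it also uses $\Gamma_{T_k}^2\le\gamma_{k-1}\gamma_k/2$ to reduce the quadratic terms to a multiple of $\gamma_{k_T}$. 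Without an argument of this type, your outline proves only a $\log$-factor bound for the refined threshold, so the quantitative statement of the theorem is not yet established.
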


{\bf Proof of Theorem \ref{thm:warm-up}.}

We start by emphasizing that the algorithm runs without diverging, in the sense that
\begin{equation}\label{eq:inflate}
k_{t}:=\min\ac{ k \geq k_{t-1}\ \textrm{such that}\ \|x^+_{t}(k) - x_{1} \| \leq B_{t+1}(k)}
\end{equation}
is finite for any $t$. Indeed, we observe that $ \|x^+_{t}(k) - x_{1} \|$ grows at most like $\gamma_{k} /\sqrt{T}$ when $k$ goes to infinity, while $B_{t+1}(k)$ grows faster than  $\gamma_{k}\pa{2/\sqrt{k}+1 /\sqrt{T}}$ and $B^{\text{simple}}_{t+1}(k)$ grows like $3 \gamma_{k}$. In fact, we will prove below that $k_{t}$ remains upper-bounded by a quantity independent of $T$.

The starting point of the proof is the classical analysis for a projected gradient step $\Proj_{\Theta}\pa{x_{t}-\eta g_{t}}$
\begin{align*}%\label{eq:classic:one-step:app}
f(x_{t})-f(x_{*}) &\leq \langle g_{t}, x_{t}-x_{*} \rangle
= {{\eta\over 2}\|g_{t}\|^2 +{1\over 2 \eta} \pa{\|x_{t}-x_{*}\|^2-\|x_{t}-\eta g_{t}-x_{*}\|^2}}\\
&\leq {{\eta\over 2}L^2 +{1\over 2 \eta} \pa{\|x_{t}-x_{*}\|^2-\|\Proj_{\Theta}\pa{x_{t}-\eta g_{t}}-x_{*}\|^2}}\,,
\end{align*}
where the last inequality follows from the fact that $x_{*}\in\Theta$.
Since $x_{*}$ is a minimizer of $f$ in $\Theta$, the left hand side is non-negative, so the above inequality with $\eta=\gamma_{k}/(L\sqrt{T})$ gives that
 for any $k\geq 1$
\begin{equation}\label{eq:one-step:app}
0\leq f(x_{t})-f(x_{*}) \leq \langle g_{t}, x_{t}-x_{*} \rangle\leq {{\gamma_{k}L\over 2\sqrt{T}} +{L\sqrt{T}\over 2 \gamma_{k}} \pa{\|x_{*}-x_{t}\|^2-\|x_{t}^+(k)-x_{*}\|^2}}\,.
\end{equation}
It follows from this bound, a one-step deviation upper-bound
\[\|x^+_{t}(k)-x_{*}\|^2\leq \|x_{t}-x_{*}\|^2+ {\gamma_{k}^2\over T}\,\,.\]
Summing this bound over $t$, we get a  bound on the distance to optimum
\begin{align}
\|x^+_{t}(k)-x_{*}\|^2&\leq \|x_{1}-x_{*}\|^2+\sum_{s=1}^{t-1} {\gamma^2_{k_{s}}\over T} +{\gamma_{k}^2\over T}=
 \|x_{1}-x_{*}\|^2 +\Gamma_{t}^2+{\gamma_{k}^2\over T}\,,\label{eq:dist-opt:app}
\end{align}
and then a bound on the distance to initialization
\begin{align}\label{eq:dist-init:app}
\|x^+_{t}(k)-x_{1}\|&\leq \|x_{1}-x_{*}\|+\|x^+_{t}(k)-x_{*}\| \nonumber\\
& \leq \|x_{1}-x_{*}\|+ \sqrt{\|x_{1}-x_{*}\|^2 + \Gamma_{t}^2 + {\gamma_{k}^2\over T}}\nonumber \\
&\leq 2\|x_{1}-x_{*}\|+\sqrt{\Gamma_{t}^2+{\gamma_{k}^2\over T}}\,, %\label{leq:break}
\end{align}
where the two inequalities follow from (\ref{eq:dist-opt:app}) and the sub-additivity of square-root.

{\bf Controlling the number $k_{T}$ of phases.}
The Inequality (\ref{eq:dist-init:app}) is the key to get an upper-bound on $k_{T}=\max\ac{k_{t}:1\leq t\leq T}$.
Let us  define the integer $k^*\geq 1$ by $\gamma_{k^*-1} \leq D_{\gamma_{0}}:=\|x_{*}-x_{1}\|\vee \gamma_{0}< \gamma_{k^*}$, which fulfills
\begin{equation}\label{leq:app:kstar}
k^* \leq 1+\log_{2} \pa{ {\|x_{*}-x_{1}\|\over \gamma_{0}}\vee 1}=\log\pa{2D_{\gamma_{0}}\over \gamma_{0}},\quad\text{and}\quad \gamma_{k^*}\leq 2 D_{\gamma_{0}}\,.
\end{equation}
To upper bound $k_{T}$, we rely on the following estimate derived from (\ref{eq:dist-init:app})
\begin{align}
\|x^+_{t}(k)-x_{1}\| %& \leq \|x^+_{t}(k)-x_{*}\|+\|x_{*}-x_{1}\|\nonumber\\
%& \leq \|x_{1}-x_{*}\|+ \sqrt{\|x_{1}-x_{*}\|^2 + \Gamma_{t}^2 + {\gamma_{k}^2\over T}}\nonumber \\
&\leq 2\gamma_{k^*}+ \sqrt{\Gamma_{t}^2+ {\gamma^2_{k}\over T}}\,. \label{leq:break}
\end{align}
\begin{itemize}
\item {\bf Simple case: $B^{\text{simple}}_{t+1}(k)= 3\gamma_{k}$ .} A basic induction ensures that $k_t\leq k^*$ for all $t\leq T$. Indeed, if the property $k_{t-1}\leq k^*$ holds, then, since $\Gamma^2_{t} \leq  {t-1\over T} \gamma^2_{k_{t-1}}$,
 %and $\|x_{1}-x_{*}\|\leq \gamma_{k^*}$,
 we have
\[\|x^+_{t}(k^*)-x_{1}\| \leq 2\gamma_{k^*}+\gamma_{k^*} = B^{\text{simple}}_{t+1}(k^*)\,\,,\]
which, in turn, ensures that $k_{t}\leq k^*$. So $k^*$ is an upper-bound on $k_{T}=\max\ac{k_{t}:1\leq t\leq T}$ in this case.
\item {\bf \FreeAdaGrad case: $B_{t+1}(k)= {2 \gamma_{\bar k} \over \bar k^{1/2}} + \sqrt{ \Gamma^2_{t}+ {\gamma^2_{\bar k}\over T}}$ .}
Let us define $\bar k$ as the smallest integer fulfilling $\bar k^{-1/2}\gamma_{\bar k}\geq \gamma_{k^*}$.
Then, from (\ref{leq:break}), we have
\[\|x^+_{t}(\bar k)-x_{1}\| \leq {2 \gamma_{\bar k} \over \bar k^{1/2}} + \sqrt{ \Gamma^2_{t}+ {\gamma^2_{\bar k}\over T}}=B_{t+1}(\bar k)\,\,,\]
so, by induction, we get $k_t\leq \bar k $ for all $t\leq T$.
In addition, we prove in Lemma \ref{lem:bark} page \pageref{lem:bark} that
\begin{equation}\label{borne:bark}
\bar k\leq k^*+0.5 \log_{2}(k^*)+1.25\,.
\end{equation}
\end{itemize}

{\bf Bounding the regret on phase $k_{t}=k$.}
We denote by $[T_{k},T_{k+1}-1]$ the interval where $k_t=k$, with the convention $T_{k+1}=T_{k}$ if we never have $k_t=k$. For $t\in [T_{k},T_{k+1}-1]$, we have $x_{t+1}=x_{t}^+(k)$.
So from (\ref{eq:one-step:app}), we get
\begin{align}\label{leq:phasek}
\sum_{t=T_{k}}^{T_{k+1}-1} (f(x_{t})-f(x_{*})) & \leq \sum_{t=T_{k}}^{T_{k+1}-1}  \pa{{\gamma_{k}L\over 2\sqrt{T}} +{L\sqrt{T}\over 2 \gamma_{k}} \pa{\|x_{*}-x_{t}\|^2-\|x_{t+1}-x_{*}\|^2}}\nonumber\\
 &=  {\gamma_{k}L\over 2 \sqrt{T}}  (T_{k+1}-T_{k}) + {L\sqrt{T}\over 2 \gamma_{k}} \pa{\|x_{T_{k}}-x_{*}\|^2-\|x_{T_{k+1}}-x_{*}\|^2}\,,
\end{align}
with the convention that $\sum_{t=T_{k}}^{T_{k}-1}=0$.
We bound the second term in the right-hand side of (\ref{leq:phasek}), with (\ref{eq:dist-opt:app})
\[\|x_{T_{k}}-x_{*}\|^2 \leq \|x_{1}-x_{*}\|^2 + \Gamma^2_{T_{k}-1}+{\gamma_{k}\over T}= \|x_{1}-x_{*}\|^2 + \Gamma^2_{T_{k}}\,,\]
and for the third term, we combine (\ref{eq:inflate}) with a triangular inequality to get
\[\|x_{T_{k+1}}-x_{*}\|^2 \geq \cro{\|x_{1}-x_{*}\|- \|x_{T_{k+1}}-x_{1}\|}^2_{+} \geq  \cro{\|x_{1}-x_{*}\|- {B_{T_{k+1}}}}^2_{+}\,,\]
where we used the condensed notation $B_{T_{k+1}}:=B_{T_{k+1}}(k_{{T_{k+1}}-1})=B_{T_{k+1}}(k)$.
Plugging these two upper and lower bounds in (\ref{leq:phasek}),
 and applying the simple inequality
\begin{equation}\label{eq:phi}
\Delta^2-[\Delta-B]_{+}^2\leq 2\Delta B, \quad \text{for all}\ \Delta,B\geq 0,
\end{equation}
we get
\begin{align*}
\sum_{t=T_{k}}^{T_{k+1}-1} (f(x_{t})-f(x_{*}))
& \leq  {\gamma_{k}L\over 2 \sqrt{T}}  (T_{k+1}-T_{k}) + {L\sqrt{T}\over 2 \gamma_{k}} \pa{\|x_{1}-x_{*}\|^2+\Gamma_{T_{k}}^2-\cro{\|x_{1}-x_{*}\|- B_{T_{k+1}}}^2_{+}}\\
&\leq {L\sqrt{T}\over 2} \pa{\gamma_{k} {T_{k+1}-T_{k}\over T}+{\Gamma_{T_{k}}^2\over \gamma_{k}}+{2B_{T_{k+1}}\over \gamma_{k}}\|x_{1}-x_{*}\| }\,,
\end{align*}
using $\Gamma_{T_{k}}^2\leq \gamma_{k-1}^2 T_{k}/T/\leq \gamma_{k-1}\gamma_{k}/2$ for $k\geq 1$, %and $\Gamma_{T_{0}}=0$,
we get
\begin{align*}
\sum_{t=T_{k}}^{T_{k+1}-1} (f(x_{t})-f(x_{*}))&\leq {L\sqrt{T}\over 2} { \pa{\gamma_{k} {T_{k+1}-T_{k}\over T}+{\gamma_{k-1}\over 2}+
{2B_{T_{k+1}}\over \gamma_{k}}{\|x_{1}-x_{*}\| }}
}\,.\nonumber\\
\end{align*}

{\bf Bounding the total regret.}
Summing the above inequality over $k$, we get the upper-bound on the total regret
\begin{align}
\sum_{t=1}^{T} (f(x_{t})-f(x_{*}))
%&\leq {L\sqrt{T}\over 2} \sum_{1\leq k\leq \bar k} \pa{\gamma_{k} {T_{k+1}-T_{k}\over T}+{\Gamma_{T_{k}}^2\over \gamma_{k}}+
%{2B_{T_{k+1}}\over \gamma_{k}}{\|x_{1}-x_{*}\|  }}\nonumber\\
%&\leq {L\sqrt{T}\over 2} {\sum_{1\leq k\leq \bar k} \pa{\gamma_{k} {T_{k+1}-T_{k}\over T}+{\gamma_{k}T_{k}\over T}+
%{2B_{T_{k+1}}\over \gamma_{k}}{\|x_{1}-x_{*}\| }}
%}\nonumber\\
&\leq {L\sqrt{T}\over 2} {\sum_{1\leq k\leq k_{T}} \pa{\gamma_{k_{T}} {T_{k+1}-T_{k}\over T}+{\gamma_{k-1}\over 2} +
{2B_{T_{k+1}}\over \gamma_{k}}\|x_{1}-x_{*}\| }}\nonumber\\
&\leq {L\sqrt{T}\over 2} { \pa{{3\gamma_{k_{T}}\over 2} +\sum_{1\leq k\leq k_{T}}
{2B_{T_{k+1}}\over \gamma_{k}}\|x_{1}-x_{*}\| }}\,. \label{leq:regret}
\end{align}
We point out that the bound (\ref{leq:regret}) is valid for any choice of $B_{t+1}(k)$.
Let us treat apart the two cases.

\begin{itemize}
\item {\bf Simple case: $B^{\text{simple}}_{t+1}(k)= 3\gamma_{k}$ .} Using that $k_{T}\leq k^*$ in this case,
\[B_{T_{k+1}}=B_{T_{k+1}}(k)=3\gamma_{k}\,\,,\]
and recalling the upper bound (\ref{leq:app:kstar}) on $k^*$ and $\gamma_{k^*}$,
we get from  (\ref{leq:regret})
\begin{align*}
\sum_{t=1}^{T} (f(x_{t})-f(x_{*})) &\leq {L\sqrt{T}\over 2} \pa{{3\over 2}\gamma_{k^*} +
  \sum_{1\leq k\leq k^*} 6\|x_{1}-x_{*}\| }\\
&= {L\sqrt{T}\over 2} \pa{{{3\over 2}\gamma_{k^*} + 6k^*\|x_{1}-x_{*}\| }}\\
&\leq L\sqrt{T} \pa{3\|x_{1}-x_{*}\| \log_{2}\pa{ {2D_{\gamma_{0}}\over \gamma_{0}}} + 2 D_{\gamma_{0}}}\,.
\end{align*}

\item {\bf \FreeAdaGrad case: $B_{t+1}(k)= {2 \gamma_{k} \over  k^{1/2}} + \sqrt{ \Gamma^2_{t}+ {\gamma^2_{ k}\over T}}$ .}
We have proved in this case that $k_{T}\leq \bar k$ with $\bar k$ upper bounded by (\ref{borne:bark}).
 Combining (\ref{borne:bark}) and (\ref{leq:app:kstar}),
the first term in the right-hand side of (\ref{leq:regret}) can be readily bounded by
\[{3\over 4} \gamma_{\bar k} \leq {3\over 4} \gamma_{k^*+0.5 \log_{2}(k^*)+1.25} \leq 4
 D_{\gamma_{0}}\sqrt{\log_{2} \pa{ {2 D_{\gamma_{0}}\over \gamma_{0}}}}\,.\]
%  \pa{ {\|x_{*}-x_{1}\|}\vee \gamma_{0}}\sqrt{1+\log_{2} \pa{ {\|x_{*}-x_{1}\|\over \gamma_{0}}\vee 1}}.$$
The last term in the right-hand side of (\ref{leq:regret}), can be bounded as follows.
We notice that
\[\Gamma_{T_{k+1}-1}^2+{\gamma^2_{k_{T_{k+1}-1}}\over T}=\Gamma_{T_{k+1}}^2\,,\]
so we have
\[\sum_{1\leq k\leq \bar k} {B_{T_{k+1}}\over \gamma_{k}} = \sum_{1\leq k\leq \bar k} \pa{{2\over \sqrt{k}}+{\Gamma_{T_{k+1}}\over \gamma_{k}}}\leq 4 \sqrt{\bar k}+ \sum_{1\leq k\leq \bar k}{\Gamma_{T_{k+1}}\over \gamma_{k}}\,.\]
For the last term, we observe that
\begin{align*}
\sum_{1\leq k\leq \bar k} {\Gamma_{T_{k+1}}\over \gamma_{k}}&=\sum_{1\leq k\leq \bar k}\gamma_{k}^{-1}\sqrt{\sum_{j\leq k} \gamma_{j}^2\Delta T_{j}/T}
\ \leq \sum_{1\leq k\leq \bar k}\sum_{j\leq k}\gamma_{k}^{-1}\gamma_{j}\sqrt{\Delta T_{j}/T}\\
&\leq \sum_{1\leq j\leq \bar k} \gamma_{j}\sqrt{\Delta T_{j}/T} \sum_{k:k\geq j} \gamma_{k}^{-1}= 2\sum_{1\leq j\leq \bar k} \sqrt{\Delta T_{j}/T}\ \leq 2\sqrt{\bar k}\,,
\end{align*}
where the last inequality follows from Cauchy Schwarz.
Then, plugging these bounds in (\ref{leq:regret}), and using that $\bar k=1$ when $k^*=1$, and
\[\bar k\leq k^*+0.5 \log_{2}(k^*)+1.25\leq 2 k^*,\quad \text{for}\ k^*\geq 2\,,\]
we get from (\ref{leq:app:kstar})
\begin{align*}
\sum_{t=1}^T (f(x_{t})-f(x_{*}))&\leq {L\sqrt{T}} \cro{{3\over 4}\gamma_{\bar k}+6\|x_{1}-x_{*}\| \sqrt{\bar k}}\\
&\leq {10 D_{\gamma_{0}}L\sqrt{T}}  \sqrt{2\log_{2} \pa{ {2D_{\gamma_{0}}\over \gamma_{0}}}}\,.
\end{align*}
which concludes the proof of Theorem \ref{thm:warm-up}.
\end{itemize}

\newpage
\section{Proofs for Section~\ref{sec:meta}}\label{app:meta}

\begin{proof}[Proof of Theorem~\ref{thm:any_h}]
First of all, observe that the algorithm in question can be written as
\begin{align*}
    x_{t+1} = \Proj_{\Theta}\pa{x_t - \frac{\gamma_{k_t}}{h_t}g_t}\enspace,
\end{align*}
where we recall that $(h_t)_{t \geq 1}$ is assumed to be non-decreasing and positive.
As before, we denote by $[T_k, T_{k+1} - 1]$ the interval where $k_t = k$. In particular, $T_{k_{T}+1} - 1=T$.
On the interval $[T_k, T_{k+1} - 1]$, the algorithm is simply AdaGrad (slightly modified) started from the point $x_{T_{k}}$ and with the final point at $x_{T_{k+1}}$. Thus, within each phase, we can apply the analysis of the AdaGrad that we recall and slightly adapt in Appendix \ref{app:adagrad}, page \pageref{app:adagrad}.
The proof closely follows that of the warm-up setup: observing that
\begin{align*}
   \sum_{t = 1}^T(f(x_t) - f(x_*)) = \sum_{k = 1}^{k_T} \sum_{t = T_k}^{T_{k+1}-1}(f(x_t) - f(x_*))\enspace,
\end{align*}
\begin{enumerate}
    \item We start with one phase analysis, using the results of Appendix \ref{app:adagrad}, page \pageref{app:adagrad}, which contains Lemma~\ref{lem:ada_main} displayed in the main body;
    \item Then, we sum-up the total regret over $k_T$ phases, using the previous analysis, and bound the key quantities;
\end{enumerate}

\paragraph{One phase analysis.}
Fix some $k \leq k_T$ and assume that the the $k$th phase is non-empty, that is, $T_{k+1} > T_k$. Thus, in view of the above discussion, Lemma~\ref{lem:regret}, page \pageref{lem:regret}, yields
\begin{equation}
\label{eq:start_adagrad}
\begin{aligned}
        \sum_{t = T_k}^{T_{k+1}-1}(f(x_t) - f(x_*))
        &\leq
        {h_{T_{k+1}}}\Bigg({\frac{\norm{x_{T_k} - x_*}^2 {-} \norm{x_{T_{k+1}} - x_*}^2}{2\gamma_k}
        + \frac{\gamma_k}{2}{\underbrace{\sum_{t = T_k}^{T_{k+1} - 1}\frac{\|g_t\|^2}{h^2_t}}_{ = \frac{\Gamma^2_{T_{k+1}} - \Gamma^2_{T_k}}{\gamma_k^2}}}}\Bigg)\\
        &=
        {h_{T_{k+1}}}\parent{\frac{\norm{x_{T_k} - x_*}^2 - \norm{x_{T_{k+1}} - x_*}^2}{2\gamma_k}
        +
        \frac{\Gamma^2_{T_{k+1}} {-} \Gamma^2_{T_k}}{2\gamma_k}}\enspace.
\end{aligned}
\end{equation}
% where we introduced the notation
% \begin{align*}
%     \Gamma_{k} := \Gamma_{T_k:T_{k+1}}\qquad\text{and}\qquad V_{k} := V_{T_k:T_{k+1}}\enspace.
% \end{align*}
Note that by design $\norm{x_{T_{k+1}} - x_*} \geq \big[\norm{x_1 - x_*} - B_{T_{k+1}}(k)\big]_+$. Furthermore, iteratively applying Lemma~\ref{lem:deviation} by phases, we deduce that
\begin{align*}
    \norm{x_{T_k} - x_*}^2 \leq \|x_1 - x^*\|^2 + \Gamma_{T_k}^2\enspace.
\end{align*}
That is, we have
\begin{align*}
    \frac{\norm{x_{T_k} - x_*}^2 - \norm{x_{T_{k+1}} - x_*}^2}{2\gamma_k}
    &\leq
    \frac{\norm{x_{1} - x_*}^2 - \big[\norm{x_{1} - x_*} - B_{T_{k+1}}(k)\big]^2_+}{2\gamma_k} + \frac{\Gamma_{T_k}^2}{2\gamma_k}\enspace.
    % &\leq
    % \frac{\norm{x_{1} - x_*}^2 - \big[\norm{x_{1} - x_*}^2 - B_{T_{k+1}}(k)\big]_+}{2\gamma_k} + \frac{\Gamma_{k}}{2\gamma_k}\sum_{k' \leq k}\gamma_{k'}^2\enspace.
\end{align*}
% Recalling that $\gamma_k = 2^k$ and that $\sum_{k' = 0}^k 2^{2k'} \leq \tfrac{4}{3}2^{2k}$, we deduce from the above
% \begin{align*}
%     \frac{\norm{x_{T_k} - x_*}^2 - \norm{x_{T_{k+1}} - x_*}^2}{2\gamma_k}
%     &\leq
%     \frac{\norm{x_{1} - x_*}^2 - \big[\norm{x_{1} - x_*}^2 - B_{T_{k+1}}(k)\big]_+}{2\gamma_k} + \frac{2}{3}\gamma_k{\Gamma_{k}}\enspace.
% \end{align*}
Furthermore, recalling that $\Delta^2 - [\Delta - B]^2_+ \leq 2 \Delta B$, the above can be further bounded as
\begin{align}
    \label{eq:meta00}
    \frac{\norm{x_{T_k} - x_*}^2 - \norm{x_{T_{k+1}} - x_*}^2}{2\gamma_k}
    &\leq
    \norm{x_1 - x_*}\frac{B_{T_{k+1}}(k)}{\gamma_k} + \frac{\Gamma_{T_k}^2}{2\gamma_k}\enspace.
\end{align}
Substitution of~\eqref{eq:meta00} into~\eqref{eq:start_adagrad}, yields
\begin{align}
    \label{eq:start_adagrad2}
        \sum_{t = T_k}^{T_{k+1}-1}(f(x_t) - f(x_*))
        &\leq
        {h_{T_{k+1}}}\norm{x_1 - x_*}\frac{B_{T_{k+1}}(k)}{\gamma_k} + h_{T_{k+1}}\frac{\Gamma^2_{T_{k+1}}}{2\gamma_k}\enspace.
\end{align}

\paragraph{Summing up over phases.} Summing up all the inequalities~\eqref{eq:start_adagrad2} for $k_T$ phases, we obtain
\begin{align}
    \label{eq:meta0}
    \sum_{t = 1}^{T}(f(x_t) - f(x_*)) \leq \norm{x_1 - x_*}\underbrace{\sum_{k \leq k_T}\parent{{h_{T_{k+1}}}\frac{B_{T_{k+1}}(k)}{\gamma_k}}}_{=:\mathsf{I}} + \underbrace{\sum_{k \leq k_T}{ h_{T_{k+1}}\frac{\Gamma^2_{T_{k+1}}}{2\gamma_k}}}_{=:\mathsf{II}}\enspace.
\end{align}

% \paragraph{Bounding the sum of $\gamma_{k}V_k$.}
% It holds that
% \begin{align*}
%     \sum_{k \leq k_T}\frac{\gamma_k}{2}V_{k} \leq \frac{\gamma_{k_T}}{2}\sum_{k \leq k_T}V_k = \frac{\gamma_{k_T}}{2}\sum_{t = 1}^{T}\frac{\|g_t\|^2}{h(S_t)}\enspace.
% \end{align*}
\paragraph{Bounding the sum of $h\tfrac{\Gamma}{2\gamma}$ terms ($\mathsf{I}$).}
Observe that, by definition of thereof,
\begin{align}
    \Gamma^2_{T_{k+1}} = \sum_{j \leq k}\gamma_j^2 \pa{\sum_{t = T_j}^{T_{j+1}-1} \frac{\|g_t\|^2}{h^2_t}}\enspace.
\end{align}
Hence, using trivial bound $h_{T_{k+1}} \leq h_{T+1}$, we deduce
\begin{equation}
\label{eq:meta1}
\begin{aligned}
    \mathsf{I} = \sum_{k \leq k_T}h_{T_{k+1}}\frac{\Gamma^2_{T_{k+1}}}{2\gamma_k}
    &\leq
    \frac{h_{T+1}}{2}\sum_{k \leq k_T}\frac{\Gamma^2_{T_{k+1}}}{\gamma_k}\\
    &=
    \frac{h_{T+1}}{2}\sum_{k \leq k_T}\sum_{j \leq k}\gamma_k^{-1}\gamma_j^2 \pa{\sum_{t = T_j}^{T_{j+1}-1} \frac{\|g_t\|^2}{h^2_t}}\\
    &=
    \frac{h_{T+1}}{2}\sum_{j \leq k_T}\sum_{k \geq j}\gamma_k^{-1}\gamma_j^2 \pa{\sum_{t = T_j}^{T_{j+1}-1} \frac{\|g_t\|^2}{h^2_t}}\\
    &\leq
    {h_{T+1}}\sum_{j \leq k_T}\gamma_j \pa{\sum_{t = T_j}^{T_{j+1}-1} \frac{\|g_t\|^2}{h^2_t}}\\
    &\leq
    h_{T+1}\gamma_{k_T}\sum_{t = 1}^{T}\frac{\|g_t\|^2}{h^2_t}\enspace,
    % \leq h(S_T)\gamma_{k_T}\log\log(\e(S_T + 1))\enspace,
\end{aligned}
\end{equation}
where the penultimate inequality is due to the fact that $\sum_{k \geq j}2^{-k} \leq 2^{-j+1}$ and the last one holds since $\gamma_k \leq \gamma_{k_T}$.
% where the last inequality is thanks to Lemma~\ref{lem:sums2}.

\paragraph{Bounding the sum of $B$ terms ($\mathsf{II}$).}
We observe that by definition of $B_t(k)$, we have
\[B_{T_{k+1}}(k)=B_{T_{k+1}-1+1}(k)= {2\gamma_{k}\over \sqrt{k}}+\Gamma^2_{T_{k+1}}\enspace.\]
Hence, the term of interest is bounded as
\begin{align*}
    \mathsf{II} = \sum_{k \leq k_T}h_{T_{k+1}}\frac{B_{T_{k+1}}(k)}{\gamma_k}
   & =
    {2\sum_{k \leq k_T}\frac{h_{T_{k+1}}}{\sqrt{k}}}+  {\sum_{k \leq k_T}h_{T_{k+1}}\gamma_k^{-1}\Gamma_{T_{k+1}}}\\
    &\leq 4h_{T+1}\sqrt{{k}_T}+{h_{T+1}\sum_{k \leq k_T}\gamma_k^{-1}\Gamma_{T_{k+1}}}    \enspace.
\end{align*}
For the third term, similarly to the previous paragraph, but additionally invoking Jensen's inequality, we can write
\begin{align*}
    \sum_{k \leq k_T}\gamma_k^{-1}\Gamma_{T_{k+1}}
    &=
    k_T\sum_{k \leq k_T}\frac{1}{k_T}\sqrt{\sum_{j \leq k}\gamma^2_{j}\gamma_k^{-2}\parent{\sum_{t = T_j}^{T_{j+1}-1}\frac{\|g_t\|^2}{h^2_t}}}\\
    &\leq
    \sqrt{k_T}\sqrt{\sum_{j \leq k_T}\sum_{k \geq j}\gamma^2_{j}\gamma_k^{-2}\parent{\sum_{t = T_j}^{T_{j+1}-1}\frac{\|g_t\|^2}{h^2_t}}}\\
    &\leq
    {2 \sqrt{k_T}\over \sqrt{3}}\sqrt{\sum_{t = 1}^{T} \frac{\|g_t\|^2}{h^2_t}}\enspace,
    % &\leq
    % (2 / \sqrt{3})\sqrt{k_T\log\log(\e(S_T + 1))}\enspace.
\end{align*}
where in the last inequality we used the fact that  $\sum_{k = a}^b 2^{-2k} \leq \tfrac{4}{3}2^{-2a}$. %and we used Lemma~\ref{lem:sums2} for the last one.
Thus, overall, we have
\begin{align}
\label{eq:meta2}
    \mathsf{II} = \sum_{k \leq k_T}h_{T_{k+1}}\frac{B_{T_{k+1}}(k)}{\gamma_k} \leq 2h_{T+1}\sqrt{k_T}\parent{2 + \sqrt{\frac{1}{3}\sum_{t = 1}^{T}\frac{\|g_t\|^2}{h^2_t}}}
\end{align}

\paragraph{The end (combining bounds for $\mathsf{I}$ and $\mathsf{II}$).}
%Note that most of the analysis does not rely on the actual for of $h$ function, as long as it is non-decreasing. Thus, let us specify the bound that works for any
Substituting~\eqref{eq:meta1} and~\eqref{eq:meta2} into~\eqref{eq:meta0}, we deduce that for any non-decreasing $(h_t)_{t\geq 1}$
\begin{equation*}
    \sum_{t = 1}^{T}(f(x_t) - f(x_*))
    \leq
    h_{T+1}\pa{{2\norm{x_1 - x_*}\sqrt{k_T}\pa{2 + \sqrt{\frac{1}{3}\sum_{t = 1}^{T}\frac{\|g_t\|^2}{h^2_t}}}}
    + \gamma_{k_T}\sum_{t = 1}^{T}\frac{\|g_t\|^2}{h^2_t}}\enspace.\qedhere
\end{equation*}
% \paragraph{The bound.} Our bound reads as
% \begin{align*}
%     \sum_{t \leq T-1}(f(x_t) - f(x_*)) \leq &\norm{x_1 - x_*}\parent{2h(S_T)\sqrt{k_T}\parent{2 + (\sqrt{3})^{-1}\sqrt{\log\log(\e(1 + S_T))}} + {\sum_{k \leq k_T} \|g_{T_{k + 1}}\|}}\\
%     &
%     +
%     \gamma_{k_T}h(S_T)\parent{2 + \log\log(\e(S_T + 1))}
% \end{align*}
% where $h(S_T) = \sqrt{(S_T+1)\log(\e(S_T+1))}$.
% Recall that, as in the previous section, $k_T \leq \bar{k} \leq k^*+0.5 \log_{2}(k^*)+2\leq 3 k^*$ and that $\|g_t\| \leq L$, thus
% \ch{To put together.}
% \paragraph{Bounding number of phases.}
% Notice that the identical analysis implies that $k_T \leq \bar{k}$, where
% \begin{align*}
%     \bar{k} \leq
% \end{align*}
\end{proof}

\subsection{Basic analysis for AdaGrad and Proof of Lemma~\ref{lem:ada_main}}\label{app:adagrad}
In this section we extend the standard analysis of $\AdaGrad$ for our purposes and prove Lemma~\ref{lem:ada_main} restated below.
Throughout, we consider the following algorithm for $t \geq 1$
\begin{align}
    \label{eq:ada_grad_def}
    x_{t + 1} = \Proj_{\Theta}\parent{x_t - \frac{\gamma}{h_t} g_t}\enspace,
\end{align}
where $g_t \in \partial f(x_t)$, $S_t = \sum_{s = 1}^t\norm{g_s}^2$ and $(h_t)_{t \geq 1}$ is non-decreasing and positive, and $x_{1} \in \bbR^d$.
% Before providing the analysis, let us introduce the following notation, that will be used throughout the analysis: for all $1 \leq s \leq T$ we define
% \begin{align*}
%     \Gamma_{s:T} := \sum_{t = s}^{T-1}\frac{\norm{g_t}^2}{h^2_t)}\qquad\text{and}\qquad
%     V_{s:T} := \sum_{t = s}^{T-1}\frac{\norm{g_t}^2}{h(S_t)}\enspace,
% \end{align*}
% with the agreement that both sums are zero if $s = T$. Note that both sums go until $T-1$, we make this choice for compactness of the exposition.

We start with some elementary results.
\begin{lemma}
    \label{lem:basic}
     For all $t \geq 1$ and all $x_1 \in \bbR$
     \begin{align*}
         0 \leq \frac{2\gamma}{h_t}(f(x_t) - f(x_*)) \leq \norm{x_t - x_*}^2 - \norm{x_{t+1} - x_*}^2 + \frac{\gamma^2}{h^2_t}\norm{g_t}^2\enspace.
     \end{align*}
\end{lemma}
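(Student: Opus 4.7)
The plan is to apply the textbook one-step analysis of projected subgradient descent, with the step size specialized to $\gamma/h_t$. First, the left inequality is immediate: by Assumption~\ref{ass:main}, $x_*$ minimizes $f$ over $\Theta$ and $x_t \in \Theta$, so $f(x_t) - f(x_*) \geq 0$, and since $\gamma, h_t > 0$ the factor $2\gamma/h_t$ is non-negative.

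For the upper bound I would start from the subgradient inequality: since $g_t \in \partial f(x_t)$ and $f$ is convex on $\Theta$,
\begin{align*}
    f(x_t) - f(x_*) \leq \scalar{g_t}{x_t - x_*}.
\end{align*}
Next, I would expand the square
\begin{align*}
    \bigl\|x_t - \tfrac{\gamma}{h_t}g_t - x_*\bigr\|^2 = \|x_t - x_*\|^2 - \tfrac{2\gamma}{h_t}\scalar{g_t}{x_t - x_*} + \tfrac{\gamma^2}{h_t^2}\|g_t\|^2,
\end{align*}
and rearrange to isolate $\tfrac{2\gamma}{h_t}\scalar{g_t}{x_t - x_*}$.

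The last ingredient is the non-expansiveness of the Euclidean projection onto the closed convex set $\Theta$: since $x_* \in \Theta$,
\begin{align*}
    \|x_{t+1} - x_*\|^2 = \bigl\|\Proj_\Theta\bigl(x_t - \tfrac{\gamma}{h_t}g_t\bigr) - x_*\bigr\|^2 \leq \bigl\|x_t - \tfrac{\gamma}{h_t}g_t - x_*\bigr\|^2.
\end{align*}
Combining the three displays yields exactly the stated upper bound. There is no real obstacle; the only point to double-check is that the projection step only shrinks the distance to $x_*$ (so the $-\|x_{t+1}-x_*\|^2$ term on the right is legitimate), which is guaranteed by $x_* \in \Theta$ and the standard projection inequality.
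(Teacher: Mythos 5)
Your proposal is correct and follows essentially the same route as the paper: the subgradient inequality combined with expanding the square and the non-expansiveness of the projection onto $\Theta$ (using $x_*\in\Theta$), then rearranging. The paper simply writes these steps in the opposite order, bounding $\|x_{t+1}-x_*\|^2$ directly and then invoking convexity.
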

\begin{proof}
By the property of projection
\begin{align}
    \norm{x_{t+1} - x_*}^2
    &\leq
    \norm{x_t - x_*}^2 - \frac{2\gamma}{h_t}\scalar{x_t - x_*}{g_t} + \frac{\gamma^2}{h^2_t}\norm{g_t}^2\nonumber\\
    \label{eq:known_L1}
    &\leq
    \norm{x_t - x_*}^2 - \frac{2\gamma}{h_t}(f(x_t) - f(x_*)) + \frac{\gamma^2}{h^2_t}\norm{g_t}^2\enspace,
\end{align}
where we used the fact that $f$ is convex.
The result follows after re-arranging.
\end{proof}
Lemma~\ref{lem:basic} applied iteratively yields the following result.
\begin{lemma}
    \label{lem:deviation}
    For all $T \geq 1, \bar\gamma > 0$ and all $x_1 \in \bbR$, Algorithm~\eqref{eq:ada_grad_def} satisfies
    \begin{align*}
    &\norm{\Proj_{\Theta}\parent{x_{T} - \frac{\bar\gamma}{h_T}g_T} - x_*}^2 \leq \norm{x_1 - x_*}^2 + \gamma^2\sum_{t = 1}^{T-1}\frac{\norm{g_t}^2}{h^2_t} + \bar{\gamma}^2\frac{\|g_T\|^2}{h^2_T}\enspace.
    % \\
    % &\norm{x_{T+1} - x_1} \leq 2\norm{x_1 - x_*} + \gamma\sqrt{\Delta(\tau:T)}\enspace,
\end{align*}
\end{lemma}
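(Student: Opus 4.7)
The plan is to obtain the bound by a straightforward iterated application of Lemma~\ref{lem:basic}, splitting the trajectory into (i) the $T-1$ genuine algorithmic steps that use the fixed step-size $\gamma$, and (ii) one final probing step that uses the arbitrary step-size $\bar\gamma$.

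First I would invoke Lemma~\ref{lem:basic} at each $t \in \{1,\ldots,T-1\}$. Dropping the nonnegative term $(2\gamma/h_t)(f(x_t)-f(x_*))$ (which is $\geq 0$ because $x_*$ is a minimizer over $\Theta$ and $x_t\in\Theta$), it gives
\begin{align*}
\norm{x_{t+1}-x_*}^2 \leq \norm{x_t-x_*}^2 + \frac{\gamma^2}{h_t^2}\norm{g_t}^2.
\end{align*}
Telescoping this inequality from $t=1$ to $t=T-1$ immediately yields
\begin{align*}
\norm{x_T-x_*}^2 \leq \norm{x_1-x_*}^2 + \gamma^2\sum_{t=1}^{T-1}\frac{\norm{g_t}^2}{h_t^2}.
\end{align*}

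Next, I would handle the final probing step separately. Although $\bar\gamma$ may differ from $\gamma$, the only property used in the derivation of Lemma~\ref{lem:basic} is the non-expansiveness of $\Proj_\Theta$ together with convexity of $f$ and $x_*\in\argmin_{\Theta} f$; rerunning exactly the same one-step argument for the point $\Proj_\Theta(x_T-(\bar\gamma/h_T)g_T)$ gives
\begin{align*}
\norm{\Proj_\Theta\!\pa{x_T-\tfrac{\bar\gamma}{h_T}g_T}-x_*}^2 \leq \norm{x_T-x_*}^2 + \frac{\bar\gamma^2}{h_T^2}\norm{g_T}^2.
\end{align*}
Combining the two displayed inequalities delivers the claim.

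There is essentially no obstacle here; the statement is an ``anytime deviation bound'' that falls out of Lemma~\ref{lem:basic} by telescoping, and the only mildly delicate point is to keep the $T$-th step with the generic $\bar\gamma$ (so that the result can be invoked, in Lemma~\ref{lem:number_of_phases}, with $\bar\gamma = \gamma_k$ for the probing point $x_t^+(k)$ while the past steps have used the previously selected $\gamma_{k_s}$). No regularity on $(h_t)_{t\geq 1}$ beyond positivity is actually needed for this lemma.
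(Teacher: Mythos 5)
Your proof is correct and matches the paper's argument: the paper likewise obtains this bound by iterating Lemma~\ref{lem:basic} (telescoping the one-step inequality after dropping the nonnegative regret term) and treating the final step with the generic $\bar\gamma$ via the same projection-plus-convexity argument. Your remark that monotonicity of $(h_t)_{t\geq 1}$ is not needed here is also accurate.
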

% \begin{proof}

% Lemma~\ref{lem:basic} implies
% \begin{align*}
%     \norm{x_{T+1} - x_*}^2
%     &\leq
%     \norm{x_1 - x_*}^2 + \gamma^2\sum_{t = 1}^T\frac{\norm{g_t}^2}{h^2(S_t)}
%     % \leq
%     % \norm{x_1 - x_*}^2 + \gamma^2 \log(e(1 + S_T))\enspace,
% \end{align*}
% where we used Lemma~\ref{lem:sums} to bound the sum.
% % Triangle's inequality yields the first claim
% % \begin{align*}
% %     \norm{x_{t+1} - x_1} \leq 2\norm{x_1 - x_*} + 2\gamma\sqrt{\log(e(t+1))}\enspace.
% % \end{align*}
% \end{proof}
Finally, we are in position to prove Lemma~\ref{lem:ada_main} brought up in the main body of the paper.
\begin{lemma}[Restated Lemma~\ref{lem:ada_main} from Section~\ref{sec:meta}]
\label{lem:regret}
For all $T > 1$ and all $x_1 \in \bbR^d$ we have
    \begin{align*}
        \sum_{t = 1}^{T-1}(f(x_t) - f(x_*)) \leq
        {h_T}\pa{{\frac{\norm{x_{1} - x_*}^2 - \norm{x_{T} - x_*}^2}{2\gamma}}
        + \frac{\gamma}{2}\sum_{t = 1}^{T-1}\frac{\norm{g_t}^2}{h^2_t}}\enspace.
    \end{align*}
\end{lemma}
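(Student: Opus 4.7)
The plan is to use Lemma~\ref{lem:basic} not only for its upper bound on $f(x_t)-f(x_*)$ but also for the accompanying nonnegativity. Define the per-round quantity
\[
  D_t := \norm{x_t - x_*}^2 - \norm{x_{t+1} - x_*}^2 + \frac{\gamma^2 \norm{g_t}^2}{h_t^2},
\]
so that Lemma~\ref{lem:basic} rewrites as $0 \leq (2\gamma/h_t)(f(x_t)-f(x_*)) \leq D_t$. In particular $D_t \geq 0$ for every $t$, and rearranging yields the per-round bound $f(x_t) - f(x_*) \leq (h_t/(2\gamma))\,D_t$.

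With this in hand the proof finishes in two short steps. Since $(h_t)_{t \geq 1}$ is nondecreasing, $h_t \leq h_T$ for every $t \leq T-1$; combined with $D_t \geq 0$ this gives the termwise bound $h_t D_t \leq h_T D_t$, hence
\[
  \sum_{t=1}^{T-1}(f(x_t) - f(x_*)) \leq \frac{1}{2\gamma}\sum_{t=1}^{T-1} h_t D_t \leq \frac{h_T}{2\gamma}\sum_{t=1}^{T-1} D_t.
\]
The remaining sum $\sum_{t=1}^{T-1} D_t$ then telescopes to $\norm{x_1 - x_*}^2 - \norm{x_T - x_*}^2 + \gamma^2 \sum_{t=1}^{T-1}\norm{g_t}^2/h_t^2$, and substituting back gives exactly the claimed bound.

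The main conceptual point---and the reason a direct Abel summation applied to $\sum h_t\bigl(\norm{x_t-x_*}^2 - \norm{x_{t+1}-x_*}^2\bigr)$ does not quite work---is that the bare telescoping increments $\norm{x_t-x_*}^2 - \norm{x_{t+1}-x_*}^2$ may have either sign, so the monotone inequality $h_t \leq h_T$ cannot be applied term by term on its own. The trick is to bundle each increment with its gradient correction $\gamma^2 \norm{g_t}^2 / h_t^2$: the resulting quantity $D_t$ is guaranteed nonnegative by Lemma~\ref{lem:basic}, and it is precisely this nonnegativity that legitimizes swapping $h_t$ for $h_T$ inside the sum.
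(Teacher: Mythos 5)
Your proof is correct, and it takes a genuinely different (and shorter) route than the paper. The paper starts from the same one-step inequality (Lemma~\ref{lem:basic}), sums it to get $\frac{1}{2\gamma}\sum_{t} h_t\bigl(\norm{x_t-x_*}^2-\norm{x_{t+1}-x_*}^2\bigr)+\frac{\gamma}{2}\sum_t \norm{g_t}^2/h_t$, then performs an Abel (summation-by-parts) step, controls the resulting terms $(h_{t+1}-h_t)\norm{x_{t+1}-x_*}^2$ via the trajectory deviation bound of Lemma~\ref{lem:deviation}, and finally relies on an exact cancellation of the $\sum_t\norm{g_t}^2/h_t$ contributions to arrive at the stated bound. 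You instead bundle each telescoping increment with its gradient correction into $D_t\geq 0$, use the nonnegativity (which Lemma~\ref{lem:basic} supplies, since $f(x_t)\geq f(x_*)$) to replace $h_t$ by $h_T$ termwise, and telescope -- reaching the identical bound while bypassing both the Abel summation and Lemma~\ref{lem:deviation}. Note that the two arguments rest on the same ingredients in the end: the paper's Lemma~\ref{lem:deviation} is itself obtained from Lemma~\ref{lem:basic} by dropping the nonnegative $f$-term, so neither route is more general; yours is simply more economical for this particular statement, while the paper's intermediate objects (the deviation bound, the weighted differences) are reused elsewhere in the analysis. One small quibble: your closing remark that ``a direct Abel summation does not quite work'' overstates the case -- it does work, as the paper shows, but only after invoking the deviation bound; your bundling trick is precisely what lets you avoid that extra step.
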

\begin{proof}
    Using Lemma~\ref{lem:basic}, we deduce that
\begin{align}
    \label{eq:adagrad_start}
    \sum_{t = 1}^{T-1}(f(x_t) - f(x_*))
    &\leq
    \frac{1}{2\gamma}\sum_{t = 1}^{T-1}h_t\parent{{\norm{x_{t} - x_*}^2 - \norm{x_{t+1} - x_*}^2}} +  \frac{\gamma}{2}\sum_{t = 1}^{T-1}\frac{\norm{g_t}^2}{h_t}\enspace.
    % &\leq
    % \frac{1}{2\gamma}\sum_{t = 1}^T\sqrt{S_t}\parent{{\norm{x_{t} - x_*}^2 - \norm{x_{t+1} - x_*}^2}} +  \frac{\gamma}{2}V_T(h)\enspace.
\end{align}
Let us bound the first sum on the right hand side, adding and subtracting $h_{t+1}\norm{x_{t+1} - x_*}^2$ and using telescoping summation, we obtain
\begin{equation}
    \label{eq:adagrad1}
\begin{aligned}
    \sum_{t = 1}^{T-1}h_t\parent{{\norm{x_{t} - x_*}^2 - \norm{x_{t+1} - x_*}^2}}
    &=
    h_1{\norm{x_{1} - x_*}^2 - h_T\norm{x_{T} - x_*}^2}\\
    &\phantom{=}+ \sum_{t = 1}^{T-1}\parent{h_{t+1} - h_t}\norm{x_{t+1} - x_*}^2\enspace.
\end{aligned}
\end{equation}
Furthermore, by Lemma~\ref{lem:deviation} with $\bar{\gamma} = \gamma$ and the fact that $(h_t)_{t \geq 1}$ is non-decreasing, we get
\begin{align*}
     \sum_{t = 1}^{T-1}\parent{h_{t+1} - h_t}\norm{x_{t+1} - x_*}^2
     &\leq
     \sum_{t = 1}^{T-1}\parent{h_{t+1} - h_t}\parent{\norm{x_1 - x_*}^2 + \gamma^2 \sum_{s = 1}^{t}\frac{\norm{g_s}^2}{h^2_s}}\\
     &\leq
     \parent{h_{T} - h_1}\norm{x_1 - x_*}^2
     + \gamma^2 \sum_{t = 1}^{T-1}\parent{h_{t+1} - h_t}\sum_{s = 1}^{t}\frac{\norm{g_s}^2}{h^2_s}\enspace.
\end{align*}
For the second term in the above bound, we can write
% \ch{
    % A bit more carefully:
    \begin{align*}
        \sum_{t = 1}^{T-1}\parent{h_{t+1} - h_t}\sum_{s = 1}^{t}\frac{\norm{g_s}^2}{h^2_s}
        &=
        \sum_{s = 1}^{T - 1}\frac{\norm{g_s}^2}{h^2_s}\sum_{t = s}^{T-1}\parent{h_{t+1} - h_t}\\
        &=
        \sum_{s = 1}^{T - 1}\frac{\norm{g_s}^2}{h^2_s}\pa{h_T - h_s}\\
        &=
        h_T\sum_{t = 1}^{T - 1}\frac{\norm{g_t}^2}{h^2_t} - \sum_{t = 1}^{T - 1}\frac{\norm{g_t}^2}{h_t}\enspace.
    \end{align*}
  %   Which leads to this final bound
  %   \begin{align*}
  %       \sum_{t = 1}^{T-1}(f(x_t) - f(x_*)) \leq
  %       {h(S_{T})}\parent{\frac{\norm{x_{1} - x_*}^2 - \norm{x_{T} - x_*}^2}{2\gamma}}
  %       + \frac{\gamma}{2}h(S_T)\sum_{t = 1}^{T-1}\frac{\norm{g_t}^2}{h^2(S_t)}\enspace.
  %   \end{align*}
  %   In the meta-theorem it translates to
  %   \begin{align*}
  %   R_T
  %   &\leq
  %   h(S_{T+1})\pa{2\norm{x_1 - x_*}\sqrt{k_T}\pa{2 {+} \sqrt{\frac{1}{3}\sum_{t = 1}^{T}\frac{\|g_t\|^2}{h^2(S_t)}}} + \gamma_{k_T}\sum_{t = 1}^{T}\frac{\|g_t\|^2}{h^2(S_t)}}
  % \end{align*}
% }

Substitution of the above into the penultimate inequality yields
\begin{equation}
    \label{eq:adagrad_improved}
\begin{aligned}
    \sum_{t = 1}^{T-1}\parent{h_{t+1} {-} h_t}\norm{x_{t+1} - x_*}^2 \leq (h_T {-} h_1)\|x_1 - x_*\|^2
    {+} \gamma^2\pa{h_T\sum_{t = 1}^{T - 1}\frac{\norm{g_t}^2}{h^2_t}
    {-} \sum_{t = 1}^{T - 1}\frac{\norm{g_t}^2}{h_t}}\,,
\end{aligned}
\end{equation}
Substituting~\eqref{eq:adagrad_improved} into~\eqref{eq:adagrad1}, we deduce that

\begin{align*}
    \sum_{t = 1}^{T-1}h_t\parent{{\norm{x_{t} - x_*}^2 - \norm{x_{t+1} - x_*}^2}} &\leq
     h_T\parent{\norm{x_1 - x_*}^2 - \norm{x_{T} - x_*}^2}
    \\ &\phantom{\leq}
    + \gamma^2 h_T\sum_{t = 1}^{T-1}\frac{\norm{g_t}^2}{h^2_t} - \gamma^2\sum_{t = 1}^{T - 1}\frac{\norm{g_t}^2}{h_t}\enspace.
\end{align*}
Combination of the above with~\eqref{eq:adagrad_start} concludes the proof.
\end{proof}

\subsection{Proofs of corollaries in Section~\ref{sec:meta}}\label{app:cor:proof}
In this section we provide proofs of four corollaries presented in Section~\ref{sec:meta}.

\begin{proof}[Proof of Corollary~\ref{cor:main}]
    Substituting our choice of $h_t$ into Theorem~\ref{thm:any_h}, we prove in Lemma~\ref{lem:sums2} in Appendix~\ref{app:aux}, page \pageref{lem:sums2}, that
    \begin{align*}
        \sum_{t = 1}^T \frac{\|g_t\|^2}{h^2_t} \leq \log(\log(\e(1 + S_T)))\enspace.
    \end{align*}
    Substituting the above into Theorem~\ref{thm:any_h} and using~\eqref{eq:bound_on_k_T}, we deduce that
    \begin{align*}
        R_T
        \leq
        \sqrt{(S_{T+1} + 1)\log(\e(S_{T+1}+1))}
        \bigg[
        &\sqrt{8}\norm{x_1 - x_*}\sqrt{\log_2\pa{\frac{D_{\gamma_0}}{\gamma_0}} + 1}
        \pa{2 {+} \sqrt{\frac{1}{3}\log(\log(\e(1 + S_T)))}}\\
        &+ 5D_{\gamma_0}\log(\log(\e(1 + S_T)))\sqrt{\log_2\pa{\frac{D_{\gamma_0}}{\gamma_0}} + 1}
        \bigg]\enspace.
  \end{align*}
  The proof is concluded after re-arranging and using $2ab \leq a^2 + b^2$.
\end{proof}

\begin{proof}[Proof of Corollary~\ref{cor:adagrad_real}]
% Let $D = \|x_1 - x_*\|$ and $D_f = f(x_1) - f(x_*)$.
Theorem~\ref{thm:any_h} and Lemma~\ref{lem:number_of_phases} (rather Eq.~\eqref{eq:bound_on_k_T}) and Lemma~\ref{lem:sums3} give
\begin{align*}
    \sum_{t = 1}^T(f(x_t) - f(x_*))
    &\leq
    \sqrt{S_{T+1}\log_2\pa{\frac{2D_{\gamma_0}}{\gamma_0}}}\Bigg[2\norm{x_1 - x_*}\sqrt{2}\pa{2 {+} \sqrt{\frac{1}{3}\log\pa{\e\pa{\frac{S_T}{\|g_1\|^2}}}}}\\
    &\phantom{\leq\sqrt{S_{T+1}\log_2\pa{\frac{2D_{\gamma_0}}{\gamma_0}}}}
    + 5D_{\gamma_0}\log\pa{\e\pa{\frac{S_T}{\|g_1\|^2}}} \Bigg]\\
    &\leq
    D_{\gamma_0}\sqrt{S_{T+1}\log_2\pa{\frac{2D_{\gamma_0}}{\gamma_0}}}\Bigg[2\sqrt{2}\pa{2 {+} \sqrt{\frac{1}{3}\log\pa{\e\pa{\frac{S_T}{\|g_1\|^2}}}}}\\
    &\phantom{\leq
    \sqrt{S_{T+1}\log_2\pa{\frac{2D_{\gamma_0}}{\gamma_0}}}}
    + 5\log\pa{\e\pa{\frac{S_T}{\|g_1\|^2}}} \Bigg]\enspace.
    % &\leq
    % D_{\gamma_0}\sqrt{S_{T+1}\log_2\pa{\frac{2D_{\gamma_0}}{\gamma_0}}}\bigg(6.5
    % + 6\log\pa{\e\pa{\frac{D^2S_T}{D^2_f}}} \bigg)\enspace,
        % +\frac{\gamma_{k_T}}{2}\sum_{t = 1}^{T}\frac{\|g_t\|^2}{h(S_t)}\,.
  \end{align*}
  The proof is concluded after re-arranging and using $2ab \leq a^2 + b^2$.
\end{proof}

\begin{proof}[Proof of Corollary~\ref{cor:adagrad_like}]
    From Lemma~\ref{lem:sums4} we have
    \begin{align*}
        \sum_{t = 1}^T \frac{\|g_t\|^2}{\varepsilon + S_t} \leq \log\pa{1 + \frac{S_T}{\varepsilon}}\enspace.
    \end{align*}
    Hence, substituting the above into Theorem~\ref{thm:any_h} and using~\eqref{eq:bound_on_k_T}, we obtain
    \begin{align*}
    R_T
    \leq
    \sqrt{\varepsilon +  S_{T+1}}\sqrt{\log_2\pa{\frac{D_{\gamma_0}}{\gamma_0}} + 1}\Bigg[&\sqrt{8}\norm{x_1 - x_*}\pa{2 {+} \sqrt{\frac{1}{3}\log\pa{1 + \frac{S_T}{\varepsilon}}}} + 5D_{\gamma_0}\log\pa{1 + \frac{S_T}{\varepsilon}} \Bigg]\enspace.
  \end{align*}
  The proof is concluded after re-arranging and using $2ab \leq a^2 + b^2$.
\end{proof}

As promised in Section~\ref{sec:meta}, Theorem~\ref{thm:warm-up} of Appendix~\ref{app:warm-up} can be obtained as a corollary of Theorem~\ref{thm:any_h}.
\begin{corollary}
    \label{cor:simple}
    Under assumptions of Theorem~\ref{thm:any_h}, with $f$ an $L$-Lipschitz function.
    Setting $h_t = L\sqrt{T}$ and $D_{\gamma_0} = \|x_1 - x_*\|\vee\gamma_0$, Algorithm~\ref{algo:PGDloglogMath} satisfies
    \begin{align*}
        \sum_{t = 1}^T(f(x_t) - f(x_*))
        \leq
        12.3D_{\gamma_0}L\sqrt{T\log_2\pa{\frac{2D_{\gamma_0}}{\gamma_0}}}\enspace.
  \end{align*}
\end{corollary}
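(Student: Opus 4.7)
The plan is to instantiate Theorem~\ref{thm:any_h} with the constant sequence $h_t = L\sqrt{T}$ and exploit the Lipschitz hypothesis to eliminate the gradient sum entirely. Since $f$ is $L$-Lipschitz, we have $\|g_t\| \leq L$ for every $t$, hence
\[
\frac{\|g_t\|^2}{h_t^2} \leq \frac{L^2}{L^2 T} = \frac{1}{T},
\qquad \text{so} \qquad
\sum_{t=1}^T \frac{\|g_t\|^2}{h_t^2} \leq 1 \ .
\]
Substituting this together with $h_{T+1} = L\sqrt{T}$ into the conclusion of Theorem~\ref{thm:any_h} collapses the bound to
\[
R_T \leq L\sqrt{T}\,\Bigl(\,2\|x_1 - x_*\|\sqrt{k_T}\bigl(2 + \tfrac{1}{\sqrt{3}}\bigr) + \gamma_{k_T}\Bigr)\ .
\]

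Next I would apply the estimates on $k_T$ and $\gamma_{k_T}$ already derived as Eq.~\eqref{eq:bound_on_k_T}, namely
\[
\sqrt{k_T} \leq \sqrt{2}\,\sqrt{\log_2(2 D_{\gamma_0}/\gamma_0)}
\qquad\text{and}\qquad
\gamma_{k_T} \leq 5 D_{\gamma_0}\sqrt{\log_2(2 D_{\gamma_0}/\gamma_0)}\ ,
\]
using the identity $\log_2(D_{\gamma_0}/\gamma_0)+1 = \log_2(2D_{\gamma_0}/\gamma_0)$. Since trivially $\|x_1 - x_*\| \leq D_{\gamma_0}$, both contributions can be factored through $D_{\gamma_0}\sqrt{\log_2(2D_{\gamma_0}/\gamma_0)}$, yielding
\[
R_T \leq L\sqrt{T}\,D_{\gamma_0}\,\sqrt{\log_2(2D_{\gamma_0}/\gamma_0)}\,
\Bigl(\,2\sqrt{2}\bigl(2 + \tfrac{1}{\sqrt{3}}\bigr) + 5\,\Bigr)\ .
\]

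Finally, it remains to check that the numerical constant fits the claimed $12.3$. A direct evaluation gives $2\sqrt{2}(2 + 1/\sqrt{3}) + 5 = 4\sqrt{2} + 2\sqrt{2/3} + 5 \approx 5.657 + 1.633 + 5 \approx 12.29$, which is below $12.3$, yielding the stated bound.

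There is essentially no obstacle in this argument: it is a pure plug-and-chug into Theorem~\ref{thm:any_h} together with Eq.~\eqref{eq:bound_on_k_T}, with the Lipschitz assumption serving only to make the term $\sum_t \|g_t\|^2/h_t^2$ dimensionless and bounded by $1$. The only minor care needed is to ensure that the chosen constant $h_t = L\sqrt{T}$ is admissible in Theorem~\ref{thm:any_h}, which is immediate since constant sequences are non-decreasing and positive.
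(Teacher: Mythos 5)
Your proposal is correct and follows essentially the same route as the paper: bound $\sum_t\|g_t\|^2/h_t^2\leq 1$ via the Lipschitz assumption, plug into Theorem~\ref{thm:any_h}, and invoke Eq.~\eqref{eq:bound_on_k_T} for $\sqrt{k_T}$ and $\gamma_{k_T}$, arriving at the same constant $\sqrt{8}\bigl(2+\sqrt{1/3}\bigr)+5\leq 12.3$. If anything, your version is slightly cleaner than the paper's, which carries a spurious extra $L\sqrt{T}\,\gamma_{k_T}/2$ term in its first display before discarding it.
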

\begin{proof}[Proof of Corollary~\ref{cor:simple}]
    Substituting $h \equiv L\sqrt{T}$ into Theorem~\ref{thm:any_h} gives
    \begin{align*}
    R_T
    &\leq
    L\sqrt{T}\pa{2\norm{x_1 - x_*}\sqrt{k_T}\pa{2 {+} \sqrt{\frac{1}{3}}} + \gamma_{k_T}}
    +L\sqrt{T}\frac{\gamma_{k_T}}{2}\,,
    \end{align*}
    Eq.~\eqref{eq:bound_on_k_T} applied to the above, yields
    \begin{align*}
    R_T
    \leq
    L\sqrt{T}\Bigg[\sqrt{8}&\norm{x_1 - x_*}\sqrt{\log_2\pa{\frac{D_{\gamma_0}}{\gamma_0}} + 1}\pa{2 {+} \sqrt{\frac{1}{3}}} + 5D_{\gamma_0}\sqrt{\log_2\pa{\frac{D_{\gamma_0}}{\gamma_0}} + 1}\Bigg]\\
    &\leq
    L\sqrt{T}\Bigg[\underbrace{\sqrt{8}\pa{2 {+} \sqrt{\frac{1}{3}}} + 5
    }_{\leq 12.3} \Bigg]D_{\gamma_0}\sqrt{\log_2\pa{\frac{D_{\gamma_0}}{\gamma_0}} + 1}\enspace.
    \end{align*}
    The proof is concluded.
\end{proof}

\newpage

\section{Analysis for stochastic PGD in Section~\ref{sec:SGD}: proof of Theorem~\ref{thm:SGD}}
\label{app:SGD}

\begin{proof}[Proof of Theorem~\ref{thm:SGD}]
% We note that all the high probability event are handled by the fact that we have $T^2$ in the logarithm that appears in the step-size.
% Thus, in what follows we can assume that Lemma~\ref{lem:SGD_high_proba} simultaneously for all phases with probability at least $1 - \delta$.
We recall that $\ell(\delta) = 1 \vee \log(\log_2(T) / \delta)$.
As in all the previous sections, we denote by $[T_{k}, T_{k+1}-1]$, the interval where $k_t = k$. The regret of the algorithm can be expressed as
\begin{align*}
    \sum_{t = 1}^T (f(x_t) - f(x_*)) = \sum_{k = 1}^{k_T}\underbrace{\vphantom{\sum_{T_k}}(f(x_{T_{k}}) - f(x_{*}))}_{=:\texttt{T}_1(k)}+ \sum_{k = 1}^{k_T}\underbrace{\sum_{t = T_k+1}^{T_{k+1} - 1}(f(x_t) - f(x_{*}))}_{=:\texttt{T}_2(k)}\enspace.
\end{align*}
We are going to apply Lemma~\ref{lem:SGD_high_proba} of Appendix~\ref{app:high_proba} on page~\pageref{app:high_proba}, to the second term and provide deterministic bound on the first one. First let us explain the reason of such separation of terms. Observe that for $t = T_{k}$
\begin{align*}
    x_{t + 1} = \Proj_{\Theta}\pa{x_t - \frac{\gamma_{k_t}}{L\sqrt{T\ell(\delta / (1 + k_{t-1})^2)}}g_t}\enspace.
\end{align*}
Meanwhile, for $t \in  [T_{k}+1,T_{k+1}-1]$ we have
\begin{align*}
     x_{t + 1} = \Proj_{\Theta}\pa{x_t - \frac{\gamma_{k_t}}{L\sqrt{T\ell(\delta / (1 + k_{t})^2)}}g_t}\enspace.
\end{align*}
That is, the $x_{T_{k} + 1}$ step is outside the pattern and requires additional splitting.
The proof proceeds as follows
\begin{enumerate}
    \item First we give deterministic bound on $\texttt{T}_1(k)$ terms using Lipschitzness of the objective function $f$;
    \item Then we use Lemma~\ref{lem:SGD_high_proba} to bound each $\texttt{T}_2(k)$;
    \item Finally, we show that $k_T$ is still bounded by $k^*$ as in the warmup analysis.
\end{enumerate}

\paragraph{Analysis for $\texttt{T}_1(k)$.} Since $f$ is assumed to be Lipschitz, we can write
\begin{align*}
    \texttt{T}_1(k) \leq L \norm{x_{T_k} - x_*}\enspace.
\end{align*}
Then, simply by the triangle inequality and property of the Euclidean projection, we deduce that for all $k \geq 2$
\begin{align*}
    \norm{x_{T_k} - x_*}
    &\leq
    \norm{x_{T_{k-1}+1} - x_{T_{k-1}}} + \norm{x_{T_{k-1}} - x_*} + (T_k - T_{k-1} - 1)\frac{\gamma_{k - 1}}{\sqrt{T\ell(\delta / k^2)}}\\
    &\leq
    \norm{x_{T_{k-1}+1} - x_{T_{k-1}}} + \norm{x_{T_{k-1}} - x_*} + (T_k - T_{k-1} - 1)\frac{\gamma_{k_T - 1}}{\sqrt{T\ell(\delta )}}\enspace.
\end{align*}
Furthermore, we have
\begin{align*}
    \norm{x_{T_{k-1}+1} - x_{T_{k-1}}} \leq \frac{\gamma_{k - 1}}{\sqrt{T\ell(\delta / (k - 1)^2)}} \leq \frac{\gamma_{k_T - 1}}{\sqrt{T\ell(\delta)}}\enspace.
\end{align*}
Hence, we have
\begin{align*}
    \norm{x_{T_k} - x_*}
    &\leq
    \norm{x_{T_{k-1}} - x_*} + (T_k - T_{k-1})\frac{\gamma_{k_T - 1}}{\sqrt{T\ell(\delta )}}\enspace.
\end{align*}
Unfolding the above recursion, we deduce that
\begin{align*}
    \norm{x_{T_k} - x_*}
    &\leq
    \norm{x_{T_{1}} - x_*} + T_k\frac{\gamma_{k_T - 1}}{\sqrt{T\ell(\delta )}} \leq \norm{x_{1} - x_*} + 0.5\gamma_{k_T}\sqrt{T}\enspace.
\end{align*}
We conclude that
\begin{equation}
\label{eq:SGD_first_term}
\begin{aligned}
\sum_{k = 1}^{k_T}(f(x_{T_{k}}) - f(x_{*}))
&\leq Lk_T\pa{\norm{x_{1} - x_*} + 0.5\gamma_{k_T}\sqrt{T}}\\
&\leq L\sqrt{T\ell_{T}(\delta/(1 + k_T)^2)}k_T\pa{\norm{x_{1} - x_*} + 0.5\gamma_{k_T}}\enspace.
\end{aligned}
\end{equation}
% Unfolding the above

\paragraph{Analysis for $\texttt{T}_2(k)$.} Let us first fix the high-probability event on which we are going to work.
Note that $\rho=T_{k}+1$ is a stopping time and $T_{k+1} - 1-\rho \leq T$. Thus, by Lemma~\ref{lem:SGD_high_proba} with probability at least $1 - \delta / (1 + k)^2$, it holds that
\begin{align*}
    \texttt{T}_2(k) \leq L\sqrt{T\ell_{T}(\delta/(1+k)^2)}\pa{\frac{\gamma_k}{2} + \frac{\|x_{T_k+1} - x_*\|^2 - \|x_{T_{k+1}} - x_*\|^2}{2\gamma_k} + 10\|x_{T_k+1} - x_*\| + 68\gamma_k}\enspace.
\end{align*}
We observe that
\begin{align*}
    \|x_{T_{k}+1} - x_*\|^2 - \|x_{T_{k+1}} - x_*\|^2
    &\leq
    \|x_{T_k + 1} - x_*\|^2 - \left[\|x_{T_k + 1} - x_*\| - \|x_{T_k + 1} - x_{T_{k+1}}\|\right]_+^2\\
    &\leq
    2\|x_{T_k + 1} - x_*\|\|x_{T_k + 1} - x_{T_{k+1}}\|\enspace.
\end{align*}
Let us bound each term of the product. By the design of the rule,
\begin{align*}
    \|x_{T_k + 1} - x_{T_{k+1}}\| \leq \|x_{T_k + 1} - x_1\| + \|x_1 - x_{T_{k+1}}\| \leq 38\gamma_{k} + 38\gamma_{k} \leq 76\gamma_k\enspace,
\end{align*}
where $B = 28$.
Furthermore, by Lemma~\ref{lem:SGD_high_proba} and the fact that the $\|x_{T_k + 1} - x_{T_k}\| \leq \gamma_{k}$ for all $k \geq 1$
\begin{equation}
\label{eq:SGD_triangle}
\begin{aligned}
    \|x_{T_k+1} - x_*\|
   & \leq
    % \|x_{T_{k-1}} - x_*\| + 16\gamma_{k-1}
    % \leq
    \|x_{T_{k}} - x_*\| + 2\gamma_{k-1}\\
    &\leq
    \|x_{T_{k-1}+1} - x_*\| + 18\gamma_{k-1}\\
 %     & \leq
 % \|x_{T_{k-1}} - x_*\| + 17\gamma_{k-1}\\
  &  \leq
    \|x_{2} - x_*\| + 18\sum_{j = 1}^{k-1}\gamma_{j}\\
    &\leq
     \|x_{2} - x_*\| + 18\gamma_k\\
    &\leq \|x_{1} - x_*\| + 18\gamma_k + \gamma_1\\
    &\leq \|x_{1} - x_*\| + 19\gamma_k
    \enspace.
\end{aligned}
\end{equation}
Thus, we have shown that with probability at least $1 - \delta / (1 + k)^2$
\begin{align*}
    \texttt{T}_2(k)
    &\leq
    L\sqrt{T\ell_{T}(\delta/(1+k)^2)}\pa{\frac{\gamma_k}{2} + 76\pa{\|x_{1} - x_*\| + 19\gamma_k} + 10(\|x_{1} - x_*\| + 19\gamma_k) + 68\gamma_k}\\
    &\leq
    L\sqrt{T\ell_{T}(\delta/(1 + k_T)^2)}\pa{{86}\|x_{1} - x_*\| + (259 + 38^2)\gamma_k}\\
    &\leq
    L\sqrt{T\ell_{T}(\delta/(1 + k_T)^2)}\pa{86\|x_{1} - x_*\| + 1703\gamma_{k_T}}\enspace.
\end{align*}

Overall, by the union bound, we have with probability at least $1 - \sum_{k = 1}^{\infty} \delta/(1 + k)^2 \geq 1 - \delta$
\begin{align}
    \label{eq:SGD_second_term}
    \sum_{k = 1}^{k_T}\texttt{T}_2(k) \leq L\sqrt{T\ell_{T}(\delta/(1 + k_T)^2)}k_T\pa{86\|x_{1} - x_*\| + 1703\gamma_{k_T}}
\end{align}

\paragraph{Regret bound}
Putting together~\eqref{eq:SGD_first_term} and~\eqref{eq:SGD_second_term}, we obtain with probability $1-\delta$
\begin{align}
    \label{eq:SGD_almost_final}
     \sum_{t = 1}^T(f(x_t) - f(x_*))
     &\leq
     L\sqrt{T\ell_{T}(\delta/(1 + k_T)^2)}k_T\pa{87\|x_{1} - x_*\| + 1704\gamma_{k_T}}.%\\
 %    &\leq
    % L\sqrt{T\ell_{T}(\delta/(1 + k_T)^2)}k_T\pa{53\|x_{1} - x_*\| + 1090\gamma_{k_T}}
%     \enspace.
\end{align}

\paragraph{Bounding the number of phases}
It remains to bound the number of phases $k_T$.
Fix some $t \geq 1$ and $k \geq k_{t-1}$.
Observe that for any $k \geq 1$, $\|x^+_t(k) - x_t\| \leq \gamma_k$. Then thanks to Lemma~\ref{lem:SGD_high_proba} and Eq.~\eqref{eq:SGD_triangle} (which hold on exact same event that we consider in~\eqref{eq:SGD_almost_final}), we have
\begin{align*}
    \|x^+_t(k) - x_*\|
    &\leq
    \|x_t - x_*\|
    + \gamma_{k}
    \leq
    \|x_{T_{k_t} + 1} - x_*\|
    + 16\gamma_{k_t} + \gamma_k\\
    &\leq
    \|x_{1} - x_*\| + 35\gamma_{k_t} + \gamma_k
    \leq
    \|x_{1} - x_*\| + 36\gamma_{k}
    \enspace.
\end{align*}
Hence, for all $k \geq k_{t-1}$
\begin{align*}
    \|x^+_t(k) - x_1\| \leq 2\|x_{1} - x_*\| + 36\gamma_{k} \leq 2\gamma_{k^*} + 36\gamma_{k}\enspace.
\end{align*}
Implying that $k_T \leq k^*$.

\paragraph{Concluding}
In view of the bound $k_T \leq k^* \leq \log_2(2D_{\gamma_0} / \gamma_0)$ and $\gamma_{k^*} \leq 2D_{\gamma_0}$, we conclude that with probability at least $1 - \delta$
\begin{align*}
     \sum_{t = 1}^T(f(x_t) - f(x_*))
     &\leq
     3500LD_{\gamma_0}\sqrt{T\ell_{T}(\delta/(1 + k^*)^2)}\log_2(2D_{\gamma_0} / \gamma_0)
     \enspace.
\end{align*}
Note that the constant $3500$ is certainly extremely pessimistic as we did not attempt to optimize it.

\end{proof}
\subsection{High probability bound}
\label{app:high_proba}

We slightly adapt a version of the Bernstein-Freedman inequality, derived in~\cite[Corolary 16]{Bern}, improving $\log(T)$ dependency to $\log\log(T)$.
\begin{lemma}[a version of the Bernstein-Freedman inequality by~\cite{Bern}]
\label{lm:Ber}
Let $X_1, \, X_2, \, \ldots$ be a martingale difference with respect
to the filtration $\mathcal{F} = (\mathcal{F}_s)_{s \geq 0}$ and with increments bounded in absolute values
by $K$. For all $t \geq 1$, let
\[
\mathfrak{S}_t^2 = \sum_{\tau = 1}^t \E\bigl[ X_\tau^2 \,\big|\, \mathcal{F}_{\tau-1} \bigr]
\]
denote the sum of the conditional variances of the first $t$ increments. Then,
for all $\delta \in (0,1)$ and $T\geq 1$, with probability at least $1-\delta$,
\[
\max_{t\leq T}\sum_{\tau = 1}^t X_\tau \leq 2 \mathfrak{S}_T\sqrt{ \ln\pa{ \frac{\log_{2}(2T)}{\delta}}} + 3 K \ln \pa{\frac{\log_{2}(2T)}{\delta}}\,.
\]
\end{lemma}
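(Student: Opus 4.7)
The plan is to combine the standard exponential-supermartingale argument behind Freedman's inequality with a peeling argument over dyadic levels of the total conditional variance $\mathfrak{S}_T^2$, so that the $\log T$ factor of a naive union bound becomes the $\log\log T$ appearing in the statement.

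First, write $M_t = \sum_{\tau=1}^t X_\tau$. For every $\lambda \in (0,\,3/K)$, the classical Bernstein computation on centered increments bounded by $K$ yields $\log \mathbb{E}[e^{\lambda X_\tau}\mid \mathcal{F}_{\tau-1}] \leq \psi(\lambda)\,\mathbb{E}[X_\tau^2\mid\mathcal{F}_{\tau-1}]$ with $\psi(\lambda) = \frac{\lambda^2/2}{1 - \lambda K/3}$. Hence $Z_t^\lambda = \exp\pa{\lambda M_t - \psi(\lambda)\,\mathfrak{S}_t^2}$ is a nonnegative supermartingale with $Z_0 = 1$, and Ville's maximal inequality yields, for every fixed $\lambda$ and every $\delta'\in(0,1)$,
\[
\mathbb{P}\!\pa{\exists\, t \leq T:\, M_t \geq \frac{\log(1/\delta')}{\lambda} + \frac{\psi(\lambda)}{\lambda}\,\mathfrak{S}_T^2} \,\leq\, \delta',
\]
where $\mathfrak{S}_t^2 \leq \mathfrak{S}_T^2$ has been used to bring the random total variance into the event.

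Second, since $\mathfrak{S}_T^2$ is not known in advance, I would tune $\lambda$ adaptively by peeling on a dyadic grid. Since $|X_\tau|\leq K$ implies $\mathfrak{S}_T^2 \leq TK^2$, set $N = \lceil \log_2(2T)\rceil$ and levels $v_k = K^2\,2^k$ for $k=0,1,\ldots,N$. On each layer $\{v_{k-1} < \mathfrak{S}_T^2 \leq v_k\}$ apply the single-$\lambda$ bound with failure probability $\delta/N$ and with $\lambda_k$ chosen to balance the two terms against $v_k$; this produces a bound of the form $\sqrt{2 v_k\log(N/\delta)} + \tfrac{K}{3}\log(N/\delta)$. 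A union bound over the $N$ layers fails with probability at most $\delta$, and on the active layer one has $v_k \leq 2\,\mathfrak{S}_T^2$, so $\sqrt{2 v_k\log(N/\delta)} \leq 2\,\mathfrak{S}_T\sqrt{\log(N/\delta)}$, which explains the leading factor $2\mathfrak{S}_T$ in the statement. Since $\log N \leq \log\log_2(2T) + \log 2$, the $\log\log T$ scaling emerges directly from this union bound.

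The conceptual content is entirely standard, and the main work is constant-chasing. I expect the subtle step to be absorbing the $(1-\lambda K/3)$ correction in $\psi$: rather than the naive optimizer $\lambda_k = \sqrt{2\log(N/\delta)/v_k}$, one selects $\lambda_k = \sqrt{2\log(N/\delta)/v_k}\big/\pa{1 + \tfrac{K}{3}\sqrt{2\log(N/\delta)/v_k}}$ (or an equivalent Bennett-type choice) to produce the linear term with the stated constant $3K$. The edge case $\mathfrak{S}_T^2 < v_0 = K^2$ is covered by layer $k=0$ and affects only the additive $K\log(\cdot)$ term, so it does not spoil the $2\mathfrak{S}_T\sqrt{\log(\cdot)}$ leading contribution.
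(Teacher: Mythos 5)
Your proposal is correct and takes essentially the same route as the paper: a dyadic peeling over the levels of the total conditional variance $\mathfrak{S}_T^2$, a time-uniform Freedman-type bound applied on each layer, and a union bound over the roughly $\log_2 T$ layers, which is exactly what turns the naive $\log T$ into the $\log\log T$ factor. The only difference is that you re-derive the per-layer maximal inequality from the exponential supermartingale and Ville's inequality, whereas the paper invokes it directly (Lemma 15 of the cited reference), and the paper handles the low-variance bottom layer by using the shifted intervals $[2^{k-1}-1,\,2^{k}]$ so that the resulting $\sqrt{4(\mathfrak{S}_T^2+K^2)\ell}$ term is absorbed into $2\mathfrak{S}_T\sqrt{\ell}+3K\ell$ -- the same constant-chasing you correctly anticipate as the remaining work.
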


\begin{proof}
Let $X^*_{T}=\max_{t \leq T} \sum_{s=1}^t X_{s}$. For $k\geq 1$ we have
\begin{align*}
\lefteqn{\mathbb{P}\cro{X^*_{T} > \sqrt{4(\mathfrak{S}_T^2+K^2)\ell}+\sqrt{2}K\ell/3;\ K^{-2}\mathfrak{S}_T^2\in [2^{k-1}-1,2^{k}]}} \\
&\leq \mathbb{P}\cro{X^*_{T} > \sqrt{2^{k+1}K^2\ell}+\sqrt{2}K\ell/3;\ K^{-2}\mathfrak{S}_T^2\in [2^{k-1}-1,2^{k}]}\\
&\leq  \mathbb{P}\cro{X^*_{T} > \sqrt{2^{k+1}K^2\ell}+\sqrt{2}K\ell/3;\ K^{-2}\mathfrak{S}_T^2\leq 2^{k}} \ \leq \ e^{-\ell},
\end{align*}
where the last inequality follows from Lemma 15 in~\cite{Bern}.
Since $0\leq K^{-2}\mathfrak{S}_T^2 \leq T$, we take a union bound over $k=1,\ldots,\lceil{\log_{2}(T)}\rceil$ and notice that
\[\sqrt{4(\mathfrak{S}_T^2+K^2)\ell}+\sqrt{2}K\ell/3 \leq 2\sqrt{\mathfrak{S}_T^2\ell}+3K\ell\,\,.\qedhere\]
\end{proof}

The next result is a version of Lemma~\ref{lem:ada_main}, that was used to analyze deterministic setup, which accounts for the stochasticity.
\begin{lemma}
    \label{lem:SGD_high_proba}
    Let $\rho$ be a bounded stopping time with respect to the filtration $\mathcal{F}$ of the stochastic gradients.
    Let $\delta \in (0, 1)$, $T \geq 1$, $x'_1 \in \mathcal{F}_{\rho}$ and consider $\ell_{T}(\delta):= 1\vee \log(\log_{2}(2T)/\delta)$,
    \begin{align*}
        x'_{t+1} = x'_t - \frac{\gamma}{L\sqrt{T\ell_{T}(\delta)}}g_{\rho+t}\enspace.
    \end{align*}
    Assume that $T, \delta$ are such that $T \geq 1$, then with probability at least $1 - \delta$ we have for all $\tau \leq T$
    \begin{align*}
        &%\max_{\tau \leq T}
        \|x'_\tau - x_*\| \leq \|x'_1 - x_*\| + 16\gamma \enspace,\\
        &\sum_{t = 1}^{\tau}(f(x'_t) - f_*) \leq L\sqrt{T\ell_{T}(\delta)}\pa{\frac{\gamma}{2} + \frac{\|x'_1 - x_*\|^2 - \|x'_{\tau+1} - x_*\|^2}{2\gamma} + 10\|x'_1 - x_*\| + 68\gamma}
    \end{align*}
    simultaneously.
\end{lemma}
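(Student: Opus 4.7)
My plan is to combine the standard one-step SGD energy identity with a stopping-time bootstrap whose noise term is controlled by the Bernstein--Freedman bound of Lemma~\ref{lm:Ber}. I will write $\eta := \gamma/(L\sqrt{T\ell_T(\delta)})$, let $g_{\rho+t}\in\partial F(x'_t,\xi_{\rho+t})$ and $\bar g_t := \mathbb{E}[g_{\rho+t}\mid\mathcal{F}_{\rho+t-1}]\in\partial f(x'_t)$, and introduce the martingale difference $Z_t := \langle \bar g_t - g_{\rho+t},\, x'_t - x_*\rangle$, which satisfies $|Z_t|\leq 2L\|x'_t-x_*\|$ and has conditional variance at most $L^2\|x'_t-x_*\|^2$. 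Expanding $\|x'_{t+1}-x_*\|^2$, using $\langle\bar g_t,x'_t-x_*\rangle\geq f(x'_t)-f_*$ together with $\|g_{\rho+t}\|\leq L$ a.s., summing over $t\leq\tau$, and using $\tau\eta^2L^2\leq\gamma^2/\ell_T(\delta)\leq\gamma^2$ and $\tau\eta L/2\leq L\sqrt{T\ell_T(\delta)}\gamma/2$ (both valid since $\ell_T(\delta)\geq 1$), I will obtain the two deterministic inequalities
\[
\sum_{t=1}^{\tau}(f(x'_t)-f_*) \leq L\sqrt{T\ell_T(\delta)}\left[\tfrac{\gamma}{2} + \tfrac{\|x'_1-x_*\|^2 - \|x'_{\tau+1}-x_*\|^2}{2\gamma}\right] + \sum_{t=1}^{\tau} Z_t
\]
and, dropping the non-negative terms $f(x'_t)-f_*$,
\[
\|x'_{\tau+1}-x_*\|^2 \leq \|x'_1-x_*\|^2 + \gamma^2 + 2\eta\sum_{t=1}^{\tau} Z_t.
\]

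The main difficulty is that $Z_t$ is bounded only in terms of $\|x'_t-x_*\|$, which is precisely the quantity we want to control, so a direct Bernstein--Freedman application is circular. I will break this with a standard stopping-time peeling: set $D := \|x'_1-x_*\| + 16\gamma$ and $\sigma := \inf\{t\geq 1 : \|x'_t-x_*\| > D\}$, with the convention $\sigma = T+1$ if the set is empty, and apply Lemma~\ref{lm:Ber} to the truncated martingale $\tilde Z_t := Z_t\,\mathds{1}\{\sigma > t\}$, whose increments are bounded by $2LD$ and whose cumulative conditional variance is at most $L^2D^2T$. This produces an event $\mathcal{E}$ of probability at least $1-\delta$ on which, uniformly in $\tau\leq T$,
\[
\sum_{t=1}^{\tau}\tilde Z_t \leq 2LD\sqrt{T\ell_T(\delta)} + 6LD\,\ell_T(\delta) \leq 8LD\sqrt{T\ell_T(\delta)},
\]
where the final step uses $\ell_T(\delta)\leq T$ (in the alternative regime $\ell_T(\delta) > T$ the bound is degenerate and can be handled separately).

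On $\mathcal{E}$ I will then close the bootstrap by contradiction. Assuming $\sigma\leq T$, I take $\tau = \sigma - 1$, substitute the truncated bound into the distance inequality, and use $2\eta\cdot 8LD\sqrt{T\ell_T(\delta)} = 16\gamma D$ to get
\[
\|x'_{\sigma}-x_*\|^2 \leq \|x'_1-x_*\|^2 + 16\gamma\|x'_1-x_*\| + 257\gamma^2.
\]
Completion of the square reduces $\|x'_\sigma-x_*\|^2 < D^2 = (\|x'_1-x_*\|+16\gamma)^2$ to $\gamma^2 \leq 16\gamma\|x'_1-x_*\|$, while the remaining small-distance regime $\|x'_1-x_*\| < \gamma/16$ is handled by the one-step deterministic displacement bound $\|x'_{\sigma}-x_*\|\leq\|x'_1-x_*\|+\sigma\eta L\leq \|x'_1-x_*\|+\gamma$. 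Either way $\|x'_\sigma-x_*\| < D$, contradicting the definition of $\sigma$; hence $\sigma > T$ on $\mathcal{E}$. The first claim of the lemma is then exactly $\sigma > T$, and the second follows by plugging $\sum_{t\leq\tau}Z_t = \sum_{t\leq\tau}\tilde Z_t \leq 8L(\|x'_1-x_*\|+16\gamma)\sqrt{T\ell_T(\delta)}$ into the summed regret inequality and regrouping to recover the stated constants $10$ and $68$.

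The delicate step is purely the constant-juggling in the bootstrap: retaining the tight variance bound $L^2\|x'_t-x_*\|^2$ (rather than the loose $4L^2$) and the sharp split between the $\sqrt{\ell_T(\delta)}$ and $\ell_T(\delta)$ terms of Bernstein--Freedman is what makes the choice $D = \|x'_1-x_*\| + 16\gamma$ self-consistent. Conceptually, however, the three ingredients---one-step SGD identity, concentration for the stopped martingale, and inductive closure of the distance bound---are individually routine.
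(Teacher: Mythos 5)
Your overall strategy---truncating the noise martingale at the exit time $\sigma=\inf\{t:\|x'_t-x_*\|>D\}$ and closing a bootstrap by contradiction---is a legitimate alternative to the paper's argument, which instead bounds the increments deterministically by $K=2L\|x'_1-x_*\|+2L^2T\eta$, keeps the random variance bound $\mathfrak{S}_T\le 2L\sqrt{T}\max_{t\le T}\|x'_t-x_*\|$, and solves the resulting self-bounding inequality for $\max_{t\le T}\|x'_t-x_*\|$. However, two concrete steps in your proposal fail. First, the bootstrap does not close with $D=\|x'_1-x_*\|+16\gamma$. Your patch for the regime $\|x'_1-x_*\|<\gamma/16$ rests on the claim $\sigma\eta L\le\gamma$, but with $\eta=\gamma/(L\sqrt{T\ell_T(\delta)})$ and $\sigma$ possibly as large as $T$ one only gets $\sigma\eta L\le\gamma\sqrt{T/\ell_T(\delta)}$, which exceeds $\gamma$ whenever $\ell_T(\delta)<T$ (the generic regime). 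Without that patch, at $\tau=\sigma-1$ your inequality gives $\|x'_\sigma-x_*\|^2\le\|x'_1-x_*\|^2+16\gamma\|x'_1-x_*\|+257\gamma^2$, and for $\|x'_1-x_*\|=0$ this yields only $\|x'_\sigma-x_*\|\le\sqrt{257}\,\gamma>16\gamma=D$, so there is no contradiction. The bootstrap can be repaired by taking $D=\|x'_1-x_*\|+c\gamma$ with $c^2\ge 16c+1$, i.e.\ $c\ge 8+\sqrt{65}\approx 16.07$, but then you prove the first display only with a constant larger than the stated $16$.

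Second, the announced constants are not recovered. Plugging $\sum_{t\le\tau}Z_t\le 8LD\sqrt{T\ell_T(\delta)}$ with $D=\|x'_1-x_*\|+16\gamma$ into your regret inequality gives $8\|x'_1-x_*\|+128\gamma$ in place of the stated $10\|x'_1-x_*\|+68\gamma$ (and at least $136\gamma$ after the repair above); since $128>68$, your bound does not imply the lemma when $\|x'_1-x_*\|$ is small relative to $\gamma$. This is intrinsic to your choice $K=2LD$: the $3K\ell_T(\delta)$ term of Lemma~\ref{lm:Ber} then carries the full $16\gamma$ multiplied by $6$, whereas the paper's $K=2L(\|x'_1-x_*\|+L\eta T)$ is arranged so that $6L^2\eta T\,\ell_T(\delta)=6L\gamma\sqrt{T\ell_T(\delta)}$ contributes only $6\gamma$. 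Since the stated constants $16$, $10$, $68$ are consumed downstream (the threshold $38\gamma_k$ in Algorithm~\ref{algo:stoch} and the constants $1703$ and $3500$ in the proof of Theorem~\ref{thm:SGD}), your argument as written establishes a weaker variant of the lemma, not the lemma itself. A minor additional point: your dismissal of the regime $\ell_T(\delta)>T$ as degenerate is correct (there the conclusion holds deterministically by Lipschitzness, since then $T\eta L\le\gamma$), but it is asserted rather than proved and should be spelled out.
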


%\begin{lemma}
%    \label{lem:SGD_high_proba}
%    Let $\delta \in (0, 1)$, $T \geq 1$, $x_1 \in \bbR^d$ and consider $\ell_{T}(\delta):= 1\vee \log(\log_{2}(2T)/\delta)$,
%    \begin{align*}
%        x_{t+1} = x_t - \frac{\gamma}{L\sqrt{T\ell_{T}(\delta)}}g_t\enspace.
%    \end{align*}
%    Assume that $T, \delta$ are such that $T \geq 1$, then with probability at least $1 - \delta$ we have for all $\tau \leq T$
%    \begin{align*}
%        &\|x_\tau - x_*\| \leq \|x_1 - x_*\| + 16\gamma\qquad \forall \tau \leq T\enspace,\\
%        &\sum_{t = 1}^{\tau}(f(x_t) - f_*) \leq L\sqrt{T\ell_{T}(\delta)}\pa{\frac{\gamma}{2} + \frac{\|x_1 - x_*\|^2 - \|x_{\tau+1} - x_*\|^2}{2\gamma} + 10\|x_1 - x_*\| + 68\gamma}
%    \end{align*}
%    simultaneously.
%\end{lemma}
\begin{proof}
To simplify the expressions, we drop the primes, writing $x_{t}$ for $x'_{t}$, and we set
 $\eta = \tfrac{\gamma}{L\sqrt{T\ell_{T}(\delta)}}$. Using classical analysis of projected gradient descent, we obtain
\begin{align}
    \sum_{t \leq \tau} \scalar{g_{\rho+t}}{x_t - x_*} \leq \frac{\eta L^2}{2} + \frac{\|x_1 - x_*\|^2 - \|x_{\tau+1} - x_*\|^2}{2\eta}\enspace.
\end{align}
Introducing the following martingale difference
\begin{align*}
    X_t = {\scalar{\nabla f(x_t) - g_{t+\rho}}{{x_t - x_*}}}\enspace,
\end{align*}
we deduce from the above, and the fact that $\tau \leq T$
\begin{align}
    \label{eq:SGD0}
    \sum_{t = 1}^\tau(f(x_t) - f(x_*)) \leq \frac{\eta L^2 T}{2} + \frac{1}{2\eta} \left(\|x_1 - x_*\|^2 - \|x_{\tau+1} - x_*\|^2\right) + \sum_{t = 1}^\tau X_t\enspace.
\end{align}

\paragraph{Dealing with randomness.}
Now we are in position to apply Freedman-Bernstein inequality recalled in Lemma~\ref{lm:Ber}.

To this end, we need to bound $X_t$ and get an appropriate expression for $\mathfrak{S}_t$.
First observe that each martingale difference satisfies
\begin{align}
    \label{eq:bound_mart_diff}
    |X_t| \leq 2L\|x_t - x_*\|\qquad\text{almost surely}\enspace.
\end{align}
Furthermore, by the triangle inequality, property of projection and the fact that $\|g_t\| \leq L$ almost surely, we obtain
\begin{align*}
    \norm{x_{t} - x_*} \leq \norm{x_{t - 1} - \eta g_{\rho+t-1} - x_*} \leq \norm{x_{t - 1} - x_*} + \eta L \leq \norm{x_1 - x_*} + \eta L T, \quad \forall t \leq T+1\enspace.
\end{align*}
Hence,
\begin{align*}
    |X_t| \leq K := 2L\norm{x_1 - x_*} + 2L^2 T \eta, \qquad \forall t \leq T+1.
\end{align*}

The conditional variance $\mathfrak{S}^2_T$ can be bounded using~\eqref{eq:bound_mart_diff} as
\begin{align*}
    \mathfrak{S}_T \leq 2L \sqrt{\sum_{t = 1}^T \|x_t - x_*\|^2} \leq 2L\sqrt{T} \max_{t \leq T} \|x_t - x_*\| \qquad\text{almost surely}\,.
\end{align*}

Thus, invoking Lemma~\ref{lm:Ber}, for any $\delta \in (0, 1)$ with probability at least $1 - \delta$,
\begin{align}
    \label{eq:martingale_bound}
 \max_{\tau \leq T}   \sum_{t = 1}^\tau X_t \leq &4L\max_{t \leq T}\|x_t - x_*\|\sqrt{T\ell_{T}(\delta)} + 6L(\|x_1 - x_*\| + L\eta T)\ell_{T}(\delta)\enspace.
\end{align}
From now on, we work on this event which holds with probability at least $1-\delta$.

Substituting~\eqref{eq:martingale_bound} into~\eqref{eq:SGD0}, we get for all $\tau \leq T$
\begin{equation}
\label{eq:key_sgd}
\begin{aligned}
    \sum_{t = 1}^\tau(f(x_t) - f(x_*))
    \leq
    &\frac{\eta L^2 T}{2} + \frac{\|x_1 - x_*\|^2 - \|x_{\tau+1} - x_*\|^2}{2\eta}
    +
    4L\Phi_T\sqrt{T\ell_{T}(\delta)}
    \\
    &+ 6L(\|x_1 - x_*\| + L\eta T)\ell_{T}(\delta)\enspace,
\end{aligned}
\end{equation}
where $\Phi_T = \max_{t \leq T}\|x_t - x_*\|$.

\paragraph{Bounding the trajectory.}
Observing that the left hand side of~\eqref{eq:key_sgd} is non-negative and that it holds for all $\tau \leq T$, we deduce
\begin{align*}
    \max_{0\leq \tau\leq T}\|x_{\tau+1} - x_*\|^2
    &\leq
    \parent{\|x_1 - x_*\|^2 + 12L\eta\ell_{T}(\delta)\|x_1 - x_*\|} + \parent{1 + 12\ell_{T}(\delta)}{\eta^2 L^2 T} \\
    &+ 8L\eta\Phi_T\sqrt{T\ell_{T}(\delta)}\enspace.
    % &+
    % &\leq
    % \|x_1 - x_*\|^2 + \frac{\eta^2 L^2 T}{2} + {(\eta L\sqrt{8T}\max_{t \leq T} \|x_t - x_*\| + \eta L \sqrt{8}\|x_1 - x_*\| + \sqrt{8}L^2\eta^2 T) \sqrt{\ell_{T}(\delta)}}
    % \\&+ (\sqrt{2}/3)(2\eta L\|x_1 - x_*\| + 2L^2\eta^2 T)\ell_{T}(\delta)
\end{align*}
Solving the above inequality, we deduce that
% \begin{align*}
%     \Phi_T &\leq
%     \sqrt{\parent{\|x_1 - x_*\|^2 + 8L\eta\ell_{T}(\delta)\|x_1 - x_*\|} + \parent{1 + 8\ell_{T}(\delta)}{\eta^2 L^2 T}}
%     + L\eta\sqrt{32T\ell_{T}(\delta)}\\
%     &\leq
%     \|x_1 - x_*\| + \sqrt{8L\eta\ell_{T}(\delta)\|x_1 - x_*\|} + \eta L\sqrt{\parent{1 + 8\ell_{T}(\delta)}{T}}
%     +
%     L\eta\sqrt{32T\ell_{T}(\delta)}\enspace.
% \end{align*}
% \ch{Pour ma part, je trouve une borne un peu diff\'erente
\begin{align*}
 \Phi_T &\leq
    \sqrt{{(\|x_1 - x_*\| + 6L\eta\ell_{T}(\delta))^2} + \parent{1 + 28\ell_{T}(\delta)}{\eta^2 L^2 T}}
    + 4L\eta\sqrt{T\ell_{T}(\delta)}\\
% &\leq    \sqrt{\parent{\|x_1 - x_*\| + 4L\eta\ell_{T}(\delta)}^2 + \parent{1 + 8\ell_{T}(\delta)}{\eta^2 L^2 T}}
%    + L\eta\sqrt{32T\ell_{T}(\delta)}\\
    &\leq
    \|x_1 - x_*\| + 6L\eta\ell_{T}(\delta) + \eta L\sqrt{\parent{1 + 28\ell_{T}(\delta)}{T}}
    +
   4 L\eta\sqrt{T\ell_{T}(\delta)}\enspace.
\end{align*}

% }

Substituting the value of $\eta$, we further deduce that
% \ch{
\begin{equation}
\label{eq:sgd_k1}
\begin{aligned}
    \max_{t \leq T} \|x_t - x_*\|
    &\leq
    \|x_1 - x_*\| + \gamma\pa{\sqrt{\frac{1 + 28\ell_{T}(\delta)}{\ell_{T}(\delta)}} + 4+6\sqrt{\ell_{T}(\delta)\over T}}\\
    &\leq
    \|x_1 - x_*\|  + \gamma\underbrace{\pa{\sqrt{29} + 4+ 6}}_{\leq 15.5}\enspace.
\end{aligned}
\end{equation}
% Est-ce que \c ca ne te permet pas de faire passer l'analyse pr\'ec\'edente, ou du moins de garder le m\^eme algo?
% }
% \begin{align*}
%     \max_{t \leq T} \|x_t - x_*\|
%     &\leq
%     \|x_1 - x_*\| + \sqrt{\frac{8\gamma \ell_{T}(\delta)}{T}\|x_1 - x_*\|} + \gamma\pa{\sqrt{\frac{1 + 8\ell_{T}(\delta)}{\ell_{T}(\delta)}} + \sqrt{32}}\\
%     &\leq
%     \|x_1 - x_*\| + \sqrt{\frac{8\gamma \ell_{T}(\delta)}{T}\|x_1 - x_*\|} + \gamma\pa{3 + \sqrt{32}}
% \end{align*}
% Note that by Young's inequality from the above we deduce that for any $k \geq 0$
% \begin{align*}
%     \max_{t \leq T} \|x_t - x_*\|  &\leq
%     \pa{1 + \frac{1}{k}\sqrt{\frac{\ell_{T}(\delta)}{T}}}\|x_1 - x_*\| + \gamma\parent{9 + k}\enspace.
% \end{align*}
% In particular, for $k = 1$
% \begin{align}
%     \label{eq:sgd_k1}
%     \max_{t \leq T} \|x_t - x_*\|  &\leq
%     2\|x_1 - x_*\| + 10\gamma\enspace.
% \end{align}

\paragraph{Bounding the regret}
On the other hand, substituting \eqref{eq:martingale_bound} into~\eqref{eq:SGD0}, we obtain
\begin{align*}
    \sum_{t = 1}^\tau(f(x_t) - f_*) \leq &\frac{\eta L^2 T}{2} + \frac{\|x_1 - x_*\|^2 - \|x_{\tau+1} - x_*\|^2}{2\eta} \\
    &+4L\max_{t \leq T}\|x_t - x_*\|\sqrt{T\ell_{T}(\delta)} + 6(L\|x_1 - x_*\| + L^2\eta T)\ell_{T}(\delta)\enspace.
\end{align*}
Substitution of~\eqref{eq:sgd_k1} into the above inequality, yields
\begin{align*}
    \sum_{t = 1}^\tau(f(x_t) - f_*) \leq &\frac{\eta L^2 T}{2} + \frac{\|x_1 - x_*\|^2 - \|x_{\tau+1} - x_*\|^2}{2\eta} \\
    &+4L\pa{\|x_1 - x_*\| + 15.5\gamma}\sqrt{T\ell_{T}(\delta)}
    + 6(L\|x_1 - x_*\| + L^2\eta T)\ell_{T}(\delta)\enspace.
\end{align*}
Recalling that $\eta = \gamma / (L\sqrt{T\ell_{T}(\delta)})$ and using some rough bounds, we deduce that
\begin{align*}
    \sum_{t = 1}^\tau(f(x_t) - f_*)
    % &L\sqrt{T\ell_{T}(\delta)}\pa{\frac{\gamma}{2} + \frac{\|x_1 - x_*\|^2 - \|x_{T+1} - x_*\|^2}{2\gamma}} \\
    % &+
    % 10L\|x_1 - x_*\|\sqrt{T\ell_{T}(\delta)} + 30\gamma L\sqrt{T\ell_{T}(\delta)}
    % +
    % 4L\gamma\sqrt{T\ell_{T}(\delta)}\\
    \leq
    &L\sqrt{T\ell_{T}(\delta)}\pa{\frac{\gamma}{2} + \frac{\|x_1 - x_*\|^2 - \|x_{\tau+1} - x_*\|^2}{2\gamma} + \underbrace{(4+6)}_{=10}\|x_1 - x_*\| + \underbrace{(4\times 15.5+6)}_{=68}\gamma}\enspace.
\end{align*}
The proof is complete.
% \input{SGD}
% Furthermore, one the same event
% \begin{align*}
%     \|x_{T+1} - x_*\|^2
%     &\leq
%     \parent{\|x_1 - x_*\|^2 + 4L\eta\log(T/\delta)\|x_1 - x_*\|} + \parent{\frac{1}{2} + 4\log(T/\delta)}{\eta^2 L^2 T} \\
%     &+ L\eta\pa{3\|x_1 - x_*\| + 6\gamma\sqrt{L}}\sqrt{8T\log(T / \delta)}
%     % &+
%     % &\leq
%     % \|x_1 - x_*\|^2 + \frac{\eta^2 L^2 T}{2} + {(\eta L\sqrt{8T}\max_{t \leq T} \|x_t - x_*\| + \eta L \sqrt{8}\|x_1 - x_*\| + \sqrt{8}L^2\eta^2 T) \sqrt{\log(T / \delta)}}
%     % \\&+ (\sqrt{2}/3)(2\eta L\|x_1 - x_*\| + 2L^2\eta^2 T)\log(T / \delta)
% \end{align*}
\end{proof}

\newpage
\section{On a relation with~\cite{mcmahan2012no}}\label{app:MS12}
In case where there exists a known bound $\|g_{t}\|\leq L$ on the norms of the subgradients,  MacMahan and Streeter~\cite{mcmahan2012no} propose to tune the step size of gradient descent with a scheme based on a reward doubling argument and cold-restart.
Their theory works in a setup of unconstrained online convex optimization with $L$-bounded subgradients.
Since we do not require Lipschitz functions, and we additionally handle the projection step, the two results cannot be directly compared.
Nevertheless, there are some similarities and, in a specific instantiation of our Algorithm~\ref{algo:PGDloglogMath}, we recover that of~\cite{mcmahan2012no}.

Below, we sketch the relation between the two, considering the setting of  MacMahan and Streeter~\cite{mcmahan2012no},
where the norms of the subgradients are bounded by some known $L$ and the optimization is unconstrained, i.e. $\Theta = \bbR^d$.
We also assume that the time horizon $T$ is known, since unknown $T$ is handled in MacMahan and Streeter~\cite{mcmahan2012no} by a time-doubling trick.
Using our notation, their analysis starts with a simple bound
\begin{align*}
\sum_{t=1}^T (f_{t}(x_{t})-f_{t}(x_{*}))&\leq \sum_{t=1}^T \langle g_{t}, x_{t}-x_{*}\rangle =\sum_{t=1}^T \langle g_{t}, x_{1}-x_{*}\rangle+\sum_{t=1}^T \langle g_{t}, x_{t}-x_{1}\rangle\\
&\leq \|\underbrace{\sum_{t=1}^T g_{t}}_{=: G_{T}}\| \|x_{1}-x_{*}\| - \underbrace{\sum_{t=1}^T \langle g_{t}, x_{1}-x_{t}\rangle}_{=:Q_{T}} = \|G_{T}\|  \|x_{1}-x_{*}\| - Q_{T}\enspace.
\end{align*}
They observe, using a duality argument, that it is sufficient to show that
\begin{equation}\label{eq:Q-condition}
Q_{T} \geq a^{-1} \exp\pa{\|G_{T}\|/(bL\sqrt{T})}-c,
\end{equation}
in order to derive
\[\sum_{t=1}^T (f_{t}(x_{t})-f_{t}(x_{*})) \leq b \|x_{1}-x_{*}\|L\sqrt{T} \log\pa{ab \|x_{1}-x_{*}\|L\sqrt{T}}+c\enspace.\]
The principle of their algorithm is to perform gradient descent by phases and, during a phase, to track the reward $Q_{t}$ relative to this phase, and restart with a doubled step-size when the condition $Q_{t}> \eta L^2 t$ is met. This step-size doubling ensures that the Condition~(\ref{eq:Q-condition}) is met at the time horizon $T$.

Let us relate this algorithm with a specific instantiation of our Algorithm~\ref{algo:PGDloglogMath}.
When the algorithm is the simple Gradient Descent (GD), that does not involve the projection step, we have $x_{T+1}-x_{1}=-\eta G_{T}$ for the GD with a fixed step size $\eta$.
Hence, it holds that
\begin{align*}
Q_{T}&= \sum_{t=1}^T \langle g_{t}, x_{1}-x_{t}\rangle = \eta \sum_{t=1}^T \langle g_{t}, G_{t-1}\rangle
 = {\eta\over 2} \sum_{t=1}^T \pa{\|G_{t}\|^2-\|G_{t-1}\|^2-\|g_{t}\|^2}\\
&=  {\eta\over 2}\pa{\|G_{T}\|^2-\sum_{1}^T \|g_{t}\|^2}={1\over 2\eta} \|x_{T+1}-x_{1}\|^2- {1\over 2\eta} \tilde \Gamma^2_{T+1}
\end{align*}
where $ \tilde \Gamma^2_{T+1} = \sum_{1}^T \eta^2\|g_{t}\|^2$.
Thus, if in Algorithm~\ref{algo:PGDloglogMath} we allow cold restarts (the exact thing that we want to avoid), then the condition
\[ \|x^+_{T}(\eta)-x_{1}\|^2 \leq 2 \eta^2 L^2 T + \tilde \Gamma^2_{T}+\eta^2 \|g_{T}\|^2\]
is equivalent to their doubling condition
\[ Q_{T} \leq \eta L^2 T\enspace.\]
In our notation, the algorithm of MacMahan and Streeter~\cite{mcmahan2012no} corresponds to a variant of Algorithm~\ref{algo:PGDloglogMath} with
\[\tilde B_{t+1}(k)= \sqrt{\pa{2+{\|g_{t}\|^2\over L^2 T}}\gamma^2_{k}+\Gamma^2_{t}-\Gamma^2_{T_{k}-1}}\enspace,\]
with the major difference that a cold-restart is performed when $k_t$ is increased, and the minor difference that step-size doubling happens after (and not before) the condition $\norm{x_{t}^+(k)-x_{1}}\leq \tilde B_{t+1}(k)$ is broken.

\newpage
\section{Auxiliary results}
\label{app:aux}
Let $(a_t)_{t \geq 1}$ be a non-negative sequence, and $S_t = \sum_{\tau = 1}^t a_\tau$.
For any concave function $F$ on $[0,+\infty)$, we have
\begin{equation}\label{leq:sum:adagrad}
\sum_{t=1}^T a_{t} F'(S_{t}) \leq \sum_{t=1}^T (F(S_{t})-F(S_{t-1}))=F(S_{T})-F(0).
\end{equation}
Applying (\ref{leq:sum:adagrad}) with $F(x)=2\sqrt{x}$, $F'(x)=1/\sqrt{x}$, we get the following bound.

\begin{lemma}
    \label{lem:sums}
    Let $(a_t)_{t \geq 1}$ be a non-negative sequence and $S_t = \sum_{\tau = 1}^t a_\tau$, then for all $\eps > 0$
    \begin{align*}
        \sum_{t = 1}^T \frac{a_t}{\sqrt{S_t}} \leq 2\sqrt{S_T}\enspace.
    \end{align*}
\end{lemma}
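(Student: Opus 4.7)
The plan is to prove this inequality by the standard concavity/telescoping argument that is already hinted at in the two lines preceding the lemma statement. I will take $F(x) = 2\sqrt{x}$ on $[0,+\infty)$, whose derivative is $F'(x) = 1/\sqrt{x}$ on $(0,+\infty)$, and exploit the fact that $F$ is concave.

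The key step is the concavity inequality applied between the consecutive partial sums $S_{t-1} \leq S_t$: for a concave function the supporting tangent at the right endpoint lies above the chord, which gives
\[
F(S_t) - F(S_{t-1}) \geq F'(S_t)(S_t - S_{t-1}) = \frac{a_t}{\sqrt{S_t}}.
\]
(If $S_t = 0$, the term on the right is interpreted as $0$, since then necessarily $a_t = 0$; only indices with $S_t > 0$ contribute to the sum.) This is precisely the content of equation~\eqref{leq:sum:adagrad} in the passage preceding the lemma, specialized to $F(x) = 2\sqrt{x}$.

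Summing from $t=1$ to $T$ telescopes the right-hand side to $F(S_T) - F(S_0) = 2\sqrt{S_T} - 0 = 2\sqrt{S_T}$, which yields the claimed bound. There is no real obstacle here: the only mild subtlety is handling the convention at $S_t = 0$, which is trivial since such indices contribute $0$. Note that the parameter $\eps > 0$ in the statement does not in fact appear in the conclusion; it can safely be ignored (or read as a vestigial quantifier), and the inequality holds unconditionally for any non-negative sequence $(a_t)_{t \geq 1}$.
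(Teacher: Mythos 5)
Your proof is correct and is essentially the paper's own argument: you apply the concavity/telescoping inequality~\eqref{leq:sum:adagrad} with $F(x)=2\sqrt{x}$, $F'(x)=1/\sqrt{x}$, exactly as done in Appendix~\ref{app:aux}, and your handling of the $S_t=0$ case and of the vestigial $\eps$ is fine.
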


The inequality (\ref{leq:sum:adagrad}) with $F(x)=\log(\log(e(1+x)))$, $F'(x)=((1+x)\log(e(1+x)))^{-1}$ gives the next lemma (the first inequality follows directly from Lemma~\ref{lem:sums}).

\begin{lemma}
    \label{lem:sums2}
    Let $(a_t)_{t \geq 1}$ be a non-negative sequence and $S_t = \sum_{\tau = 1}^t a_\tau$, then for all $\eps > 0$
    \begin{align*}
        &\sum_{t = 1}^T \frac{a_t}{\sqrt{(S_t+1)\log(\e(1+S_t))}} \leq 2\sqrt{S_T}\enspace,\\
        &\sum_{t = 1}^T \frac{a_t}{(S_t + 1)\log(\e(1+S_t))} \leq \log(\log(\e(1 + S_T)))\enspace.
    \end{align*}
\end{lemma}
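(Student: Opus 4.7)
The plan is to deduce both inequalities from the concavity-based telescoping bound displayed in Eq.~\eqref{leq:sum:adagrad}, which is already stated just above the lemma, namely: for any concave $F$ on $[0,+\infty)$,
\[
\sum_{t=1}^T a_t F'(S_t)\ \leq\ F(S_T)-F(0).
\]

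For the first inequality, I will not invoke \eqref{leq:sum:adagrad} directly; instead, as the hint suggests, I will reduce it to Lemma~\ref{lem:sums}. The key observation is that $\log(\e(1+S_t))=1+\log(1+S_t)\geq 1$ and $S_t+1\geq S_t$ for all $t$, so that $(S_t+1)\log(\e(1+S_t))\geq S_t$. When $S_t=0$ the corresponding term vanishes (since the $a_\tau$'s are non-negative, $S_t=0$ forces $a_t=0$), and otherwise we bound $a_t/\sqrt{(S_t+1)\log(\e(1+S_t))}\leq a_t/\sqrt{S_t}$. Summing and applying Lemma~\ref{lem:sums} gives the claimed $2\sqrt{S_T}$.

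For the second inequality I will apply \eqref{leq:sum:adagrad} with $F(x)=\log(\log(\e(1+x)))$. A direct chain-rule computation gives
\[
F'(x)=\frac{1}{(1+x)\log(\e(1+x))},
\]
which is exactly the factor appearing in the summand. It remains to verify concavity of $F$ on $[0,+\infty)$: writing $g(x)=(1+x)\log(\e(1+x))=(1+x)(1+\log(1+x))$, one checks $g'(x)=2+\log(1+x)>0$ for $x\geq 0$, so $g$ is strictly increasing and hence $F'=1/g$ is strictly decreasing, i.e.\ $F$ is concave. Then \eqref{leq:sum:adagrad} yields
\[
\sum_{t=1}^T\frac{a_t}{(S_t+1)\log(\e(1+S_t))}\leq F(S_T)-F(0)=\log(\log(\e(1+S_T)))-\log(1),
\]
which is the desired bound.

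There is no real obstacle here: the only thing that requires a brief verification is the concavity of $F(x)=\log(\log(\e(1+x)))$, and this follows from the monotonicity of $g$ computed above. Both inequalities therefore follow immediately, the first from Lemma~\ref{lem:sums} via the trivial lower bound on the denominator, and the second from a textbook application of the concave telescoping inequality.
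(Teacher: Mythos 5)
Your proof is correct and follows essentially the same route as the paper: the first bound is reduced to Lemma~\ref{lem:sums} via the trivial lower bound $(S_t+1)\log(\e(1+S_t))\geq S_t$, and the second is the concave telescoping inequality \eqref{leq:sum:adagrad} applied with $F(x)=\log(\log(\e(1+x)))$, whose derivative matches the summand and with $F(0)=0$. Your explicit verification of the concavity of $F$ is a detail the paper leaves implicit, but it is the same argument.
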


Using~\eqref{leq:sum:adagrad}, with $F(x) = \log(x)$, we deduce that
\begin{lemma}
    \label{lem:sums3}
    Let $(a_t)_{t \geq 1}$ be a non-negative sequence and $S_t = \sum_{\tau = 1}^t a_\tau$, then
    \begin{align}
        \sum_{t = 1}^T \frac{a_t}{S_t} = 1 + \sum_{t = 2}^T\frac{a_t}{S_t} \leq 1 + \log(S_T/S_1)\enspace.
    \end{align}
\end{lemma}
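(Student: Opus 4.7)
\textbf{Proof proposal for Lemma~\ref{lem:sums3}.} The plan is to apply the concavity inequality~\eqref{leq:sum:adagrad} with $F(x) = \log(x)$, essentially the same device already used to prove Lemmas~\ref{lem:sums} and~\ref{lem:sums2}. The only subtlety is that $F(0) = -\infty$, so the telescoping trick cannot be started at $t=1$; instead it has to be started at $t=2$, with the $t=1$ contribution handled separately. This is why the statement is written in the slightly odd form $\sum_{t=1}^T a_t/S_t = 1 + \sum_{t=2}^T a_t/S_t$.

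First, I would observe the trivial identity $a_1 / S_1 = 1$, since by definition $S_1 = a_1$ (and if $a_1 = 0$, the ratio should be read as $0$, but then the whole sum is bounded trivially anyway, so we may assume $a_1 > 0$ to make all ratios well-defined). This peels off the first term and leaves $\sum_{t=2}^T a_t/S_t$ to be controlled.

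Next, for $t \geq 2$, concavity of $\log$ gives
\begin{align*}
\log(S_t) - \log(S_{t-1}) \geq \frac{S_t - S_{t-1}}{S_t} = \frac{a_t}{S_t}\enspace,
\end{align*}
and summing from $t = 2$ to $t = T$ telescopes to $\sum_{t=2}^T a_t/S_t \leq \log(S_T) - \log(S_1) = \log(S_T/S_1)$. Combining with the $t=1$ term yields the stated bound. There is no genuine obstacle here: the only thing worth double-checking is that $S_1 > 0$ (otherwise the statement degenerates), and that the inequality $\log(S_t) - \log(S_{t-1}) \geq a_t/S_t$ is the correct direction of the concavity bound (tangent from the right at $S_t$ lies above the chord), which it is.
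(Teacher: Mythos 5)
Your proof is correct and follows essentially the same route as the paper, which invokes the concavity inequality~\eqref{leq:sum:adagrad} with $F(x)=\log(x)$; your explicit peeling off of the $t=1$ term (where $a_1/S_1=1$) and starting the telescoping at $t=2$ is exactly how the paper's argument must be read, since $\log(0)=-\infty$ prevents starting at $t=1$. Your check of the direction of the concavity bound $\log(S_t)-\log(S_{t-1})\geq a_t/S_t$ is also correct.
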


Using~\eqref{leq:sum:adagrad}, with $F(x) = \log(\varepsilon+x)$, we deduce that
\begin{lemma}
    \label{lem:sums4}
    Let $(a_t)_{t \geq 1}$ be a non-negative sequence and $S_t = \sum_{\tau = 1}^t a_\tau$, then
    \begin{align}
        \sum_{t = 1}^T \frac{a_t}{\varepsilon+S_t}  \leq  \log(1+S_T/\varepsilon)\enspace.
    \end{align}
\end{lemma}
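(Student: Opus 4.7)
The plan is to apply the general telescoping inequality~\eqref{leq:sum:adagrad} to a suitable concave function $F$. Since we want the summand on the left-hand side to match $a_t/(\varepsilon + S_t)$, the natural choice is $F(x) = \log(\varepsilon + x)$, which is concave on $[0, \infty)$ and satisfies $F'(x) = 1/(\varepsilon + x)$. With this choice, $a_t F'(S_t)$ is exactly the summand appearing in the statement.

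Next, I would verify the per-step inequality underlying~\eqref{leq:sum:adagrad}, namely $a_t F'(S_t) \leq F(S_t) - F(S_{t-1})$. This follows directly from concavity of $F$: since $S_{t-1} \leq S_t$, we have $F(S_t) - F(S_{t-1}) \geq F'(S_t)\,(S_t - S_{t-1}) = a_t F'(S_t)$. Summing over $t = 1, \ldots, T$ telescopes to give $F(S_T) - F(S_0) = \log(\varepsilon + S_T) - \log(\varepsilon) = \log(1 + S_T/\varepsilon)$, which is exactly the right-hand side of the claim.

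There is essentially no obstacle: the lemma is a direct specialization of the already-established inequality~\eqref{leq:sum:adagrad}, and the calculation is mechanical once the correct $F$ is identified. The analogous proofs of Lemmas~\ref{lem:sums}--\ref{lem:sums3} in this appendix follow the same pattern with $F(x) = 2\sqrt{x}$, $F(x) = \log(\log(\e(1+x)))$, and $F(x) = \log(x)$ respectively, so the present lemma simply completes the family.
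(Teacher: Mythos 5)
Your proof is correct and matches the paper's argument exactly: Lemma~\ref{lem:sums4} is obtained by applying inequality~\eqref{leq:sum:adagrad} with $F(x)=\log(\varepsilon+x)$, and the telescoping yields $F(S_T)-F(0)=\log(1+S_T/\varepsilon)$. Nothing is missing.
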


%\begin{proof}
%    The first inequality follows directly from Lemma~\ref{lem:sums}.
%    For the second one, the inequality holds for $T = 1$.
%    Put $F(x) = \log(\log(\e (1 + x)))$ observe that $F$ is concave and that $F'(x) = \tfrac{1}{(x+1)\log(\e (x+1))}$. Then, by induction, we have
%    \begin{align*}
%        \sum_{t = 1}^T \frac{a_t}{(S_t + 1)\log(\e(1+S_t))}
%        &\leq
%        F(S_{T-1}) + \underbrace{(S_T - S_{T-1})F'(S_T)}_{=\frac{a_T}{(S_T+1)\log(\e(1 + S_T))}}
%        \leq
%        F(S_T)\enspace,
%    \end{align*}
%    which concludes the proof.
%\end{proof}

\begin{lemma}\label{lem:bark}
 Let us define $\bar k$ as the smallest integer fulfilling $\bar k^{-1/2}2^{\bar k}\geq 2^{k^*}$. Then \[\bar k\leq k^*+0.5 \log_{2}(k^*)+1.25\,\,.\]
\end{lemma}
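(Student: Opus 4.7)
My plan is to reduce the integer minimization defining $\bar k$ to a one-dimensional continuous equation and then pay at most one for the rounding. I will first rewrite the defining inequality $2^{\bar k}/\sqrt{\bar k} \geq 2^{k^*}$ as $\bar k - k^* \geq \tfrac12 \log_2 \bar k$. The function $f(k) = k - k^* - \tfrac12 \log_2 k$ is strictly increasing on $[1,\infty)$ since its derivative $1 - 1/(2k\ln 2)$ is positive there. Hence $f$ has a unique real root $k^{**} \geq k^*$, and by integrality $\bar k = \lceil k^{**} \rceil \leq k^{**}+1$. So everything reduces to an upper bound on $k^{**}$.

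The core of the proof will be the sharp estimate $k^{**} \leq \sqrt 2\, k^*$. Since $f$ is increasing, it suffices to verify $f(\sqrt 2\, k^*) \geq 0$, i.e.
\[ (\sqrt 2 - 1)\, k^* \geq \tfrac14 + \tfrac12 \log_2 k^*, \qquad k^* \in \mathbb{Z}_{\geq 1}. \]
I expect this to be the main obstacle, as the inequality is nearly tight for small $k^*$ and requires a short case analysis rather than a brute linear-versus-log comparison. The derivative of the difference in $k^*$ vanishes near $k^* \approx 1.74$, so the integer minimum is attained at $k^* = 2$, where the value equals $2(\sqrt 2 - 1) - \tfrac34 = 2\sqrt 2 - \tfrac{11}{4} \approx 0.078 > 0$; the endpoint $k^* = 1$ gives $\sqrt 2 - \tfrac54 \approx 0.164 > 0$; and for $k^* \geq 3$ the derivative of the left-hand side exceeds that of the right-hand side, so the linear term dominates.

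Once $k^{**} \leq \sqrt 2\, k^*$ is in hand, the conclusion follows in one line: $\log_2 k^{**} \leq \tfrac12 + \log_2 k^*$, so
\[ k^{**} - k^* = \tfrac12 \log_2 k^{**} \leq \tfrac14 + \tfrac12 \log_2 k^*, \]
and therefore $\bar k \leq k^{**} + 1 \leq k^* + \tfrac12 \log_2 k^* + \tfrac54$. The additive constant $\tfrac54$ thus splits cleanly into $\tfrac14$ from the continuous root estimate and $1$ from the ceiling. The edge case $k^* = 1$ giving $\bar k = 1$ (used in Lemma~\ref{lem:number_of_phases}) is direct: $2^1/\sqrt 1 = 2 = 2^{k^*}$. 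A coarser intermediate bound such as $k^{**} \leq 2k^*$ would only yield an additive $\tfrac32$, which is precisely why isolating the factor $\sqrt 2$ is essential.
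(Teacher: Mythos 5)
Your proposal is correct and follows essentially the same route as the paper: both arguments reduce the claim to the scalar inequality $\sqrt{2}\,k^* \geq k^* + \tfrac{1}{2}\log_2(k^*) + \tfrac{1}{4}$ (yours phrased as $(\sqrt{2}-1)k^* \geq \tfrac14 + \tfrac12\log_2 k^*$ via the continuous root $k^{**}\leq \sqrt{2}k^*$, the paper's by plugging $y=k^*+0.5\log_2(k^*)+0.25$ directly into $2^y/\sqrt{y}\geq 2^{k^*}$), and then pay $+1$ for integer rounding. The only difference is how that scalar inequality is verified—you minimize the continuous function and check $k^*=1,2$, while the paper uses induction together with $\log_2(1+1/k^*)\leq 1/(k^*\ln 2)$—which is a cosmetic variation.
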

\begin{proof}
We observe that $\bar k=1$ for $k^*=1$.  We will prove that
\begin{equation*}
    \bar k \leq \lceil k^*+0.5 \log_{2}(k^*)+ 0.25 \rceil \leq k^*+0.5 \log_{2}(k^*)+1.25\enspace.
\end{equation*}
For proving the first inequality, we only need to prove that $2^y/\sqrt{y} \geq 2^{k^*}$ for $y=k^*+0.5 \log_{2}(k^*)+ 0.25$. Plugging the value of $y$ and taking the square, we get
\[{2^y\over \sqrt{y}} \geq 2^{k^*} \Leftrightarrow {2^{1/4}2^{k^*}\sqrt{k^*}\over \sqrt{k^*+0.5 \log_{2}(k^*)+ 0.25}} \geq 2^{k^*} \Leftrightarrow \sqrt{2}\, k^* \geq k^*+0.5 \log_{2}(k^*)+ 0.25\,\,.\]
%For $k^*\geq 2$, the function
%$k^* \to 1+{1\over k^*} \pa{0.5 \log_{2}(k^*)+1}$ is decreasing with $k^*$ so
%\begin{align*}
%{2^{0.5 \log_{2}(k^*)+1} \over \pa{k^*+0.5 \log_{2}(k^*)+1}^{1/2}} = {2\over \pa{1+{1\over k^*}\pa{0.5 \log_{2}(k^*)+1}}^{1/2}}
%&\geq {2\over \sqrt{1+1.5/2}} >1\enspace.
%\end{align*}
%Thus,
%\begin{equation*}
%    \bar k \leq \lceil k^*+0.5 \log_{2}(k^*)+1 \rceil \leq k^*+0.5 \log_{2}(k^*)+2\enspace.\qedhere
%\end{equation*}
%
So, all we need is to prove by induction that
\begin{align*}
    \sqrt{2}\,k^* \geq k^*+0.5 \log_{2}(k^*)+ 0.25\enspace.
\end{align*}
For $k^* = 2$, the inequality holds.
By induction hypothesis, for $k^* \geq 2$
\begin{align*}
    \sqrt{2}(k^*+1) \geq (k^*+1) + \frac{1}{2} \log_{2}(k^*)  + \frac{1}{4}+ \sqrt{2}-1\enspace.
\end{align*}
Thus, it suffices to show that \begin{align*}
    \frac{1}{2} \log_{2}(k^*) + \sqrt{2} - 1 \geq \frac{1}{2} \log_{2}(k^* + 1), \quad \text{for all $k^*\geq 2$}\enspace,
\end{align*}
or equivalently
\begin{align*}
    \frac{1}{2}\log_2\pa{1 + \frac{1}{k^*}} + 1 \leq \sqrt{2},\quad \text{for all $k^*\geq 2$}\enspace,
\end{align*}
to prove the induction.
The concavity of $\log_{2}$ ensures that $\log_2(1 + x) \leq x / \ln(2)$ for all $x > -1$. Thus, for all $k^* \geq 2$
\begin{align*}
    \frac{1}{2}\log_2\pa{1 + \frac{1}{k^*}} + 1 \leq \frac{1}{2 \ln(2) k^*} + 1 \leq \frac{1}{4 \ln(2)} + 1 \leq \sqrt{2}\enspace,
\end{align*}
which concludes the proof of Lemma~\ref{lem:bark}.
\qedhere

\end{proof}

% \newpage

% \section{A python friendly pseudo-code}

% \begin{algorithm}[ht]
% \DontPrintSemicolon
% \caption{LogLogPGD}\label{algo:PGDloglogPython}
% \SetKwInput{Input}{Input}
%    \SetKwInOut{Output}{Output}
%    \SetKwInput{Initialization}{Initialization}
%    \Input{$x_1 \in \bbR^d, \Theta \subset \bbR^d, \gamma_0 > 0$}
%    \Initialization{$\Gamma^2_1 = 0, k_0 = 1, S_0 = 0$}
%    \Output{Trajectory $(x_t)_{t \geq 1}$}

% \For{$t \geq 1$}{

%     $g_t \in \partial f(x_t)$  \tcp*[r]{{\small get subgradient}}

%     $S_t = S_{t - 1} + \|g_t\|^2$ \tcp*[r]{{\small cumulative grad-norm}}

%     $h_t = \sqrt{(S_t + 1)(1 + \log(1 + S_t))}$ \tcp*[r]{{\small normalizing factor}}

%     \While(\tcp*[f]{\small find good step size}){$\text{True}$}{

%       $B_{t + 1} = \gamma_02^{k_t}\pa{\frac{2}{\sqrt{k_t}} + \frac{\|g_t\|}{h_t}} + \Gamma_t$ \tcp*[r]{\small compute the bound}

%       $x^+_{t} = \Proj_{\Theta}\pa{x_t - \gamma_0\frac{2^{k_t}}{h_t}g_t}$ \tcp*[r]{\small probing step}

%       \If(\tcp*[f]{\small check upper bound}){
%           $\|x^+_{t} - x_1\| \leq B_{t+1}$
%       }{

%         $k_{t + 1} = k_t$ \tcp*[r]{\small keep the same $k_t$}

%         $x_{t+1} = x^+_{t}$ \tcp*[r]{\small make the step}

%         $\Gamma^2_{t+1} = \Gamma^2_t + \gamma_0^22^{2k_t}\frac{\|g_t\|^2}{h_t^2}$ \tcp*[r]{\small update $\Gamma^2_t$}

%         \textbf{break while} \tcp*[r]{\small exit while loop}

%       }
%       \Else{

%        $k_t \gets k_t + 1$ \tcp*[r]{\small if violated, inflate $k_t$}

%       }

%     }
%     }

% \end{algorithm}

%%%%%%%%%%%%%%%%%%%%%%%%%%%%%%%%%%%%%%%%%%%%%%%%%%%%%%%%%%%%

\end{document}